\PassOptionsToPackage{dvipsnames}{xcolor}

\documentclass{article} 
\usepackage{format/ICLR2022/iclr2022_conference,times}
\iclrpreprint
\onecolumn
\usepackage{booktabs}
\usepackage{multirow,multicol}
\usepackage{array}
\usepackage{makecell}
\usepackage{float}
\usepackage{algorithm}
\usepackage[noend]{algpseudocode}
\usepackage{bbm}
\usepackage{cite}
\usepackage{graphicx}
\usepackage{url}
\usepackage{etoolbox}
\usepackage{xcolor}
\usepackage{hyperref}
\looseness=-1
 \hypersetup{
   colorlinks   = true, 
   urlcolor     = black!50!red, 
   linkcolor    = black!50!brown, 
   citecolor   = black!50!brown 
 }

\usepackage[caption=false,font=normalsize,labelfont=sf,textfont=sf]{subfig}
\usepackage{todonotes}
\usepackage{amsmath}
\usepackage{amssymb}
\usepackage{amsthm}
\usepackage{bm}
\usepackage{mathtools}
\usepackage{comment}
\setlength{\textfloatsep}{10pt}

\newtheorem{remark}{Remark}

\newtheorem{defn}{Definition}
\newtheorem{lemma}{Lemma}
\newtheorem{corollary}{Corollary}
\newtheorem{theorem}{Theorem}
\newtheorem{assumption}{Assumption}
\newtheorem{claim}{Claim}
\newcommand{\qrex}{Q-Rex}
\newcommand{\epiqrex}{EpiQ-Rex}
\newcommand{\qrexdr}{Q-RexDaRe}
\DeclarePairedDelimiterX{\inp}[2]{\langle}{\rangle}{#1, #2}

\newcommand{\var}{\mathsf{Var}_P}

\newcommand{\tmix}{\tau_{\mathsf{mix}}}

\newcommand{\tv}{\mathsf{TV}}
\newcommand{\zibe}{ZIBEL}

\DeclareMathOperator*{\argmax}{arg\,max}

\usepackage{pgffor}
\foreach \x in {a,...,z}{%
\expandafter\xdef\csname vec\x \endcsname{\noexpand\ensuremath{\noexpand\bm{\x}}}
}

\foreach \x in {A,...,Z}{%
\expandafter\xdef\csname vec\x \endcsname{\noexpand\ensuremath{\noexpand\bm{\x}}}
}

\foreach \x in {A,...,Z}{%
\expandafter\xdef\csname c\x \endcsname{\noexpand\ensuremath{\noexpand\mathcal{\x}}}
}

\foreach \x in {A,...,Z}{%
\expandafter\xdef\csname bb\x \endcsname{\noexpand\ensuremath{\noexpand\mathbb{\x}}}
}

\title{Online Target Q-learning with Reverse Experience Replay: Efficiently finding the Optimal Policy for Linear MDPs
}


\author{Naman Agarwal, Prateek Jain,  Praneeth Netrapalli  \\
Google Research\\
\texttt{\{namanagarwal,prajain,pnetrapalli\}@google.com} 
\And
Syomantak Chaudhuri \\
 University of California, Berkeley\\
\texttt{syomantak.c@gmail.com} 
\And
Dheeraj Nagaraj \\
MIT \\
\texttt{dheeraj@mit.edu}
}

%


\begin{document}

\maketitle
\begin{abstract}
Q-learning is a popular Reinforcement Learning (RL) algorithm which is widely used in practice with function approximation \citep{mnih2015human}. In contrast, existing theoretical results are pessimistic about Q-learning. For example, \citep{baird1995residual} shows that Q-learning does not converge even with linear function approximation for linear MDPs. Furthermore, even for tabular MDPs with synchronous updates, Q-learning was shown to have sub-optimal sample complexity \citep{li2021q,azar2013minimax}. The goal of this work is to bridge the gap between practical success of Q-learning and the relatively pessimistic theoretical results. The starting point of our work is the observation that in practice, Q-learning is used with two important modifications: (i) training with two networks, called online network and target network simultaneously (online target learning, or OTL) , and (ii) experience replay (ER) \citep{mnih2015human}. While they have been observed to play a significant role in the practical success of Q-learning, a thorough theoretical understanding of how these two modifications improve the convergence behavior of Q-learning has been missing in literature. By carefully combining Q-learning with OTL and \emph{reverse} experience replay (RER) (a form of experience replay), we present novel methods \qrex~and \qrexdr~(\qrex + data reuse). We show that \qrex~efficiently finds the optimal policy for linear MDPs (or more generally for MDPs with zero inherent Bellman error with linear approximation (\zibe)) and provide non-asymptotic bounds on sample complexity -- the first such result for a Q-learning method for this class of MDPs under standard assumptions. Furthermore, we demonstrate that \qrexdr~in fact achieves near optimal sample complexity in the tabular setting, improving upon the existing results for vanilla Q-learning. 


\end{abstract}

\section{Introduction}

Reinforcement Learning (RL) has been shown to be highly successful for  a variety of practical problems  in the realm of long term decision making \citep{mnih2015human}. Several classical works have studied RL methods like TD-learning, Q-learning, SARSA and their variants for many decades \citep{sutton2018reinforcement,bertsekas2011dynamic,borkar2000ode,sutton1988learning,tsitsiklis1997analysis,watkins1992q,watkins1989learning} but the guarantees are mostly asymptotic and therefore do not sufficiently answer important questions that are relevant to  practitioners who struggle with constraints on the number of data points and the computation power. Recent works provide non-asymptotic results   for a variety of important settings \citep{kearns1999finite,even2003learning,beck2012error,qu2020finite,ghavamzadeh2011speedy, bhandari2018finite,chen2020finite,chen2019finite,dalal2018finite,dalal18a,doan2020convergence,gupta2019,srikant2019finite,weng2020momentum,xu2020finite,yang2019sample,zou2019}. 

Despite a large body of work, several aspects of fundamental methods like Q-learning \citep{watkins1992q}  are still ill-understood. 
Q-learning's simplicity and the ability to learn from off-policy data makes it attractive to the practitioner. However, theoretical analyses show that even with \emph{linear function approximation} and when the approximation is \emph{exact}, Q-learning can fail to converge even in simple examples \citep{baird1995residual,boyan1995generalization,tsitsiklis1996feature}. Furthermore, even in the simple case of tabular RL with synchronous updates,  Q-learning is known to have sub-optimal sample complexity \citep{wainwright2019stochastic,li2021q}. 

Despite the negative results, Q-learning has been deployed with tremendous success in practice. The practitioners, however, use Q-learning with  ``heuristic" modifications like experience replay (ER) and online target learning (OTL). ER is used to alleviate the issue that the  samples obtained in an episode might be highly dependent on each other whereas OTL helps stabilize the Q iteration. \citet{mnih2015human} conducted extensive experiments to show that \emph{both} these techniques, along with neural function approximation, are essential for the success of Q-learning. But, existing analyses for ER with Q-learning either require stringent assumptions \citep{Diogo-CoupledQ} to ensure convergence to a good Q value, or assume that ER  provides i.i.d. samples which might not hold in practice \citep{fan2020theoretical,Diogo-CoupledQ}. 

In this paper, we attempt to bridge the gap between theory and practice, by rigorously investigating how Q-learning performs with these practical heuristics. To this end, we introduce two model free algorithms: \qrex~and its sample efficient variant \qrexdr~that combine the standard Q-learning with OTL and \emph{reverse} experience replay (RER). RER is a form of ER  which was recently introduced to unravel spurious correlations present while learning form Markovian data in the context of system identification \citep{rotinov2019reverse, jain2021streaming}. We show that OTL stabilizes the Q value by essentially serving as a variance reduction technique and RER unravels the spurious correlations present in the off-policy Markovian data to remove inherent biases introduced in vanilla Q learning. 

Inclusion of these simple heuristics has surprisingly far-reaching consequences. Firstly, this allows us to show that unlike vanilla Q-learning, \qrex~finds the optimal policy for {\em linear MDPs} (or more generally for MDPs with zero inherent Bellman error with linear function approximation (\zibe)) and allows us to derive non-asymptotic sample complexity bounds. In the {\em tabular setting}, \qrex~even with asynchronous data is able to match the best known bounds for Q-learning with synchronous data. Furthermore, we extend \qrex~ to obtain a new method \qrexdr that  reuses old samples and admits nearly optimal sample complexity for recovering the optimal Q-function in tabular setting. Previously, only Q-learning methods with explicit variance-reduction techniques (not popular in practice) \citep{wainwright2019variance,li2020sample} or model based methods \citep{agarwal2020model,li2020breaking} were known to achieve such a sample complexity bound. Our experiments show that when the algorithmic parameters are chosen carefully, \qrex~and its variants outperform both vanilla Q-learning and OTL+ER+Q-learning with the same parameters (see Appendix~\ref{sec:experiments}). 

To summarize, in this work, we study Q-learning with practical heuristics like ER and OTL, and propose two concrete methods \qrex~ and \qrexdr~ based on OTL and reverse experience replay -- a modification of the standard ER used in practice. We show that \qrex~ is able to find the optimal policy for \zibe~MDPs, with a strong sample complexity bound which is the first such result for Q-learning. We also show that \qrexdr~ obtains nearly optimal sample complexity for  the simpler tabular setting despite not using any explicit variance reduction technique. See Table~\ref{tb:tabular_summary} for a comparison  of our guarantees against the state-of-the-results for linear MDPs and tabular setting. 

\paragraph{Organization}
We review related works in next subsection. In Section~\ref{sec:setting} we develop the MDP problem which we seek to solve and present our algorithm, \qrex~in Section~\ref{sec:our_algo}. The main theoretical results are presented in Section~\ref{sec:main_results}. We present a brief overview of the analysis in Section~\ref{sec:analysis_overview} and present our experiments in Section~\ref{sec:experiments}. Most of the formal proofs are relegated to the appendix.

\begin{table*}[t]

\centering

\begin{sc}
\resizebox{0.90\columnwidth}{!}{\begin{tabular}{|c|c|c|c|c|}
\hline	
{\bf Paper} & {\bf Algorithm} & {\bf Data Type}& {\bf Sample Complexity}  \\
\hline
\makecell{\citep{ghavamzadeh2011speedy}}& Speedy Q-learning & {Synchronous} & {\color{red}$\frac{|\mathcal{S}||\mathcal{A}|}{\epsilon^2 (1-\gamma)^4}$} \\
\hline
\makecell{\citep{wainwright2019variance}}& \makecell{Variance Reduced \\Q-learning} & {Synchronous} & {\color{Green}$\frac{|\mathcal{S}||\mathcal{A}|}{\epsilon^2 (1-\gamma)^3}$ } \\
\hline
\makecell{\citep{li2020sample}}& \makecell{Variance Reduced \\Q-learning} & {Asynchronous} & {\color{Green}$\frac{1}{\mu_{\min}\epsilon^2 (1-\gamma)^3}$ } \\
\hline
\makecell{\citep{li2020sample}}& Q-learning & {Asynchronous} & {\color{red}$\frac{1}{\mu_{\min}\epsilon^2 (1-\gamma)^5}$}  \\
\hline
\makecell{\citep{li2021q}}& Q-learning & {Synchronous} &  {\color{red}$\frac{|\mathcal{S}||\mathcal{A}|}{\epsilon^2 (1-\gamma)^4}$}  \\
\hline
\makecell{This work, Theorem~\ref{thm:main_tabular}}& \makecell{ Q-learning+ otl \\ + rer (\qrex)} & {Asynchronous} &   {\color{red}$\frac{|\mathcal{S}||\mathcal{A}|}{\epsilon^2 (1-\gamma)^4}$} \\
\hline
\makecell{This work, Theorem~\ref{thm:tabular_reuse}}& \makecell{\qrex+ data-reuse \\ (\qrexdr)} & {Asynchronous} & {\color{Green} $ {\frac{\max\left(\bar{d},\frac{1}{\epsilon^2}\right)}{\mu_{\min}(1-\gamma)^3}}$}\\
\hline
\end{tabular}}
\end{sc}
	\caption{Comparison of tabular Q-learning based algorithms. $\bar{d} \leq |\mathcal{S}|$ is maximum size of support of $P(\cdot|s,a)$. In the case of asynchronous setting, $\frac{1}{\mu_{\min}}$ is roughly equivalent to $|\mathcal{S}||\mathcal{A}|$ in the synchronous setting. We use the color green to represent results with optimal dependence on $(1-\gamma)^{-1}$.}
	\label{tb:tabular_summary} 
\end{table*}

\subsection{Related Works}
\paragraph{Tabular Q-learning}
Tabular MDPs are the most basic examples of MDPs where the state space ($\mathcal{S}$) and the action space ($\mathcal{A}$) are both finite and the Q-values are represented by assigning a unique co-ordinate to each state-action pair. This setting has been well studied over the last few decades and convergence guarantees have been derived in both asymptotic and non-asymptotic regime for popular model-free and model-based algorithms. 
\citet{azar2013minimax} shows that the minimax lower bounds on the sample complexity of obtaining the optimal Q-function up-to $\epsilon$ error is $\frac{|\mathcal{S}||\mathcal{A}|}{(1-\gamma)^3\epsilon^2}$, where $\gamma$ is the discount factor. Near sample-optimal estimation is achieved by several model-based algorithms \citep{agarwal2020model,li2020breaking} and model-free algorithms like variance reduced Q-learning
\citep{wainwright2019variance,li2020sample}. \citep{li2021q} also shows that vanilla Q-learning with standard step sizes, even in the synchronous data setting -- where transitions corresponding to each state action pair are sampled independently at each step -- suffers from a sample complexity of $\frac{|\mathcal{S}||\mathcal{A}|}{(1-\gamma)^4\epsilon^2}$  and the best known bounds in the asynchronous setting -- where data is derived from a Markovian trajectory and only one Q value is updated in each step -- is $\frac{|\mathcal{S}||\mathcal{A}|}{(1-\gamma)^5\epsilon^2}$. These results seem unsatisfactory since $\gamma \sim 0.99 \text{  (or even }0.999\text{)}$ in most practical applications. In contrast, our algorithm \qrex~with \emph{asynchronous} data has a sample complexity that  matches  Q-learning bound with \emph{synchronous} data and its data-efficient variant \qrexdr~has near minimax optimal sample complexity (see Table~\ref{tb:tabular_summary}). 
For details on model based algorithms, and previous works with sub-optimal guarantees we refer to \citep{agarwal2020model,li2020sample}.

\paragraph{Q-learning with Linear Function Approximation}
\label{subsec:lin_review}
Even though tabular Q-learning is fairly well understood, it is intractable in most practical RL problems due to large size of the state space $\mathcal{S}$. Therefore, Q-learning is deployed with function approximation. Linear function approximation is the simplest such case where the Q-function is approximated with a linear function of the `feature embedding' associated with each state-action pair. However, Q-learning can be shown to diverge even in the simplest cases as was first noticed in \citep{baird1995residual}, which also introduced residual gradient methods which converged rather slowly but provably. We will only discuss recent works closest to our work and refer the reader to~\citep{Diogo-CoupledQ,ChiJin-LFA,yang2019sample} for a full survey of various works in this direction.


\citet{yang2019sample} consider MDPs with approximate linear function representation - which is more general than the assumptions in this work but require additional assumptions like finite state-action space and existence of \emph{known} anchor subsets which might not hold in practice. 
Our results on the other hand hold with standard assumptions, with \emph{asynchronous} updates and can handle \emph{infinite} state-action spaces (see Theorem~\ref{thm:main_linear}). 
Similarly, \citet{chen2019finite} consider Q-learning with linear function approximation for finite state-action spaces, which need not be exact. But the result  requires a rather restrictive assumption that the offline policy is close to optimal policy. In contrast, we consider the less general but well-studied case of MDPs with zero inherent Bellman error and provide global convergence without restrictive assumptions on the behaviour policy. 

Under the most general conditions \citet{Sutton-GreedyGQ} present the Greedy-GQ algorithm which converges to a point asymptotically instead of diverging. Similar results are obtained by
\citet{Diogo-CoupledQ} for Coupled Q-learning, a 2-timescale variant of Q-learning which uses a version of OTL and ER\footnote{The version of ER used in \citet{Diogo-CoupledQ} makes the setting completely \textit{synchronous} as opposed to the \textit{asynchronous} setting considered by us.} . This algorithm experimentally resolves the popular counter-examples provided by \citep{tsitsiklis1996feature,baird1995residual}. \citet[Theorem 2]{Diogo-CoupledQ} provides value function guarantees for the point to which the algorithm converges (albeit without sample complexity guarantees). However, the assumptions for this result are very stringent and even in the case of tabular Q-learning, the method might not converge to the optimal policy. 

\paragraph{Experience Replay and Reverse Experience Replay}
Reinforcement learning involves learning on-the-go with Markovian data, which are highly correlated. Iterative learning algorithms like Q-learning can sometimes get coupled to the Markov chain resulting in sub-optimal convergence. Experience replay (ER) was introduced in order to mitigate this drawback \citep{lin1992self} -- here a large FIFO buffer of a fixed size stores the streaming data and the learning algorithm samples a data point uniformly at random from this buffer at each step. This makes the samples look roughly i.i.d. due to mixing, thus breaking the harmful correlations.  Reverse experience replay (RER) is a form of experience replay which stores a buffer just like ER but processes the data points in the reverse order as stored in the buffer. This was introduced in entirely different contexts by \citep{rotinov2019reverse, jain2021streaming,jain2021near}. 
 In the context of this work, we note that reverse order traversal endows a super-martingale structure which yields the strong concentration result in Theorem~\ref{thm:linear_variance}, which is not possible with forward order traversal. Yet another way to look at RER is through the lens of Dynamic programming~\citep{bertsekas2011dynamic} -- where the value function is evaluated backwards starting from time $T$ to time $1$. Similarly, RER bootstraps Q values to the future Q values instead of the past Q values. 
\paragraph{Online Target Learning}
OTL \citep{mnih2015human} maintains two different Q-values (called online Q-value and target Q-value) where the target Q-value is held constant for some time and only the online Q-value is updated by `bootstrapping' to the target. After a number of such iterations, the target Q-value is set to the current online Q value. OTL thus attempts to mitigate the destabilizing effects of bootstrapping by removing the `moving target'. This technique has been noted to allow for an unbiased estimation of the bellman operator \citep{fan2020theoretical} and when trained with large batch sizes is similar to the well known neural fitted Q-iteration \citep{riedmiller2005neural}.

\section{Problem Setting}
\label{sec:setting}

\paragraph{Markov Decision Process}
We consider non-episodic, i.e. infinite horizon, time homogenous Markov Decision Processes (MDPs) and we denote an MDP by MDP($\cS,\cA,\gamma,P,R$) where 
$\cS$ denotes the state space, $\cA$ denotes the action space, $\gamma \in [0,1)$ represents the discount factor, $P(s'|s,a)$ represents the probability of transition to state $s'$ from the state $s$ on action $a$. We assume for purely technical reasons that $\mathcal{S}$ and $\mathcal{A}$ are compact subsets of $\mathbb{R}^{n}$ (for some $n\in \mathbb{N}$). $R:\mathcal{S}\times\mathcal{A} \to [0,1]$ is the deterministic reward associated with every state-action pair.

One way to view an MDP is to think of it as an agent that is aware of its current state and it can choose the action to be taken. Suppose the agent takes action $\pi(s)$, where $\pi: \cS \to \cA$, at state $s \in \cS$, then $P$ along with the `policy' $\pi$ induces a Markov chain over $\mathcal{S}$, whose transition kernel is denoted by $P^{\pi}$.  We write the $\gamma$-discounted value function of the MDP starting at state $s$ to be:
\vspace{-0.2cm}
\begin{equation} \label{eq:time-inhomo-vf}
V(s,\pi) = \bbE[\sum_{t=0}^{\infty} \gamma^t R(S_t,A_t)|S_0 = s, A_t = \pi(S_t) \forall t].    
\end{equation}
\vspace{-0.3cm}

It is well-known that under mild assumptions, there exists at least one optimal policy $\pi^*$ such that the value function $V(s,\pi)$ is maximized for every $s$and that there is an optimal Q-function, $Q^{*}:\mathcal{S}\times\mathcal{A} \to \mathbb{R}$, such that one can find the optimal policy as $\pi^*(s) = \argmax_{a \in \cA} Q^{*}(s,a)$, optimal value function as $V^*(s) = \max_{a \in \cA} Q^*(s,a)$ and it satisfies the following fixed point equation.
\vspace{-0.1cm}
\begin{equation}
    Q^*(s,a) = R(s,a) + \gamma \bbE_{s' \sim P(\cdot|s,a)}[\max_{a' \in \cA} Q^*(s',a')] \qquad \forall (s,a) \in \cS \times \cA. \label{eq:bellman-opt}
\end{equation}
\vspace{-0.1cm}
\eqref{eq:bellman-opt} can be alternately viewed as $Q^*$ being the fixed point of the Bellman operator $\cT$, where $$\cT(Q)(s,a) = R(s,a) + \gamma \bbE_{s' \sim P(\cdot|s,a)}[\max_{a'}Q(s',a')].$$
The basic task at hand is to estimate $Q^{*}(s,a)$ from a single trajectory $(s_t,a_t)_{t=1}^{T}$ such that $s_{t+1}\sim P(\cdot|s_t,a_t)$ along with rewards $(r_t)_{t=1}^{T}$, where $r_1,\dots,r_T$ are random variables such that $\mathbb{E}\left[r_t|s_t = s,a_t = a\right] = R(s,a)$. We refer to Section~\ref{sec:def_not} for a more rigorous definition of random rewards.

\paragraph{Q-learning}
Since the transition kernel $P$ (and hence the Bellman operator $\mathcal{T}$) is often unknown in practice, Equation~\eqref{eq:bellman-opt} cannot be directly used to to estimate the optimal Q-function. To this end, we resort to estimating $Q^*$ using observations from the MDP. 
An agent traverses the MDP and we obtain the state, action, and the reward obtained at each time step.
We assume the \textit{off-policy} setting which means that the agent is not in our control, i.e., it is not possible to choose the agent's actions; rather, we just observe the state, the action, and the corresponding reward. 
Further, we assume that the agent follows a time homogeneous policy $\pi(s)$ for choosing its action at state $s$. 

Given a trajectory $\{s_t,a_t,r_t\}_{t=1}^{T}$ generated using some unknown behaviour policy $\pi$, we aim to estimate $Q^*$ in a \textit{model-free} manner - i.e, estimate $Q^{*}$ without directly estimating $P$. 
This is often more practical in settings where only a limited memory is available and the transition probabilities can not be stored. We further assume that the trajectory is given to us as a data stream so we can not arbitrarily fetch the data for any time instant. 
A popular method to estimate $Q^*$ is using the Q-learning algorithm. 
In this online algorithm, we maintain an estimate of $Q^*(s,a)$ at time $t$, $Q_t(s,a)$ and the estimate is updated at time $t$ for $(s,a) = (s_t,a_t)$ in the trajectory. Formally, with step-sizes given as $\{\eta_t\}$, Q-learning performs the following update at time $t$,
\vspace{-0.2cm}
\begin{equation}\label{eq:qlearn}
\begin{gathered}
    Q_{t+1}(s_t,a_t) = (1-\eta_t)Q_{t}(s_t,a_t)+\eta_t \left[ r_t + \gamma \max_{a' \in \cA}Q_t(s_{t+1},a') \right] \\
    Q_{t+1}(s,a) = Q_t(s,a) \quad \forall \; (s,a) \neq (s_t,a_t) .
    \end{gathered}
\end{equation}

In this work, we focus on two special classes of MDPs which are popular in literature.
\paragraph{Linear Markov Decision Process}

 In a $d$-dimensional linear MDP, the transition probabilities and the reward functions can be expressed as a linear combination of a $d$-dimensional feature map $\phi(s,a)$. The transition probability is the inner product of $d$ unknown measures and the feature map while the reward function is the inner product of an unknown vector and the feature map. We use the following formalization of this idea as used in \citet{ChiJin-LFA}, stated as Definition~\ref{def:LMDP}.  

\begin{defn} \label{def:LMDP}
An MDP($\cS,\cA,\gamma,\bbP,R$) is a linear MDP with feature map $\phi:\cS \times \cA \to \bbR^d$, if 
\begin{enumerate}
    \item there exists a vector $\theta \in \bbR^d$ such that $R(s,a) = \inp{\phi(s,a)}{\theta}$, and
    \item there exists $d$ unknown (signed) measures over $\cS$ $\beta(\cdot) = \{\beta_1(\cdot),\ldots,\beta_d(\cdot)\}$ 
    such that the transition probability $P(\cdot|s,a) = \inp{\phi(s,a)}{\beta(\cdot)}$.
\end{enumerate}
\end{defn}
 
In the rest of this paper, in the tabular setting we assume that the dimension $d = |\mathcal{S}\times\mathcal{A}|$ and we use a one hot embedding where we map $(s,a) \to \phi(s,a) = e_{s,a}$, a unique standard basis vector. It is easy to show that this system is a linear MDP \citep{ChiJin-LFA} and Q-learning in this setting reduces to the standard tabular Q-learning \eqref{eq:qlearn}. However, when the assumption of a tabular MDP allows us to obtain stronger results, we will present the analysis separately. We refer to \citep{ChiJin-LFA} for further discussion on linear MDPs. 

\paragraph{Inherent Bellman Error}
There is another widely studied class of MDPs which admit a good linear representation~\citep{zanette2020learning,munos2008finite,szepesvari2004interpolation}. 
\begin{defn}(\zibe~MDP)\label{def:IBE}
For an $\mathcal{M} = \text{MDP}(\cS,\cA,\gamma,\bbP,R)$ with a feature map $\phi:\cS \times \cA \to \bbR^d$, we define the inherent Bellman error ($\mathsf{IBE}(\mathcal{M})$) as:
$$\sup_{\theta \in \mathbb{R}^d}\inf_{\theta^{\prime}\in \mathbb{R}^d} \sup_{(s,a)\in\mathcal{S}\times\mathcal{A}}\bigr|\langle \phi(s,a),\theta^{\prime}\rangle - R(s,a)-\gamma \mathbb{E}_{s^{\prime}\sim P(\cdot|s,a)}\sup_{a^{\prime}\in \mathcal{A}} \langle \theta,\phi(s^{\prime},a^{\prime})\rangle\bigr|$$

If $\mathsf{IBE}(\mathcal{M}) = 0 $, then call this MDP a \zibe~(zero inherent Bellman error with linear function approximation) MDP.
\end{defn}

The class of \zibe~MDPs is strictly more general than the class of linear MDPs \citep{zanette2020learning}. Both these classes of MDPs have the property that there exists a vector $w^* \in \mathbb{R}^d$ such that the optimal Q-function, $Q^*(s,a) = \inp{\phi(s,a)}{w^*}$ for every $(s,a)\in \mathcal{S}\times\mathcal{A}$, which can be explicity expressed as a function of $\theta,\beta$ and $Q^{*}$ (under mild conditions on $R,\beta$ and $\phi$). More generally, they allow us to lift the Bellman iteration to $\mathbb{R}^{d}$ \emph{exactly} and update our estimates for $w^{*}$ values directly (Lemmas~\ref{lem:t_operator},~\ref{lem:contraction_noiseless}).



Hence, we can focus on estimating $Q^*$ by estimating $\vecw^*$. To this end, the standard Q-Learning approach to learning the Q function can be extended to the linear case as follows: 
\vspace{-0.1cm}
\[w_{t+1} = w_{t} +  \eta_t \left[
    r_t + \gamma \max_{a' \in \cA} \inp{\phi(s_{t+1},a')}{w_{t}} - \inp{\phi(s_t,a_t)}{w_{t}}
     \right] \phi(s_t,a_t). \]




The above update can be seen as a gradient descent step on the loss function 
     $f(\vecw_t) = (\inp{\phi(s_t,a_t)}{\vecw_t}-\mathsf{target})^2$ where $\mathsf{target} = r_t + \gamma\max_{a'}\inp{\phi(s_{t+1},a')}{\vecw_t}$.
This update while heavily used in practice, has been known to be unstable and does not converge to $w^*$ in general.
The reason often cited for this phenomenon is the presence of the `deadly triad' of bootstrapping, function approximation, and off-policy learning.

    
     
\subsection{Assumptions}
\label{sec:assumptions}
We make the following assumptions on the MDPs considered through the paper in order to present our theoretical results.
\begin{assumption}\label{assump:norm_bounded}
The MDP $\mathcal{M}$ has $\mathsf{IBE}(\mathcal{M}) = 0$ (Definition~\ref{def:IBE}),
$\|\phi(s,a)\|_2 \leq 1$. Furthermore, $R(s,a) \in [0,1]$.
\end{assumption}

\begin{assumption}\label{assump:span}
 Let $\Phi:= \{\phi(s,a): (s,a) \in \mathcal{S}\times \mathcal{A}\}$. $\Phi$ is compact, $\mathsf{span}(\Phi) = \mathbb{R}^d$ and $(s,a)\to \phi(s,a)$ is measurable.
\end{assumption}
Even when $\mathsf{span}(\Phi) \neq \mathbb{R}^d$, our results hold after we discard the space orthogonal to the span of embedding vectors in Assumption~\ref{assump:well_condition} and note that \qrex~does not update the iterates along $\mathsf{span}(\Phi)^{\perp}$.
\begin{defn}
\label{def:cphi}
 For $r > 0$, let $\mathcal{N}(\Phi,\|\cdot\|_2,r)$ be the $r$-covering number under the standard Euclidean norm over $\mathbb{R}^d$. Define: $$C_{\Phi} := \int_{0}^{\infty}\sqrt{\log{\mathcal{N}(\Phi,\|\|_2,r)}}dr$$
\end{defn} 
Observe that since $\Phi$ is a subset of the unit Euclidean ball in $\mathbb{R}^d$, $C_{\Phi} \leq C\sqrt{d}$. However, in the case of tabular MDPs it is easy to show that $C_{\Phi} \leq C\sqrt{\log d}$.
 \begin{defn}
  We define the norm $\|\cdot\|_{\phi}$ over $\mathbb{R}^d$ by $\|x\|_{\phi} = \sup_{(s,a)}|\langle \phi(s,a),x\rangle|$.
 \end{defn}
Lemmas~\ref{lem:phi_norm} and~\ref{lem:contraction_noiseless} show that this is the natural norm of interest for the problem. We assume the existence of a fixed (random) behaviour policy  $\pi : \mathcal{S} \to \Delta(\mathcal{A})$ which selects a random action corresponding to each state. At each step, given $(s_t,a_t) = (s,a)$, $s_{t+1} \sim P(\cdot|s,a)$ and $a_{t+1} \sim \pi(s_{t+1})$. This gives us a Markov kernel over $\mathcal{S}\times\mathcal{A}$ which specifies the law of $(s_{t+1},a_{t+1})$ conditioned on $(s_t,a_t)$. We will denote this kernel by $P^{\pi}$. This setting is commonly known as the off-policy \textit{asynchronous} setting. We make the following assumption which is standard in this line of work.
\begin{assumption}\label{assump:mixing}
 There exists a unique stationary distribution $\mu$ for the kernel $P^{\pi}$. Moreover, this Markov chain is  exponentially ergodic in the total variation distance with total variation distance $\tmix$.
 In the general case, we will take this to mean that there exists a constant $C_{\mathsf{mix}}$ for every $t \in \mathbb{N}$
$$
\sup_{x\in \mathcal{S}\times\mathcal{A}}\tv((P^{\pi})^{t}(x,\cdot),\mu) \leq C_{\mathsf{mix}} \exp(-t/\tmix)$$

In the tabular setting, we will use the standard definition of $\tmix$ instead:
$$\tmix = \inf\{t: \sup_{x\in \mathcal{S}\times\mathcal{A}}\tv((P^{\pi})^{t}(x,\cdot),\mu) \leq 1/4\}\,.$$
Here $\tv$ refers to the total variation distance.
\end{assumption}

\begin{assumption}\label{assump:well_condition}
There exists $\kappa > 0$ such that:
$\mathbb{E}_{(s,a)\sim \mu}\phi(s,a)\phi^{\top}(s,a) \succeq \frac{I}{\kappa}\,.$
\end{assumption} 

 In the tabular setting, Assumption~\ref{assump:well_condition} manifests itself as $\frac{1}{\kappa} = \mu_{\min} := \min_{(s,a)}\mu(s,a)$ which is also standard \citep{li2020sample}. 
 Whenever we discuss high probability bounds (i.e, probability at least $1-\delta$), we assume that $\delta \in (0,1/2)$. Similarly, we will assume that the discount factor $\gamma \in (1/2,1)$ so that we can absorb $\mathsf{poly}(1/\gamma)$ factors into constants.

\section{Our Algorithm}
\label{sec:our_algo}


As discussed in the introduction, we incorporate RER and OTL into Q-learning and introduce the algorithms \qrex~(Online Target Q-learning with reverse experience replay, Algorithm~\ref{alg:2-loop-rer}), its sample efficient variant \qrexdr~(\qrex~+ data reuse, Algorithm~\ref{alg:2-loop-rer_dr}) and its episodic variant \epiqrex~(Episodic \qrex, Algorithm~\ref{alg:2-loop-rer_epi}). Since \qrexdr~and \epiqrex~are only minor modifications of \qrex, we refer the reader to the appendix for their pseudocode. 
\begin{algorithm}[t]
	
	\caption{\qrex}
	\label{alg:2-loop-rer}
\begin{algorithmic}[1]
\State \textbf{Input:} learning rates $\eta$, horizon $T$, discount factor $\gamma$, trajectory $X_t = \{s_t,a_t,r_t\}$,
Buffer size $B$, Buffer gap $u$, Number of inner loop buffers $N$

\State Total buffer size: $S\leftarrow B+u$, Outer-loop length: $K\leftarrow \frac{T}{NS}$, Initialization $w_1^{1,1} = 0$
\For{$k = 1,\ldots,K$}
    \For{$j = 1,\ldots,N$}
    \State  Form buffer $\textsf{Buf} =\{X^{k,j}_1, \dots, X^{k,j}_{S}\}$, where, $X^{k,j}_{i}\leftarrow X_{NS(k-1)+S(j-1)+i}$
    \State Define for all $i \in [1,S]$, $\phi_i^{k,j} \triangleq \phi(s_i^{k,j}, a_i^{k,j})$.
    \For{$i=1,\ldots,B$}
        \State $w_{i+1}^{k,j} = w_{i}^{k,j} +  \eta \left[
    r_{B+1-i}^{k,j} + \gamma \max\limits_{a' \in \cA} \inp{\phi(s_{B+2-i}^{k,j},a')}{w_{1}^{k,1}} - \inp{\phi_{B+1-i}^{k,j}}{w_{i}^{k,j}}
     \right] \phi_{B+1-i}^{k,j}$
     
    \EndFor
    
    \State \textbf{Option I}: $w^{k,j+1}_1 = w^{k,j}_{B+1}$
    \State \textbf{Option II}: $w^{k,j+1}_1 = \frac{1}{B}\sum_{i=1}^{B}w^{k,j}_{i+1}$
    \EndFor
    \State \textbf{Option I}: $w^{k+1,1}_1 = w^{k,N+1}_{1}$
    \State \textbf{Option II}: $w^{k+1,1}_1 = \frac{1}{N}\sum_{l=2}^{N+1}w^{k,l}_{1}$
           
    
\EndFor
\State \textbf{Return} $ w^{K+1,1}_{1}$
\end{algorithmic}
\end{algorithm}


 \qrex~is parametrized by the following quantities, $K$ the number of iterations in the outer-loop, $N$ the number of buffers within an outer-loop iteration, $B$ the size of a buffer and $u$ the gap between the buffers. The algorithm has a three-loop structure where at the start of every outer-loop iteration (indexed by $k \in [K]$), we checkpoint our current guess of the $Q$ function given by $w_1^{k,1}$. Each outer-loop iteration corresponds to an inner-loop over the buffer collection with $N$ buffers, i.e. at iteration $j \in [N]$, we collect a buffer of size $B+u$ consecutive state-action-reward tuples. For every collected buffer we consider the first $B$ collected experiences and perform the target based Q-learning update in the reverse order for these experiences. We refer Figure~\ref{fig:buff_order} for an illustration of the processing order. Of note, is the usage of checkpointed target network in the RHS of the Q-learning update through the entirety of the outer-loop iteration, i.e. for a fixed $k$ and for all $j,i$, our algorithm sets
 \vspace{-0.1cm}
\[w_{i+1}^{k,j} = w_{i}^{k,j} +  \eta \left[
    r_{B+1-i}^{k,j} + \gamma \max\limits_{a' \in \cA} \inp{\phi(s_{B+2-i}^{k,j},a')}{\mathbf{w_{1}^{k,1}}} - \inp{\phi_{B+1-i}^{k,j}}{w_{i}^{k,j}}
     \right] \phi_{B+1-i}^{k,j}\]
\vspace{-0.1cm}
\begin{figure}
    \centering
    \includegraphics[height = 0.25\linewidth]{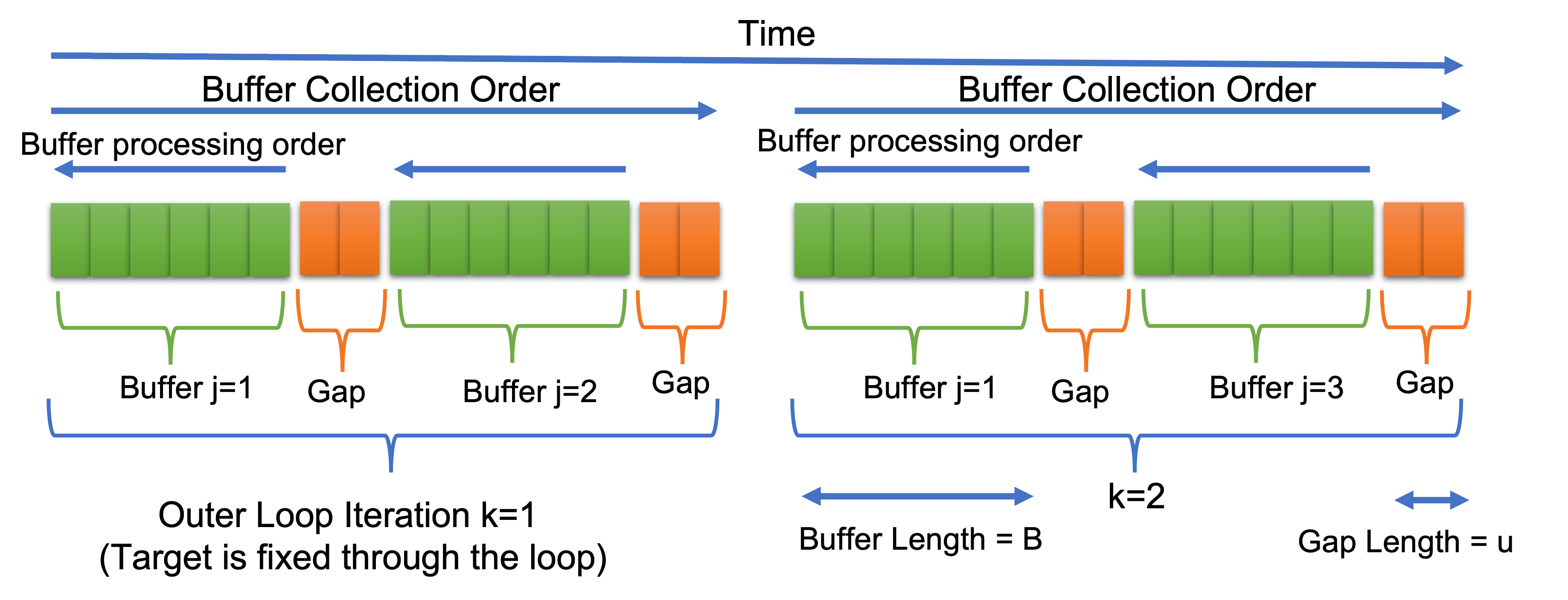}\vspace*{-10pt}
    \caption{Illustration of Online Target Q-learning with Reverse Experience Replay}
    \label{fig:buff_order}
\end{figure}

Figure \ref{fig:buff_order} provides an illustration of the processing order for our updates. It can be seen that the number of experiences collected through the run of the algorithm is $T = KN(B+u)$. For the sake of simplicity, we will assume that the initial point, $w^{1,1}_{1} = 0$. Essentially the same results hold for arbitrary initial conditions. \qrexdr~is a modification of \qrex~where we re-use the data from the first outer-loop iteration (i.e, data from $k = 1$) in every outer-loop iteration (i.e, $k > 1$) instead of drawing fresh samples. In the case of \epiqrex~we take the gap $u=0$ and each buffer to be a single episode instead of the fixed length buffers. 
\begin{remark}
For the sake of clarity, we only analyze the algorithms \qrex~and \qrexdr~for data from a single trajectory with \textbf{Option I}. A similar analysis holds with respect to episodic data with \epiqrex. Similarly \textbf{Option II} involves averaging of the iterates which boosts the performance of SGD for convex problems and indeed we can obtain much better bounds in this setting by using standard analysis. 
\end{remark}

\section{Main Results}
\label{sec:main_results}
 We will now provide finite time convergence analysis and sample complexity for the algorithms \qrex~and \qrexdr. Recall that $K$ is the number of outer-loops, $N$ is the number of buffers inside an outer-loop iteration, $B$ is the buffer size and $u$ is the size of the gap. In what follows, we will take $u = \tilde{O}(\tmix)$, $B = 10u$, $K = \tilde{O}(\frac{1}{1-\gamma})$. We also note that the total number of samples used is $ NK(B+u)$ for \qrex~and $N(B+u)$ for \qrexdr~since we reuse data in each outer-loop iteration. In what follows, by $Q^{K+1,1}_1(s,a)$, we denote $\langle \phi(s,a),w^{K+1,1}_1\rangle$ which is our estimate for the optimal $Q$ function. Here $w^{K+1,1}_1$ is the output of either \qrex~or \qrexdr~at the end of $K$ outer-loop iterations. Define $\|Q^{K+1,1}_1-Q\|_{\infty} := \sup_{(s,a)\in \mathcal{S}\times\mathcal{A}}\bigr|Q^{K+1,1}_1(s,a)-Q^{*}(s,a)\bigr|$.
\begin{table*}[t]
\centering
\begin{sc}
\resizebox{\columnwidth}{0.16\columnwidth}{\begin{tabular}{|c|c|c|c|c|c|}
\hline	
{\bf Setting} & {\bf $K$} & {\bf $N$} &{\bf $u$} & $B$ &{\bf $\eta$}  \\
\hline
\makecell{\zibe~MDP \\ (Theorem~\ref{thm:main_linear})} & $\geq 1$ & $ > \frac{C_3}{B}\frac{\kappa}{\eta} \log\left(\frac{K\kappa}{\delta(1-\gamma)}\right)$ &$\geq C_1 \tmix\log(\tfrac{C_{\mathsf{mix}}KN}{\delta})$   & $=10u$ &$< C_2\min( \frac{(1-\gamma)^2}{C^2_{\Phi} + \log(K/\delta))},\tfrac{1}{B})$ \\
\hline
\makecell{Tabular MDP \\ (Theorem~\ref{thm:main_tabular})} & $\geq C_2\frac{\bigr(\log\bigr(\tfrac{1}{1-\gamma}\bigr)\bigr)^2}{1-\gamma}$& $  >  \frac{C_4}{B}\frac{\tmix}{\mu_{\min}}\log(\tfrac{|S||A|K}{\delta})$ &$\geq C_1 \tmix\log(\tfrac{KN}{\delta})$   & $=10u$ & $<\frac{C_3}{\log\bigr(\tfrac{|\mathcal{S}||\mathcal{A}|K}{\delta}\bigr)}$ \\
\hline
\makecell{Tabular MDP \\ (Theorem~\ref{thm:tabular_reuse})}& $\geq C_2\frac{\log\bigr(\tfrac{1}{1-\gamma}\bigr)}{1-\gamma}$& $ >  \frac{C_4}{B}\frac{\tmix}{\mu_{\min}}\log(\tfrac{|S||A|}{\delta})$&$\geq C_1 \tmix\log(\tfrac{KN}{\delta})$   & $=10u$ &$< C_3 \frac{(1-\gamma)^2}{\bar{d}\log\bigr(\tfrac{|\mathcal{S}||\mathcal{A}|}{\delta}\bigr)}$  \\
\hline
\hline



\makecell{ \zibe~MDP\\ (Theorem~\ref{thm:main_linear})}& $\tfrac{\beta_1}{(1-\gamma)}$& $ \kappa\beta_2 \max\bigr(\frac{C_{\Phi}^2+\beta_2}{\epsilon^2(1-\gamma)^4\tmix},1\bigr)$   & $\tmix\log\left(\tfrac{KN}{\delta}\right)$& $10u$& $ \min\bigr(\frac{(1-\gamma)^4\epsilon^2}{C_{\Phi}^2+ \beta_3} , \frac{1}{B}\bigr)$ \\
\hline
\makecell{Tabular MDP \\ (Theorem~\ref{thm:main_tabular})}&  $\frac{\beta_1^2}{1-\gamma}$ & 
$\frac{1}{\mu_{\min}}\max\left(\beta_5, \frac{\beta_1}{\eta\tmix}\bigr)\right)$ & $\tmix\log\left(\tfrac{KN}{\delta}\right)$& $10 u$& $\frac{(1-\gamma)^3}{\beta_5} \min(\epsilon,\epsilon^2)$\\
\hline
\makecell{Tabular MDP \\ (Theorem~\ref{thm:tabular_reuse})}& $\tfrac{\beta_1}{(1-\gamma)}$  & $\frac{1}{\mu_{\min}}\max\left(\beta_5, \frac{\beta_1}{\eta\tmix}\bigr)\right)$ & $ \tmix \log\bigr(\tfrac{KN}{\delta}\bigr)$& $10 u$& $ \min\left(\tfrac{\epsilon^2(1-\gamma)^3}{\beta_4},\frac{(1-\gamma)^2}{\bar{d}\beta_5},\frac{\epsilon(1-\gamma)^3}{\sqrt{\bar{d}}\beta_4}\right)$ \\
\hline
\end{tabular}}
\end{sc}\vspace*{-8pt}
	\caption{Parameter constraints (first 3 rows) and choice for $<\epsilon$ error (last 3 rows) for our algorithms. Here the poly-log factors $\beta_i$ are given by $\beta_1 = \log\bigr(\tfrac{1}{(1-\gamma)\min(\epsilon,1)}\bigr)$, $\beta_3 =\log\bigr( \tfrac{1}{(1-\gamma)\delta\min(\epsilon,1)}\bigr) $, $\beta_2 =\log(\kappa) +\beta_3$, $\beta_4 = \log\bigr(\tfrac{|\mathcal{S}||\mathcal{A}|K}{\delta}\bigr)$, $\beta_5 = \log\bigr(\tfrac{|\mathcal{S}||\mathcal{A}|}{\delta}\bigr)$.}
	\label{tb:our_values} 
\end{table*}
We first consider the performance of \qrex~with data derived from a linear MDP (Defintion~\ref{def:LMDP}) or a \zibe~MDP (Definition~\ref{def:IBE}) and satisfying the Assumptions in Section~\ref{sec:assumptions}.
\begin{theorem}[\zibe~/Linear MDP]\label{thm:main_linear} Suppose we run \qrex~using \textbf{Option I} with data from an MDP with $\mathsf{IBE} = 0$. 
There exists constants $C_1,C_2,C_3,C_4,C_5 >0 $ such that whenever the parameter bounds given in Table~\ref{tb:our_values} (row 1) are satisfied, then with probability at-least $1-\delta$, we must have:
\begin{align}
\|Q^{K+1,1}_1- Q^{*}\|_{\infty}
 &\leq \tfrac{\gamma^{K}}{1-\gamma} + C_4\sqrt{\tfrac{K\kappa}{\delta(1-\gamma)^4}}\exp\left(-\tfrac{\eta NB}{\kappa}\right)\nonumber  + C_5\sqrt{\tfrac{\eta\left[C_{\Phi}^2 + \log\left(\tfrac{K}{\delta}\right)\right]}{(1-\gamma)^4} }
 \end{align}
 Given $\epsilon \in (0,\tfrac{1}{(1-\gamma)}] $, and the parameters as given in Table~\ref{tb:our_values} (row 4)(up to constant factors), then with probability at-least $1-\delta$: $\|Q^{K+1,1}_1(s,a)- Q^{*}(s,a)\|_{\infty} < \epsilon$. This has a sample complexity $$\Theta(NKB) = \tilde{O}\left(\kappa \max\left(\tfrac{C_{\Phi}^2+1}{(1-\gamma)^5\epsilon^2},\tfrac{\tmix}{1-\gamma}\right)\right) $$
\end{theorem}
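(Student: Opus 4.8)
The plan is to reduce the analysis to two coupled pieces — an outer-loop value-iteration recursion and an inner-loop stochastic-approximation analysis — glued together by the fact that $\mathsf{IBE}(\mathcal{M})=0$ lets us lift the Bellman operator exactly to $\mathbb{R}^d$. Concretely, I would first use Lemmas~\ref{lem:t_operator} and~\ref{lem:contraction_noiseless} to define, for each outer-loop checkpoint $w_1^{k,1}$, the idealized target $\bar w_k \in \mathbb{R}^d$ satisfying $\langle \phi(\cdot),\bar w_k\rangle = \mathcal{T}(\langle\phi(\cdot),w_1^{k,1}\rangle)$; because the inherent Bellman error vanishes this target is exactly representable, and the contraction property gives $\|\bar w_k - w^*\|_\phi \le \gamma\,\|w_1^{k,1}-w^*\|_\phi$. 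The inner loop over the $N$ buffers (each of $B$ reverse-processed updates, all bootstrapping off the \emph{frozen} checkpoint $w_1^{k,1}$) is then precisely a stochastic-approximation scheme whose population fixed point is $\bar w_k$, so the whole algorithm is fitted value iteration with SGD as the inner solver. Note $\|Q^{K+1,1}_1-Q^*\|_\infty = \|w_1^{K+1,1}-w^*\|_\phi$ by definition of the norm, so everything can be phrased in $\mathbb{R}^d$.

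For a fixed $k$ I would analyze the $NB$ inner updates as SGD on the strongly-convex quadratic induced by $\mathbb{E}_{(s,a)\sim\mu}\phi\phi^\top \succeq I/\kappa$ (Assumption~\ref{assump:well_condition}). Writing the one-step error recursion and unrolling yields a bound of the form $\|w_1^{k+1,1}-\bar w_k\|_\phi \lesssim \exp(-\eta NB/\kappa)\,\|w_1^{k,1}-\bar w_k\|_\phi + (\text{stochastic fluctuation})$, where the first term is the geometric optimization error and the second is the SGD noise floor. The crucial and hardest step is controlling the fluctuation \emph{without} paying a factor of $d$ and \emph{without} incurring Markovian bias. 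Here I would invoke the super-martingale structure produced by reverse traversal (Theorem~\ref{thm:linear_variance}): processing each buffer backwards means the update at step $i$ bootstraps to data lying in the chain's future, decoupling the multiplicative from the additive noise and yielding an $O(\sqrt{\eta})$ stationary-variance bound. The gap $u=\tilde{O}(\tmix)$ between buffers guarantees, via Assumption~\ref{assump:mixing}, that distinct buffers are nearly independent draws from the stationary law. The $\max_{a'}$ nonlinearity in the target is handled by a chaining/covering argument over $\Phi$, which is exactly what produces the $C_\Phi^2$ (rather than $d$) dependence through the entropy integral of Definition~\ref{def:cphi}, while the union over the $K$ outer loops contributes the additive $\log(K/\delta)$.

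Combining the two pieces, set $e_k := \|w_1^{k,1}-w^*\|_\phi$ and apply the triangle inequality $e_{k+1}\le \|w_1^{k+1,1}-\bar w_k\|_\phi + \|\bar w_k - w^*\|_\phi \le (\text{opt})+(\text{noise}) + \gamma e_k$. Unrolling this geometric recursion over $k=1,\dots,K$ with $e_1 = \|w^*\|_\phi \le \tfrac{1}{1-\gamma}$ (since $R\in[0,1]$) produces the three stated terms: $\gamma^K e_1$ gives the $\tfrac{\gamma^K}{1-\gamma}$ bias; the accumulated optimization error, bounded in expectation and converted to high probability by Markov's inequality (hence the $1/\delta$ under the root rather than $\log(1/\delta)$), gives the term with prefactor $\sqrt{K\kappa/(\delta(1-\gamma)^4)}$; and the accumulated noise floor, controlled by the martingale concentration above, gives the final $\sqrt{\eta(C_\Phi^2+\log(K/\delta))/(1-\gamma)^4}$ term. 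The powers of $(1-\gamma)^{-1}$ accrue from the $\tfrac{1}{1-\gamma}$ geometric-sum prefactor together with the a priori $\tfrac{1}{1-\gamma}$ bound on iterate magnitudes.

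For the sample complexity I would balance the three terms below $\epsilon$ exactly as in Table~\ref{tb:our_values}: take $K=\tilde{O}(\tfrac{1}{1-\gamma})$ so the bias is $<\epsilon$, take $\eta=\tilde\Theta\bigl(\tfrac{(1-\gamma)^4\epsilon^2}{C_\Phi^2+1}\bigr)$ so the noise floor is $<\epsilon$, and take $NB=\tilde\Omega(\tfrac{\kappa}{\eta})$ so the geometric optimization error is negligible. The total count $\Theta(NKB)=\Theta(K\cdot NB)=\tilde{O}\bigl(\tfrac{1}{1-\gamma}\cdot\tfrac{\kappa}{\eta}\bigr)=\tilde{O}\bigl(\kappa\,\tfrac{C_\Phi^2+1}{(1-\gamma)^5\epsilon^2}\bigr)$, with the alternative $\tfrac{\kappa\tmix}{1-\gamma}$ dominating when $\epsilon$ is large, since the minimum buffer constraint $u\gtrsim\tmix$ forces at least that many samples. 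I expect the main obstacle to be the inner-loop concentration of Theorem~\ref{thm:linear_variance}: establishing the reverse-order super-martingale while simultaneously extracting covering-number (not dimension) dependence through the $\max_{a'}$ operator is the technically delicate heart of the argument, whereas the outer-loop contraction and the parameter bookkeeping are comparatively routine.
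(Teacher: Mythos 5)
Your proposal is correct and follows essentially the same route as the paper: write the outer loop as a noisy Bellman iteration $w_1^{k+1,1}=\mathcal{T}(w_1^{k,1})+\epsilon_k$, unroll the $\gamma$-contraction in $\|\cdot\|_\phi$ to get $\gamma^K\|w^*\|_\phi+\sup_k\|\epsilon_k\|_\phi/(1-\gamma)$, and control $\epsilon_k$ via the bias--variance decomposition with the $\exp(-\eta NB/\kappa)$ bias term (Markov's inequality, hence the $1/\delta$ under the root) and the reverse-traversal super-martingale plus covering argument for the $\sqrt{\eta}(C_\Phi+\sqrt{\log(K/\delta)})$ variance term (the paper's Lemmas~\ref{lem:bias_variance},~\ref{lem:linear_bias},~\ref{lem:unif_bound} and Theorem~\ref{thm:linear_variance}). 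You also correctly identify the need for the a priori $O(1/(1-\gamma))$ uniform bound on the iterates and the parameter balancing that yields the stated sample complexity, so there is nothing substantive to add.
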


We now consider the performance of \qrex~and \qrexdr~in the case of tabular MDPs. We refer to Table~\ref{tb:tabular_summary} for a comparison of our results to the state-of-art results provided in literature for Q-learning based algorithms. 

{\bf Remarks}: 
\begin{enumerate}
    \item  To the best of our knowledge, Theorem~\ref{thm:main_linear} presents the first non-asymptotic convergence results for Q-learning based methods for \zibe~MDPs (or Linear MDPs) under standard assumptions. Note that the sample complexity is independent of $d$, so this result applies to kernel approximation as well. We note that RER is important for this result, and this is reflected in our experiments. The lower order dependency on $\tmix$ in the sample complexity is also independent of $d$, but this `burn in' type dependence might be sub-optimal.
    \item  Even though we have considered the \zibe~case (i.e, $\mathsf{IBE} = 0$), we can extend this analysis to MDPs with small inherent Bellman error under linear approximation i.e, $\mathsf{IBE} \leq \zeta \in \mathbb{R}^{+}$. In this case, a simple extension of the methods established in this work would show that under the conditions of Theorem~\ref{thm:main_linear} for $\epsilon$ error, we must have $\|Q^{K+1,1}_1 -Q^{*}\|_{\infty} < \epsilon + \tilde{O}\left(\frac{\kappa \zeta}{1-\gamma}\right)$. We omit this extension for the sake of clarity. 
\end{enumerate}

\begin{theorem}[Tabular MDP]\label{thm:main_tabular}
Suppose we run \qrex~using \textbf{Option I} with data derived from tabular MDPs. Whenever the algorithmic parameters are picked as given in Table~\ref{tb:our_values} (row 2) for some universal constants $C_1,\dots,C_5$, we obtain with probability at-least $1-\delta$:
$$\|Q^{K+1}-Q^{*}\|_{\infty} < C_5\left[\tfrac{\gamma^{L}}{1-\gamma} + \tfrac{\exp\bigr(-\tfrac{\eta\mu_{\min}NB}{2}\bigr)}{(1-\gamma)^2} + \tfrac{\eta\log\bigr(\tfrac{K|\mathcal{S}||\mathcal{A}|}{\delta}\bigr)}{(1-\gamma)^3} + \sqrt{\tfrac{\eta\log\bigr(\tfrac{K|\mathcal{S}||\mathcal{A}|}{\delta}\bigr)}{(1-\gamma)^3}}\right]$$

Where $L = \tfrac{b_1K}{\log{\tfrac{1}{1-\gamma}}}$. Given $\epsilon \in (0,\frac{1}{1-\gamma}]$, and the parameters are picked as given in Table~\ref{tb:our_values} (row 5), then with probability at-least $1-\delta$, we have:
$\|Q^{K+1}-Q^{*}\|_{\infty} < \epsilon\,.$
This gives us a sample complexity of $$\Theta(NKB) = \tilde{O}\left(\tfrac{1}{\mu_{\min}}\max\left(\tfrac{1}{(1-\gamma)^4\min(\epsilon,\epsilon^2)},\tfrac{\tmix}{1-\gamma}\right)\right)\,.$$
\end{theorem}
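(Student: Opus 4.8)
I would mirror the scaffolding of Theorem~\ref{thm:main_linear} but replace its variance bound by a sharper, tabular-specific one; a naive specialization (one-hot features, $d=|\cS||\cA|$, $1/\kappa=\mu_{\min}$, $C_{\Phi}\le C\sqrt{\log d}$) only reproduces the $(1-\gamma)^{-5}$ rate of the \zibe\ case, so the extra power of $(1-\gamma)$ must be won back by exploiting the coordinate structure. The plan is to view the outer loop as an \emph{inexact value iteration}. Write $\bar w^{k}:=w^{k,1}_1$ for the checkpointed target and let $w^{*}_k$ be the fixed point of the inner iteration, which by Lemma~\ref{lem:t_operator} satisfies $Q_{w^{*}_k}=\cT(Q_{\bar w^k})$ exactly (in the tabular case the least-squares fixed point coincides with the coordinatewise Bellman backup, since $\mathbb{E}_{\mu}[\phi(\phi^{\top}w-\mathsf{target})]=0$ reads off one coordinate at a time). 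Since $\cT$ is a $\gamma$-contraction in $\lf{\cdot}=\|\cdot\|_{\phi}$ (Lemma~\ref{lem:contraction_noiseless}), setting $e_k:=\lf{Q_{\bar w^k}-Q^{*}}$ gives the recursion $e_{k+1}\le \gamma e_k + \rho_k$, where $\rho_k$ is the error incurred by the inner loop in approximating $w^{*}_k$. Unrolling yields the geometric bias term $\gamma^{(\cdot)}/(1-\gamma)$ together with a $\tfrac{1}{1-\gamma}$-weighted accumulation of the $\rho_k$; the mild $\log\tfrac{1}{1-\gamma}$ slack in the exponent $L$ is the price of guaranteeing an effective contraction once the residual $\rho_k$ is folded in.

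\textbf{Inner-loop decomposition.} Conditioned on $\bar w^k$, the $NB$ reverse-order updates of the inner loop form a stochastic-approximation recursion whose expected (over $\mu$) linear part is $I-\eta\,\mathbb{E}_{\mu}[\phi\phi^{\top}]$, which by Assumption~\ref{assump:well_condition} contracts by at least $1-\eta/\kappa=1-\eta\mu_{\min}$ per effective step. I would split $\rho_k$ into (i) a \emph{deterministic} part, the decay of the initial distance $\lf{Q_{\bar w^k}-Q_{w^{*}_k}}\le 2e_k$ at rate $\exp(-\tfrac{\eta\mu_{\min}NB}{2})$; summed against the outer geometric series and the $O(\tfrac{1}{1-\gamma})$ size of the iterates this produces the $\exp(-\tfrac{\eta\mu_{\min}NB}{2})/(1-\gamma)^2$ term; and (ii) a \emph{stochastic} fluctuation part. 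The buffer gap $u=\tilde O(\tmix)$ is used to decorrelate successive buffers from the target and from the chain's past, so that the contraction holds in conditional expectation and the fluctuations across the $N$ buffers can be treated as an approximate martingale.

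\textbf{The crux: variance-aware concentration.} The heart of the improvement is bounding the stochastic fluctuation in high probability. Following the mechanism behind Theorem~\ref{thm:linear_variance}, the \emph{reverse} traversal endows the accumulated Bellman noise with a super-martingale structure that a forward pass lacks; I would apply a Freedman/Bernstein-type inequality to this martingale. Two tabular features are essential: first, the one-hot embedding lets a union bound over the $|\cS||\cA|$ coordinates replace the entropy integral $C_{\Phi}$ by $\log(\tfrac{|\cS||\cA|K}{\delta})$; second, a \emph{variance-aware} accounting of the Bellman noise $\gamma(\max_{a'}Q(s',a')-\mathbb{E}_{s'}\max_{a'}Q(s',a'))$ — using that its conditional variance telescopes over the effective horizon — shaves a factor of $(1-\gamma)$ off the crude bound $\lf{Q}\le\tfrac{1}{1-\gamma}$. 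This yields a linear-in-$\eta$ deviation $\tfrac{\eta\log(\cdot)}{(1-\gamma)^3}$ together with the sub-Gaussian tail $\sqrt{\tfrac{\eta\log(\cdot)}{(1-\gamma)^3}}$, i.e. a variance proxy scaling as $(1-\gamma)^{-3}$ rather than the $(1-\gamma)^{-4}$ of the general \zibe\ analysis.

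\textbf{Final balancing and the main obstacle.} Combining the four contributions gives the displayed bound; the $\epsilon$-guarantee and sample complexity then follow from the choices in Table~\ref{tb:our_values} (row 5): $K=\tilde\Theta(\tfrac{1}{1-\gamma})$ forces the bias below $\epsilon$, $N$ is taken large enough to kill the $\exp$ term, and $\eta=\tilde\Theta((1-\gamma)^3\min(\epsilon,\epsilon^2))$ balances the two noise terms — the $\min(\epsilon,\epsilon^2)$ arising precisely because the \emph{linear} term binds for large $\epsilon$ while the \emph{square-root} term binds for small $\epsilon$. Substituting into $NKB$ reproduces $\tilde O\!\big(\tfrac{1}{\mu_{\min}}\max(\tfrac{1}{(1-\gamma)^4\min(\epsilon,\epsilon^2)},\tfrac{\tmix}{1-\gamma})\big)$. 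I expect the genuinely hard step to be item (ii) of the crux: simultaneously (a) preserving the reverse-order super-martingale through the $\max$ nonlinearity, (b) controlling the Markovian bias via the mixing gap without erasing the variance savings, and (c) obtaining the telescoping variance bound in the \emph{asynchronous} setting, where only one coordinate is touched per step — it is this last point that lets \qrex\ with asynchronous data match synchronous Q-learning, and it is where the tabular argument departs most from the \zibe\ proof.
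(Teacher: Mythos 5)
Your first stage --- coupling via the buffer gaps, the bias/variance split of the inner loop, the reverse-order filtration enabling a Freedman-type bound, and the union bound over the $|\mathcal{S}||\mathcal{A}|$ coordinates replacing $C_{\Phi}$ --- matches what the paper does to establish the per-outer-iteration noise bound $|\tilde{\epsilon}_k(s,a)| \lesssim \sqrt{\eta\log(\cdot)\,\var(V)(s,a;\tilde{Q}^{k,1}_1)} + \alpha_\eta$ (its Lemmas~\ref{lem:bias_variance}, \ref{lem:tabular_bias}, \ref{lem:number_lower_bound}, \ref{lem:tabular_variance}). The gap is in your crux. You propose to win the extra factor of $(1-\gamma)$ \emph{inside} the inner loop, via a ``telescoping conditional variance'' that would make the fluctuation $\rho_k$ small enough that the sup-norm unrolling $e_{k+1}\le\gamma e_k+\rho_k$ already yields $\sqrt{\eta/(1-\gamma)^3}$. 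For that accounting to close you need $\rho_k\lesssim\sqrt{\eta/(1-\gamma)}$ uniformly in $k$, i.e.\ $\var(V_k)(s,a)\lesssim(1-\gamma)^{-1}$ pointwise. This is false: at an individual state-action pair the conditional variance of $\max_{a'}Q(s',a')$ is generically of order $\|Q\|_\infty^2=(1-\gamma)^{-2}$ (e.g.\ a pair transitioning with probability $1/2$ each to states with extremal values), so the per-iteration error is genuinely $\sqrt{\eta}/(1-\gamma)$ and the sup-norm unrolling only recovers $\sqrt{\eta}/(1-\gamma)^2$, i.e.\ the $(1-\gamma)^{-5}$ sample complexity you were trying to beat.

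The saving is available only for the \emph{discounted sum of variances along MDP trajectories}, not pointwise per iteration. The paper therefore keeps the outer recursion element-wise, sandwiching $\gamma P^{\pi^{*}}\Delta_k+\tilde{\epsilon}_k\le\Delta_{k+1}\le\gamma P^{\pi_k}\Delta_k+\tilde{\epsilon}_k$ and unrolling into $\sum_k\gamma^{K-k}\bigl(\prod_{l}P^{\pi_l}\bigr)\tilde{\epsilon}_k$; only then can it invoke the law-of-total-variance bounds $\bigl\|(I-\gamma P^{\pi^{*}})^{-1}\sqrt{\var(V^{*})}\bigr\|_{\infty}\le\sqrt{2/(1-\gamma)^3}$ (\citep[Lemma 7]{azar2013minimax}) and its greedy-policy analogue (\citep[Lemma 5]{li2021q}) to collapse the weighted sum of $\sqrt{\var(V_k)}$ to order $(1-\gamma)^{-3/2}$ rather than $(1-\gamma)^{-2}$. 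Taking sup-norms at each outer step, as your recursion does, destroys exactly the structure these lemmas need. A second omission: because $\var(V_k)$ depends on $\Delta_k$, the resulting inequality is self-referential, $\|\Delta_{K+1}\|_{\infty}\le\alpha+\beta\sqrt{1+\max_k\|\Delta_k\|_{\infty}}$, and the paper resolves it with a hyper-contractivity lemma for $f(u)=\alpha+\beta\sqrt{u}$ applied over dyadic blocks of the outer-loop index --- this is precisely what produces the degraded exponent $L=b_1K/\log\tfrac{1}{1-\gamma}$ in the bias term, which your sketch attributes vaguely to ``guaranteeing an effective contraction'' but does not actually derive. Without both the element-wise total-variance argument and the fixed-point bootstrap, the stated bound is not reached.
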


The sample complexity provided in Theorem~\ref{thm:main_tabular} matches the sample complexity of \emph{synchronous} Q-learning even when dealing with \emph{asynchronous} data. However it is still sub-optimal with respect to the mini-max lower bounds which has a dependence of $\frac{1}{(1-\gamma)^3}$ instead of $\frac{1}{(1-\gamma)^4}$. We resolve this gap for \qrexdr~in Theorem~\ref{thm:tabular_reuse}. In the case of tabular MDPs, the number states can be large but the support of $P(\cdot|s,a)$ is bounded in most problems of practical interest. Consider the following assumption:
\begin{assumption}\label{assump:bounded_support}
 Tabular MDP is such that $|\mathsf{supp}(P(\cdot|s,a))| \leq \bar{d} \in \mathbb{N}$. This holds for every tabular MDP with $\bar{d}= |\mathcal{S}|$.
\end{assumption}

\begin{theorem}[Tabular MDP with Data Reuse]\label{thm:tabular_reuse}
For tabular MDPs, suppose additionally Assumption~\ref{assump:bounded_support} holds and we run \qrexdr~using \textbf{Option I}. There exist universal constants $C_1,C_2,C_3,C_4$ such that when the parameter values satisfy the bounds in Table~\ref{tb:our_values} (row 3), with probability at-least $1-\delta$:
\begin{align}
    \|Q^{K+1,1}_1 - Q^{*}\|_{\infty} &\leq C\left[ \tfrac{\exp\bigr(-\tfrac{\eta\mu_{\min}NB}{2}\bigr)+ \gamma^{K}}{(1-\gamma)^2} + \tfrac{\eta\log\bigr(\tfrac{|\mathcal{S}||\mathcal{A}|K}{\delta}\bigr)}{(1-\gamma)^3}\sqrt{\bar{d} }   +\sqrt{\tfrac{\eta}{(1-\gamma)^3}\log\bigr(\tfrac{K|\mathcal{S}||\mathcal{A}|}{\delta}\bigr)} \right] \nonumber 
\end{align}

Suppose $\epsilon \in (0,\frac{1}{1-\gamma}]$. If we choose the parameters as per Table~\ref{tb:our_values} (row 6), then with probability at-least $1-\delta$ we have: $\|Q^{K+1}(s,a)-Q^{*}\|_{\infty} < \epsilon$. The sample complexity in this case is $$\Theta(NB) = \tilde{O}\left(\tfrac{1}{\mu_{min}}\max\left(\tmix,\tfrac{1}{\epsilon^2(1-\gamma)^3},\tfrac{\bar{d}}{(1-\gamma)^2},\tfrac{\sqrt{\bar{d}}}{\epsilon(1-\gamma)^3}\right)\right).$$
\end{theorem}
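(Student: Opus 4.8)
The plan is to mirror the two–level structure used for Theorem~\ref{thm:main_tabular}, but to exploit the one structural change introduced by data reuse: since every outer-loop iteration $k$ uses exactly the samples collected during $k=1$, the (stochastic) map $w_1^{k,1}\mapsto w_1^{k+1,1}$ induced by the inner loop is, up to controllable optimization error, iteration of a single \emph{fixed} empirical operator $\hat{\mathcal T}$ determined once and for all by the reused buffers. I would first lift the tabular recursion to $\mathbb R^d$ via Lemmas~\ref{lem:t_operator} and~\ref{lem:contraction_noiseless}, and then decompose
\[
\|Q^{K+1,1}_1-Q^*\|_\infty \;\le\; \underbrace{\|Q^{K+1,1}_1-\hat Q\|_\infty}_{\text{outer loop}} \;+\; \underbrace{\|\hat Q-Q^*\|_\infty}_{\text{statistical (plug-in)}},
\]
where $\hat Q=\langle\phi(\cdot),\hat w\rangle$ is the fixed point of $\hat{\mathcal T}$. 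The entire point of reuse is that the statistical term is incurred \emph{once}, rather than afresh in each of the $K$ outer loops, which is exactly what saves a factor of $(1-\gamma)^{-1}$ relative to Theorem~\ref{thm:main_tabular}.

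For the first term I would analyze a single outer loop with a frozen target. Running the reverse-order updates over $N$ decorrelated buffers (the gap $u=\tilde O(\tmix)$ makes consecutive buffers behave like independent draws from $\mu$ by Assumption~\ref{assump:mixing}) is an SGD-type process whose expected update contracts toward the frozen-target fixed point. Per coordinate the effective number of updates is $\sim\eta\mu_{\min}NB$ (Assumption~\ref{assump:well_condition} in its tabular form $1/\kappa=\mu_{\min}$), giving the geometric optimization error $\exp(-\eta\mu_{\min}NB/2)$; the RER super-martingale structure of Theorem~\ref{thm:linear_variance} is what lets this noise concentrate without the forward-order bias. Because $\hat{\mathcal T}$ is a $\gamma$-contraction in $\|\cdot\|_\infty$ and is the \emph{same} operator at every $k$, the outer loop is genuine value iteration on the empirical model: the checkpoint errors contract as $\gamma^K$ toward $\hat w$, and summing the per-loop optimization errors through the geometric series produces the $\bigl(\exp(-\eta\mu_{\min}NB/2)+\gamma^K\bigr)/(1-\gamma)^2$ term.

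The statistical term $\|\hat Q-Q^*\|_\infty$ is where the new difficulty—and the $\sqrt{\bar d}$ dependence—appears. Linearizing the two fixed-point equations gives the perturbation bound $\|\hat Q-Q^*\|_\infty\le(1-\gamma)^{-1}\|(\hat{\mathcal T}-\mathcal T)Q^*\|_\infty$, refined with the resolvent $(I-\gamma\hat P^{\hat\pi})^{-1}$ of the empirical greedy policy to avoid losing powers of $(1-\gamma)$. The coordinatewise deviation $(\hat{\mathcal T}-\mathcal T)Q^*(s,a)$ is an empirical mean of $\max_{a'}Q^*(s',a')$ over the $\sim\eta\mu_{\min}NB$ reused next-state samples from $(s,a)$; a Bernstein/Freedman bound together with the total-variance argument (the Bellman equation satisfied by $\mathrm{Var}[V^*]$) supplies the extra $(1-\gamma)^{-1}$ inside the root and yields the fluctuation term $\sqrt{\eta\log(\cdots)/(1-\gamma)^3}$. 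The genuinely new subtlety is that reusing the \emph{same} samples to both estimate the transitions and evaluate the greedy $\max$ creates a statistical dependence between $\hat w$ and the maximizing action—the classical Q-learning overestimation bias. This is a bias (linear in $\eta$) rather than a mean-zero fluctuation, and controlling it uniformly over the at-most-$\bar d$ reachable next states by a maximal inequality over the support is what produces the $\frac{\eta\log(\cdots)}{(1-\gamma)^3}\sqrt{\bar d}$ term. Substituting the choices of Table~\ref{tb:our_values} (row 6)—in particular $K=\tilde\Theta((1-\gamma)^{-1})$ and $\eta NB=\tilde\Theta(\mu_{\min}^{-1})$—then drives each term below $\epsilon$ and gives the claimed $\Theta(NB)$ complexity.

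I expect the main obstacle to be precisely this dependency-induced bias. Unlike the model-based analyses of \citep{agarwal2020model,li2020breaking,li2020sample}, which eliminate it with explicit leave-one-out/decoupling constructions, here we must control it directly, using only the bounded support $\bar d$ together with the reverse-order (super-martingale) processing to argue that the coupling between the inner-loop noise and the reused data does not inflate the error beyond the $\sqrt{\bar d}$ factor. Getting this bias to scale as $\sqrt{\bar d}$ rather than $\bar d$ or $|\mathcal S|$—so that the final complexity is near-minimax-optimal—will be the technically delicate step.
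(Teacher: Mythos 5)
Your high-level picture is right in several respects: data reuse means the per-outer-loop sample cost is paid once rather than $K$ times, the outer loop still contracts at rate $\gamma$, the $(1-\gamma)^{-3}$ comes from a Bernstein/total-variance argument, and the crux is the statistical dependence between the current iterate and the reused samples, with the $\bar d$-bounded support entering through a maximal inequality. But your central decomposition is not the paper's, and as stated it has a genuine gap at exactly the point you flag as delicate. The inner loop of \qrexdr~does not implement a clean plug-in operator $\hat{\mathcal T}$ whose fixed point $\hat Q$ is a natural empirical estimator: the map $w\mapsto w^{+}$ is $\mathcal T(w)+\tilde\epsilon_1(w)$, where $\tilde\epsilon_1(w)$ contains a non-vanishing step-size-dependent bias $(1-\eta)^{\tilde N^1(s,a)}\bigl(w-\mathcal T(w)\bigr)$ and sample weights $(1-\eta)^{\tilde N^{1,j}_i(s,a)}$ that do not sum to one, so ``value iteration on the empirical model'' is only a heuristic. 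More importantly, your refined bound on $\|\hat Q-Q^*\|_\infty$ via the resolvent $(I-\gamma\hat P^{\hat\pi})^{-1}$ reintroduces the dependence of the empirical greedy policy on the data --- the very problem that model-based analyses need leave-one-out constructions for --- and your sketch defers rather than resolves it. You also do not exhibit the mechanism that makes the $\bar d$-dependent error \emph{second order} in $\eta$ (i.e.\ $\eta\sqrt{\bar d}$ rather than $\sqrt{\eta\bar d}$), which is what keeps the final rate near-optimal.

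The paper's route avoids all of this by never leaving the true Bellman operator. It writes $\tilde Q^{k+1,1}_1=\mathcal T(\tilde Q^{k,1}_1)+\tilde\epsilon_1(\tilde Q^{k,1}_1)$ and compares to the \emph{deterministic} noiseless iterates $\bar Q^{k+1,1}_1=\mathcal T(\bar Q^{k,1}_1)$, $\bar Q^{1,1}_1=0$, not to an empirical fixed point. The noise is split three ways (Equation~\eqref{eq:err_decomp_dr}): the geometric bias term; the difference $\tilde L^{1,j}(\tilde Q^{k,1}_1)-\tilde L^{1,j}(\bar Q^{k,1}_1)$; and the variance evaluated at the deterministic point $\bar Q^{k,1}_1$, to which the Freedman bound of Lemma~\ref{lem:tabular_variance} applies directly because $\bar Q^{k,1}_1$ is not random. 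The difference term is where $\bar d$ enters: Lemma~\ref{lem:conc_bootstrap} shows it is Lipschitz in the evaluation point with constant $\gamma\sum_{s'}|\hat P_k(s'|s,a)-\bar P_k(s'|s,a)|$, Lemma~\ref{lem:uniform_convergence} bounds that $\ell_1$ deviation uniformly over the $\bar d$-support by $O(\sqrt{\eta\bar d\log(1/\delta)})$, and Claim~\ref{claim:crude_bd} supplies an a priori bound $\|\tilde Q^{k,1}_1-\bar Q^{k,1}_1\|_\infty=O(\sqrt{\eta}/(1-\gamma)^2)$; the product of the two $\sqrt\eta$ factors is precisely what yields the $\eta\sqrt{\bar d}\log(\cdot)/(1-\gamma)^2$ bias term. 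Because the recursion is still in terms of $\mathcal T$, the resolvent and total-variance machinery from the proof of Theorem~\ref{thm:main_tabular} (with $\var(V_k)$ replaced by $\var(\bar V_k)$ and $\|\bar Q^{k,1}_1-Q^*\|_\infty\le\gamma^k/(1-\gamma)$) carries over unchanged, so no new leave-one-out argument is needed. If you want to salvage your plug-in decomposition you would need to both characterize the fixed point of the $\eta$-weighted operator and decouple $\hat\pi$ from the data; the paper's bootstrapping against the deterministic Bellman sequence is the missing idea that makes the argument close.
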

{\bf Remarks}: Theorem~\ref{thm:tabular_reuse} obtains near-optimal sample complexity with respect to $\gamma$.  Here again the `burn in' type dependency on $\tau_{mix}$, which might be sub-optimal, is due to the data being sampled from a Markovian trajectory. 

 \section{Overview of the Analysis}
 \label{sec:analysis_overview}
 We divide the analysis of \qrex~and \qrexdr~into two parts: Analysis of $w^{k,1}_1$ obtained at the end of outer-loop iteration $k$ and the analysis of the algorithm within the outer-loop. The algorithm reduces to SGD for linear regression with Markovian data within an outer-loop due to OTL. That is, we try to find $w^{k+1,1}_1$ such that $\langle w^{k+1,1}_1,\phi(s,a)\rangle \approx R(s,a) + \mathbb{E}_{s^{\prime}\sim P(\cdot|s,a)}\sup_{a^{\prime}}\langle w^{k,1}_1,\phi(s^{\prime},a^{\prime})\rangle$. Therefore, we write $w^{k+1,1}_1 = \mathcal{T}(w^{k,1}_1) + \epsilon_k(w^{k,1}_1)$, where $\mathcal{T}$ is the $\gamma$ contractive Bellman operator whose unique fixed point is $w^{*}$ and $\epsilon_k$ is the noise to be controlled. Following a similar setting in in~\citep{jain2021streaming}, we control $\epsilon_k$ with the following steps:
\\
(1)  We introduce a fictitious coupled process (see Section~\ref{sec:coupling}) $(\tilde{s}_t,\tilde{a}_t,\tilde{r}_t)$ where the data in different buffers are \emph{exactly} independent (since the gaps of size $u$ make the buffers \emph{approximately} independent) and show that the algorithm run with the fictitious data has the same output as the algorithm run with the actual data with high probability when $u$ is large enough.\\
(2) We give a bias-variance decomposition (Lemma~\ref{lem:bias_variance}) for the error $\epsilon_k$ where the exponentially decaying bias term helps forget the initial condition and the variance term arises due the inherent noise in the samples.\\
(3) We control the bias and variance terms separately in order to ensure that the noise $\epsilon_k$ is small enough. RER plays a key role in controlling the variance term by endowing it with a super-martingale structure, which is not possible with forward order traversal (see Theorem~\ref{thm:linear_variance}).

The procedure described above allows us to show that $w^{k+1,1}_1 \approx \mathcal{T}(w^{k,1}_1)$ uniformly for $k \leq K$, which directly gives us a convergence bound to the fixed point of $\mathcal{T}$ i.e, $w^{*}$ (Theorem~\ref{thm:main_linear}). In the tabular case, the approximate Bellman iteration connects to the analysis of \emph{synchronous} Q-learning in \citep{li2021q}, which allows us to obtain a better convergence guarantee (Theorem~\ref{thm:main_tabular}). To obtain convergence guarantees for \qrexdr, we first observe that if we re-use the data used in outer-loop iteration 1 in all future outer-loop iterations $k > 1$, $\epsilon_k(w^{k,1}_1)$ might not be small since $w^{k,1}_1$ depends on $\epsilon_k(\cdot)$. However, $(w^{k,1}_1)_k$ approximates the deterministic path of the noiseless Bellman iterates: $\bar{w}^{1,1}_1 := w^{1,1}_1$  and $\bar{w}^{k+1,1}_1 := \mathcal{T}(\bar{w}^{k,1}_1)$. Since $\|\epsilon_k(w^{k,1}_1)\|_{\infty} \leq \|\epsilon^{k}(w^{k,1}_1) -\epsilon^{k}(\bar{w}^{k,1}_1) \|_{\infty} + \|\epsilon^{k}(\bar{w}^{k,1}_1)\|_{\infty} $, we argue inductively that $\|\epsilon^{k}(w^{k,1}_1) -\epsilon^{k}(\bar{w}^{k,1}_1) \|_{\infty} \approx 0$ since $w^{k,1}_1 \approx \bar{w}^{k,1}_1$ and $\|\epsilon^{k}(\bar{w}^{k,1}_1)\|_{\infty} \approx 0$ since $\bar{w}^{k,1}_1$ is a deterministic sequence and hence $w^{k+1,1}_1 \approx \bar{w}^{k+1,1}_1$.
\section{Discussion}
In this work, we gave a finite-sample analysis for heuristics which are heavily used in practical Q-learning and showed that seemingly simple modifications can have far reaching consequences in both tabular and linear MDP settings. We also resolve some well known counter-examples to Q-learning with linear function approximation \citep{baird1995residual}. Further research is needed to extend these results to the most general setting of \citep{Diogo-CoupledQ,Sutton-GreedyGQ} where linear approximation maybe highly misspecified, which often leads to instability of the algorithm. Since Q-learning is mostly used in practice with non-linear function approximators like neural networks, it would be interesting to analyze such scenarios \citep{xu2020finite,cai2019neural} precisely without simplifying assumptions like in \citep{fan2020theoretical}. Another important direction is to understand on-policy algorithms (like SARSA) with the modifications suggested in this work, where the agent has the additional task of exploring the state-space while learning the optimal policy and obtain a regret analysis like in \citep{ChiJin-LFA}.



\subsubsection*{Acknowledgments}
D.N. was supported in part by NSF grant DMS-2022448 and part of this work was done when D.N. was a visitor at the Simons Institute for Theory of Computing, Berkeley. We would also like to thank Gaurav Mahajan for introducing us to low-inherent Bellman error setting, and providing intuition that our technique might be applicable in this more general setting (than linear MDP) as well.

\clearpage
\bibliography{refs.bib}
\bibliographystyle{format/ICLR2022/iclr2022_conference}
\clearpage
\appendix
\section{Experiments}
\label{sec:experiments}

Even though OTL has a stabilizing effect on the Q-iteration, it reduces the rate of bias decay since the values are not updated for a long time. Therefore, the success of our procedure depends on picking the right values for the parameters $N,B$ and \textbf{Option I} vs. \textbf{Option II}. However, under the right conditions the algorithms which include OTL+RER converge to a much smaller final error as illustrated by the examples we provide in this section. If a better sample complexity is desired, then \qrexdr~can be used as shown by Theorem~\ref{thm:tabular_reuse}. Further research is needed to identify the practical conditions such as (function approximation coarseness, MDP reward structure etc.) under which techniques like OTL+RER help. 
\paragraph{100-State Random Walk }
We first consider the episodic MDP \citep[example 6.2]{sutton2018reinforcement} -- but with $100$ states instead of $1000$. Here the agent can either move left or right on a straight line, receiving a reward of $0$ at each step. Reaching the right terminal point ends the episode with reward $1$, while the left terminal point ends the episode with reward $0$. In each episode, the initial point is uniformly random and the offline policy chooses right and left directions uniformly at random. We use state aggregation to obtain a linear function representation \citep{sutton2018reinforcement} with total $10$ aggregate states. Along with $2$ actions, this leads to a $20$ dimensional embedding. We compare vanilla Q-learning, OTL+ER+Q-learning and \epiqrex~ (\textbf{Option II}, $N= 1$). The same step size ($0.01$) was chosen for all the algorithms. OTL+ER+Q has the same structure as \epiqrex~ and was run with the same parameters as \epiqrex~. The main difference between the two algorithms is that OTL+ER+Q processes the experiences collected in each episode in a random order processing instead of reverse order. Refering to Figure~\ref{fig:state_agg}, we note that the bias decay for \epiqrex~and OTL+ER+Q are slower than vanilla Q-learning due to the online target structure. However \epiqrex~converges to a better solution than the other algorithms.


\paragraph{Mountain Car}
We run an online control type experiment with the Mountain car problem \citep[Example 10.1]{sutton2018reinforcement}. The task here is to control the car and help it reach the correct peak (which ends the episode) as soon as possible (i.e, terminate the episode with fewest steps). The agent receives a reward of $-1$ unless the correct peak is reached. Here we run \epiqrex~with \textbf{Option I} and $N=1$, OTL+ER+Q-learning and vanilla Q-learning. The $k$-th episode is generated with the policy at the end of $k-1$-th episode for each of the three algorithms. We use a $ n= 4$ tile coding to represent the Q values for a given action, each of which has $4\times 4$ squares. The step-size was picked as $0.1/n$ and the result was averaged over $500$ runs of the experiment. We refer to Figure~\ref{fig:mountain_car} for the outcomes. For the last $300$ episodes, the mean episode length for Vanilla Q-learning was $145$, \epiqrex~was $136$ and OTL+ER+Q-learning was $143$ (rounded to the nearest integer).
\paragraph{Grid-World}

We consider the grid-world problem which is a tabular MDP \citep[Example 3.5]{sutton2018reinforcement}, which is a continuing task. Here, an agent can walk north, south, east or west in a $5\times 5$ grid. Trying to fall off the grid accrues a reward of $-1$, while reaching certain special states accrues a reward of $10$ or $5$. In our example, we also add a $\mathsf{unif}[-0.5,0.5]$ noise to all rewards to make the problem harder. We run the grid world experiment with  the discount factor $\gamma = 0.9$ and step size $0.05$ for Vanilla Q-learning, \qrex~ (\textbf{Option II}, $B = 3000$, $N=1$) and OTL+ER+Q-learning which is run with the same parameters as \qrex~but with random samples from the buffer. We run the experiment $30$ times and plot the error in the Q-values vs. time in Figure~\ref{fig:gridworld}.


\begin{figure}[ht]
\begin{minipage}[b]{.45\textwidth}
\centering
\includegraphics[width=1\textwidth,height = 0.6\textwidth]{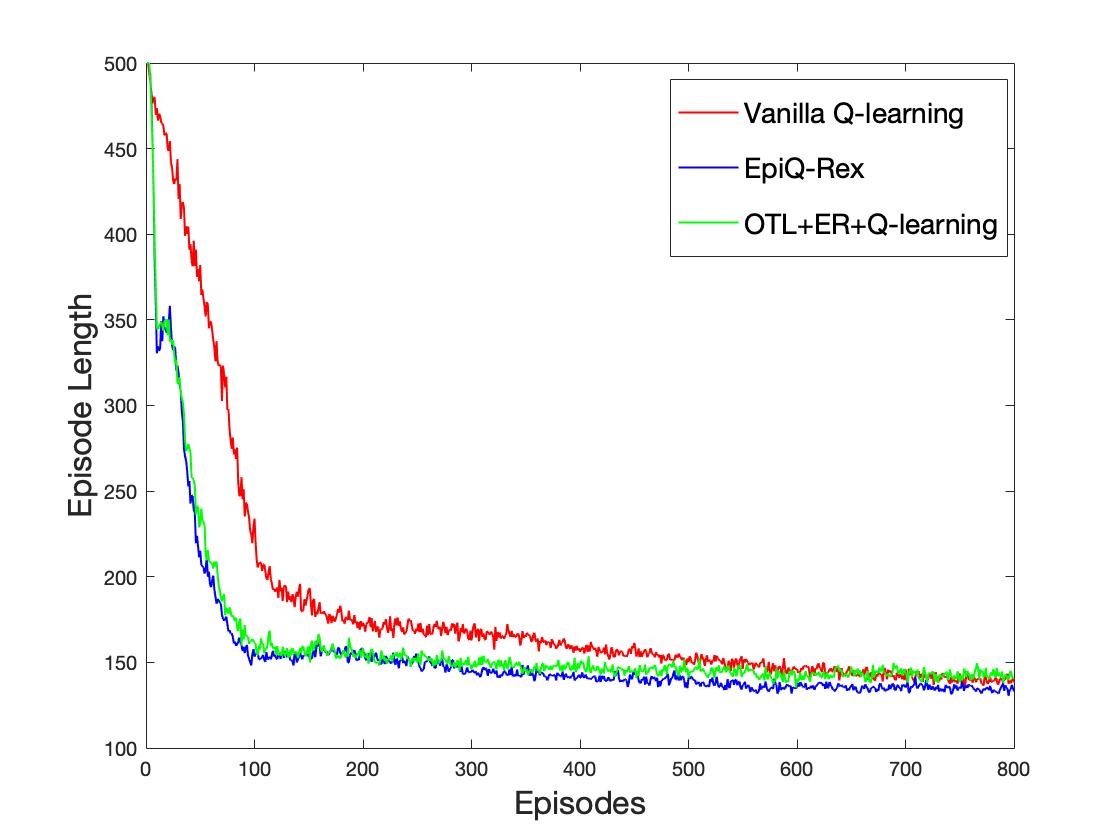}
\caption{Mountain Car problem} 
\label{fig:mountain_car}
\end{minipage}
\hfill
\begin{minipage}[b]{.45\textwidth}
\centering
\includegraphics[width=1\textwidth,height = 0.6\textwidth]{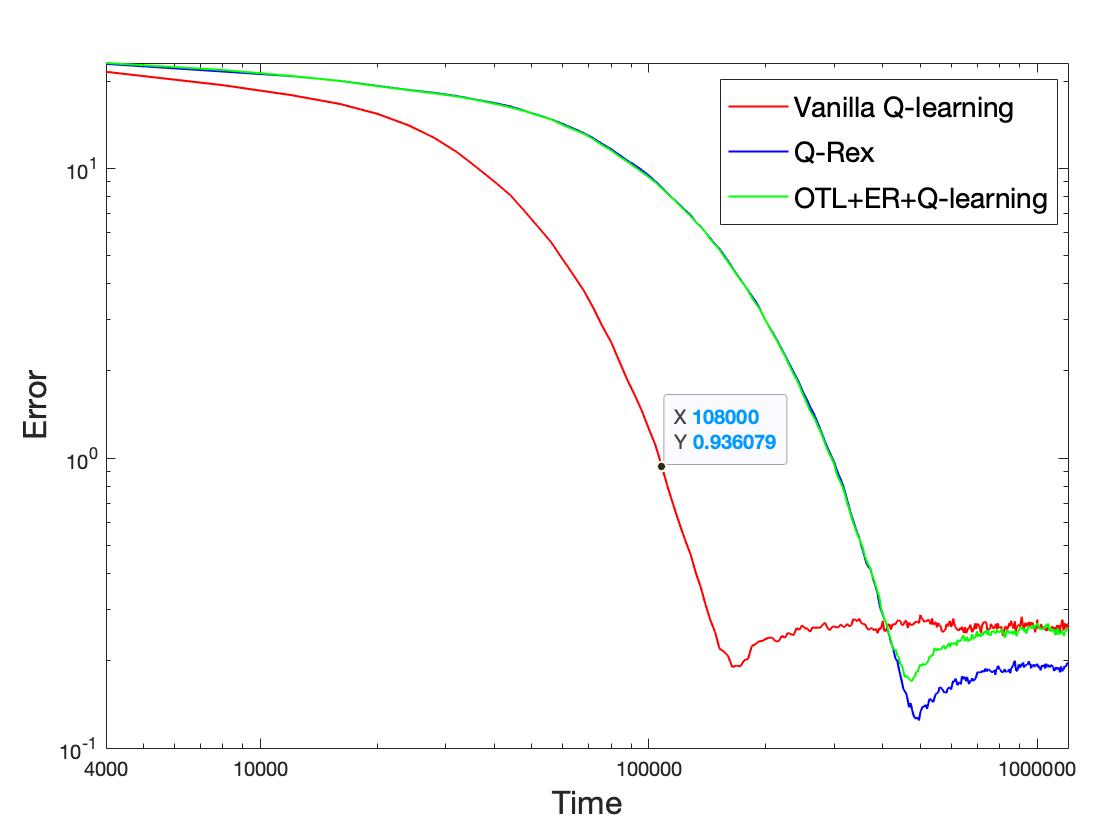}
\caption{Grid-world problem}
\label{fig:gridworld}
\end{minipage}
\vfill
\begin{minipage}[b]{.45\textwidth}
\centering
\includegraphics[width=1\textwidth,height = 0.6\textwidth]{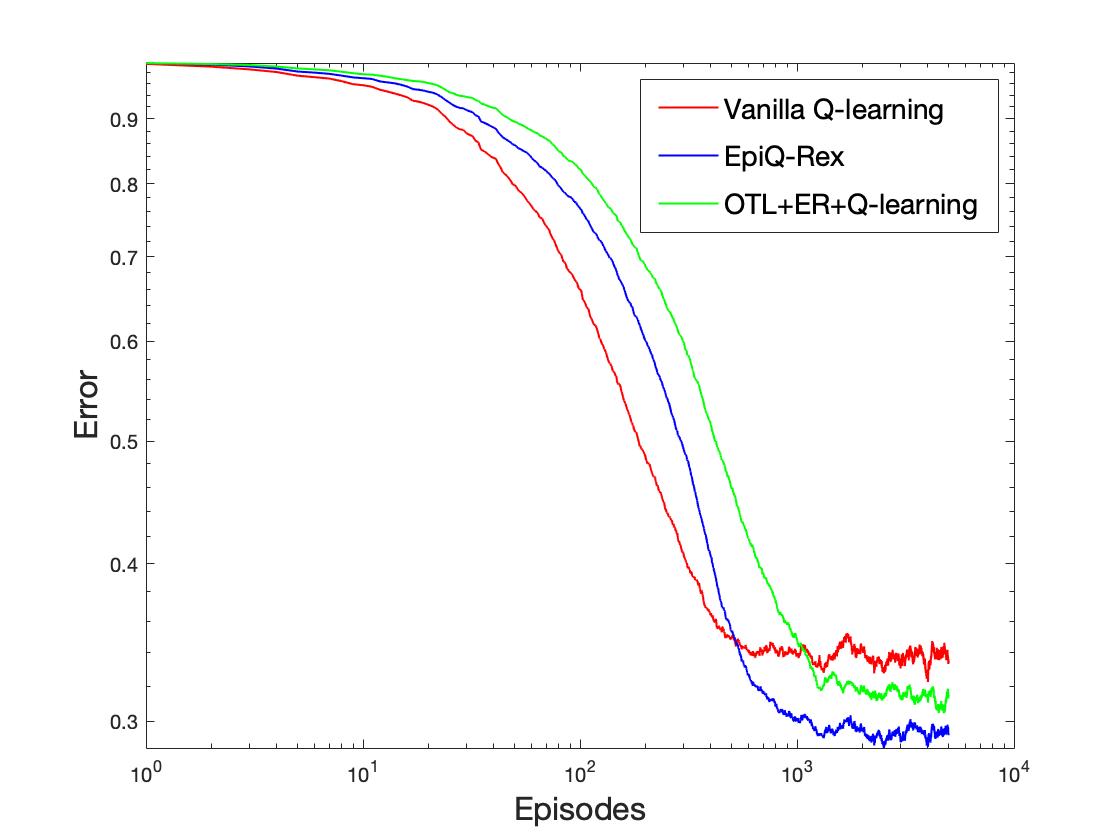}
\caption{100 State Random Walk} 
\label{fig:state_agg}
\end{minipage}
\hfill
\begin{minipage}[b]{.45\textwidth}
\centering
\includegraphics[width=1\textwidth,height = 0.6\textwidth]{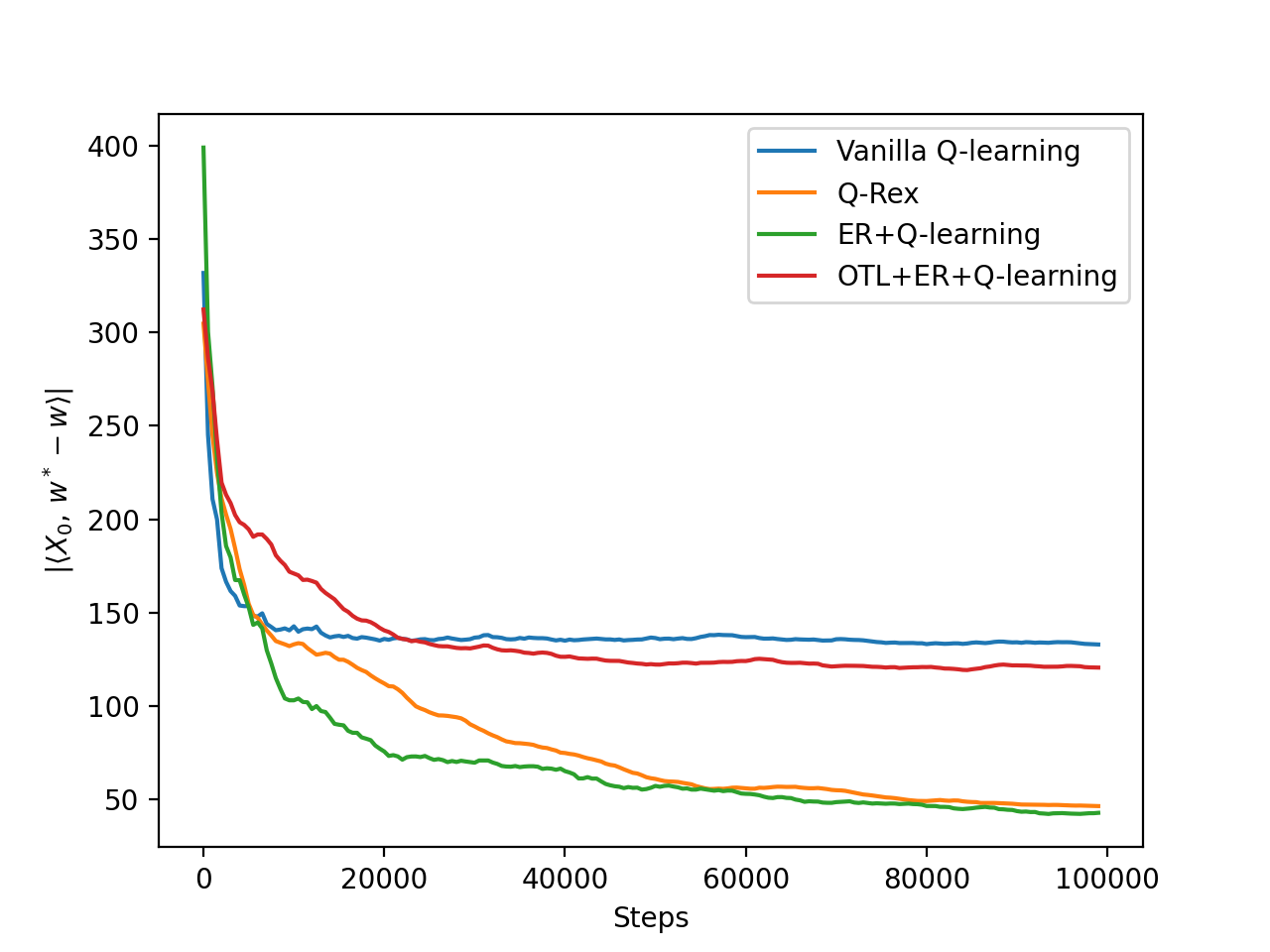}
\caption{Linear System Problem}
\label{fig:LW}
\end{minipage}
\end{figure}


\paragraph{Baird's Counter-Example}

Consider the famous Baird's counter-example shown in Figure~\ref{fig:baird-states}. The features corresponding to each state is shown and the reward for any transition is $0$. Thus, $w^* = 0$ is the optimal solution. Since Assumption 4 made in this work is violated for this example, we consider the analog of the problem where at each step, a state is sampled uniformly at random and the corresponding transition, along with the $0$ reward is used to learn the vector $w$.
In the experiment, we set the problem and algorithmic parameters to be $\gamma = 0.99$ and $\eta = 0.01/\sqrt{5}$ (the factor of $\sqrt{5}$ normalizes the features to satisfy Assumption 1). We set $B=50$, $u=0$, and $N=5$ since there is no need for keeping a gap between the buffers in this experiment.
Figure~\ref{fig:BAIRD} shows the non-convergent behavior of vanilla Q-learning while OTL-based Q-learning converges. Note that since the sampling of state, action and reward is done in a uniformly random fashion, it is not relevant to use reverse experience replay. It is easy to see that with data reuse, we only need few samples to ensure all states are covered; the rate of convergence would be same as that of OTL+Q-learning.

\begin{figure}[ht]
\begin{minipage}[b]{.45\textwidth}
\centering
\includegraphics[width=1\textwidth]{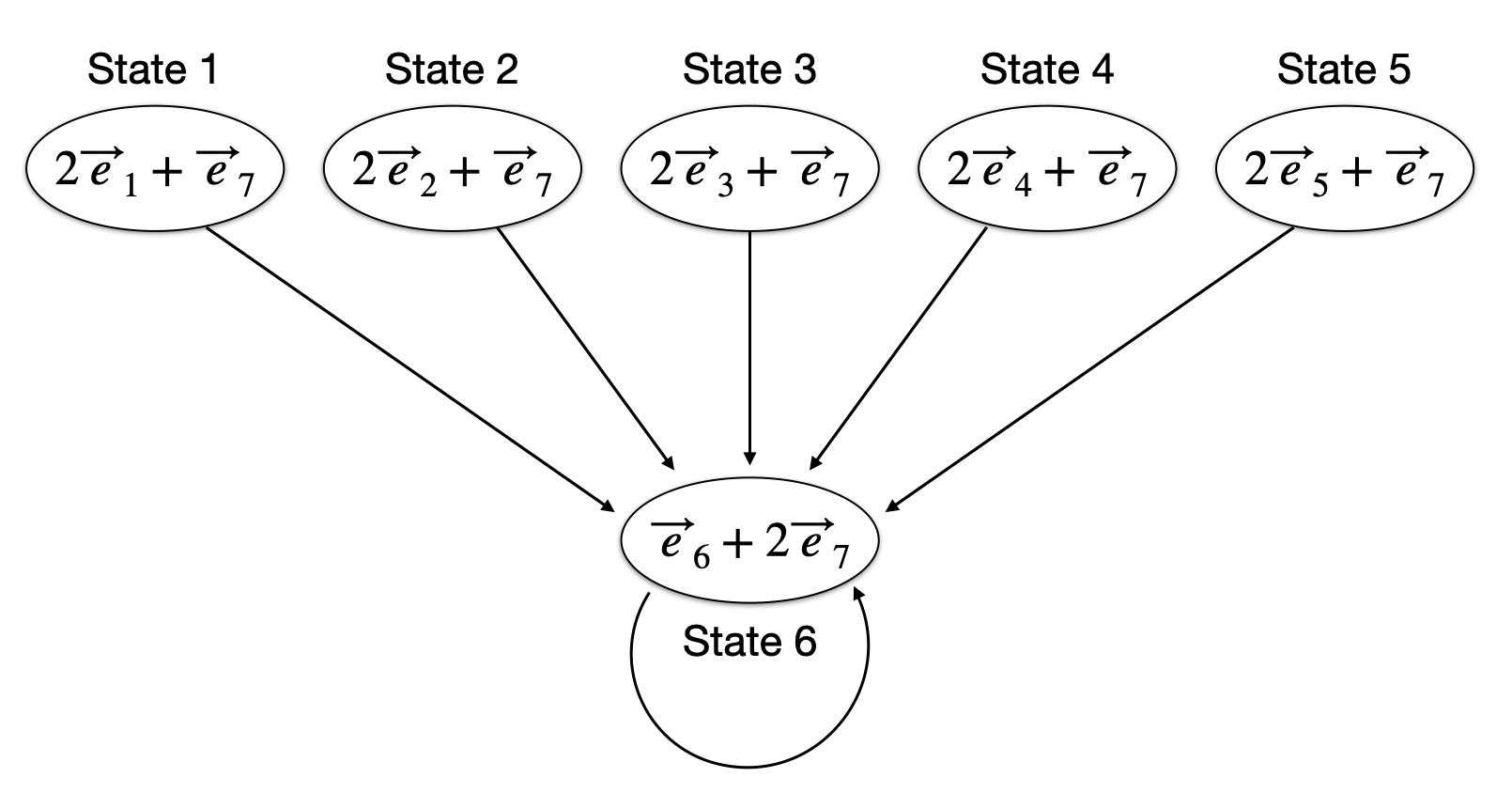}
\caption{Feature embedding in the modified Baird's counter-example; $\vec{e_i}$ represents the $i$-th canonical basis vector in $\mathbb{R}^7$.
    Each transition shown occurs with probability $1$.}
\label{fig:baird-states}
\end{minipage}
\hfill
\begin{minipage}[b]{.45\textwidth}
\centering
\includegraphics[width=1\textwidth]{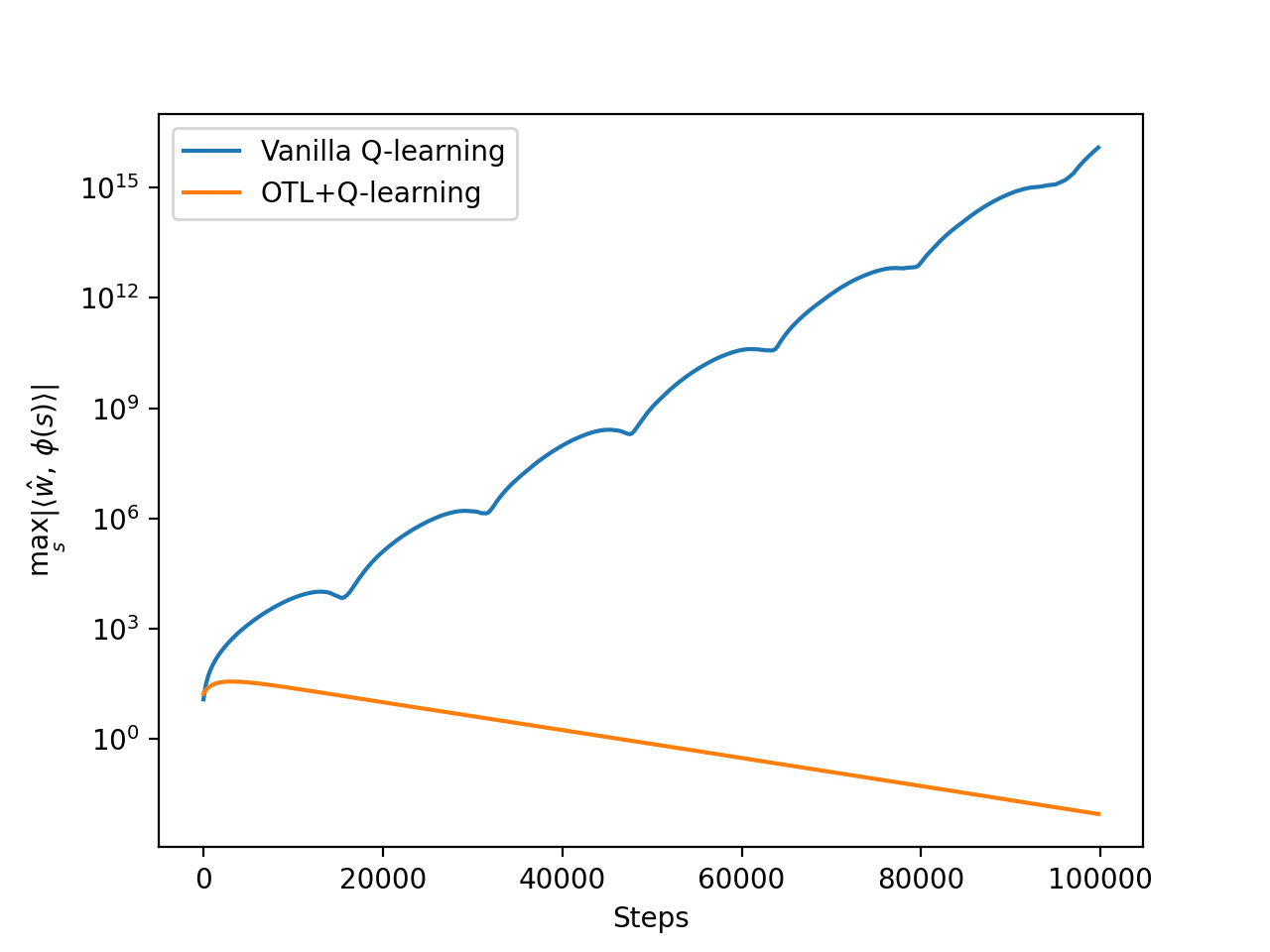}
\caption{Performance of vanilla Q-learning as compared to OTL+Q-learning, averaged over 10 independent runs}
\label{fig:BAIRD}
\end{minipage}
\end{figure}


\paragraph{Linear Dynamical System with Linear Rewards}

We also compare the algorithms on a linear dynamical systems problem. While the problem described next is strictly not a linear MDP, the value functions for certain policies can be written as a linearly in terms of the initial condition; this is made precise next. 
Consider a linear dynamical system with initial state being $X_0 \in \mathbb{R}^d$. The state evolves as $X_{t+1} = AX_t + \eta_t$, where $A \in \mathbb{R}^{d\times d}$. The reward obtained for such transition is given by $\langle X_t, \, \theta \rangle$ for a fixed $\theta \in \mathbb{R}^d$. The maximum singular value of $A$ is chosen less than $1$ to ensure that the system is stable. 
The infinite horizon $\gamma$-discounted expected reward is given by $\langle X_0, \sum_{i=0}^\infty (\gamma A^T)^i \theta \rangle$. Thus, the expected reward can be written as $\langle X_0, w^*  \rangle$, where $$w^* = (I-\gamma A^T)^{-1} \theta.$$
Since there are no actions in this case, the Q learning algorithms reduce to value function approximation (i.e, TD(0) type algorithms). We take the embedding $\phi(X_t) = X_t$, the identity mapping. 
We considered $d=5$, $\gamma = 0.99$, $\eta=0.01$, and a randomly generated normal matrix $A$ and $\theta$.
We \text{Option II} for \qrex~along with $B=75$, $u=25$, and $N=5$ for the experiments. For OTL+ER+Q-learning, we keep the same parameters, but with random order sampling, while ER+Q-learning does not include OTL. The results  shown in Figure~\ref{fig:LW} are averaged over 100 independent runs
of the experiment. Note that the errors considered are
using iterate averaging. 

As seen from Figure~\ref{fig:LW}, Q-Rex outperforms vanilla Q-learning and OTL+ER-based Q-learning as one would expect based on the theory presented in this work.
However, it is interesting to note that ER+Q-learning performs slightly better than Q-Rex. One possible reason could be due to the fact that in Q-Rex, the target gets updated at a slower rate at the cost of reducing bias. However, setting a smaller value of $NB$ might resolve the issue. 

\section{Definitions and Notations}
\label{sec:def_not}

\subsection{\qrexdr}
The psuedocode for \qrexdr~is given in Algorithm~\ref{alg:2-loop-rer_dr}. Note that we reuse the sample data in every outer-loop iteration instead of drawing fresh samples.
\begin{algorithm}[h!]

	\caption{\qrexdr}
     \label{alg:2-loop-rer_dr}
\begin{algorithmic}[1]
\State \textbf{Input:} learning rates $\eta$, horizon $T$, discount factor $\gamma$, trajectory $X_t = \{s_t,a_t,r_t\}$, number of outer-loops $K$
buffer size $B$, buffer gap $u$


\State Total buffer size: $S\leftarrow B+u$
\State Number of buffers loops: $N \leftarrow T/S$
\State $w_1^{1,1} = 0$ initialization
\For{$k = 1,\ldots,K$}
    \For{$j = 1,\ldots,N$}
    \State Form buffer $\textsf{Buf} =\{X^{k,j}_1, \dots, X^{k,j}_{S}\}$, where, \[X^{k,j}_{i}\leftarrow X_{S(j-1)+i} \]
    \State Define for all $i \in [1,S]$, $\phi_i^{k,j} \triangleq \phi(s_i^{k,j}, a_i^{k,j})$.
    \For{$i=1,\ldots,B$}
        \State $w_{i+1}^{k,j} = w_{i}^{k,j} +  \eta \left[
    r_{B+1-i}^{k,j} + \gamma \max_{a' \in \cA} \inp{\phi(s_{B+2-i}^{k,j},a')}{w_{1}^{k,1}} - \inp{\phi_{B+1-i}^{k,j}}{w_{i}^{k,j}}
     \right] \phi_{B+1-i}^{k,j}$
     
    \EndFor
    
    \State $w^{k,j+1}_1 = w^{k,j}_{B+1}$
    \EndFor
    \State $w^{k+1,1}_1 = w^{k,N+1}_{1}$
    
\EndFor
\State \textbf{Return} $ w^{T+1}_{0}$
\end{algorithmic}
\end{algorithm}

\subsection{\epiqrex}
The psuedocode for \epiqrex~is given in Algorithm~\ref{alg:2-loop-rer_epi}. Note that the buffers here are individual episodes and can vary in size due to inherent randomness. We do not require a gap here since separate episodes are assumed to be independent. 
\begin{algorithm}[h!]

	\caption{\epiqrex}
     \label{alg:2-loop-rer_epi}
\begin{algorithmic}[1]
\State \textbf{Input:} learning rates $\eta$, number of episodes $T$, discount factor $\gamma$, Epsiodes $E_t = \{(s^{t}_1,a^{t}_1,r^{t}_1),\dots,(s^{y}_{B_{t}},a^{t}_{B_{t}},r^{t}_{B_{t}})\}$, Number of buffer per outer-loop iteration N.


\State Number of outer loop iterations: $K \leftarrow T/N$
\State $w_1^{1,1} = 0$ initialization
\For{$k = 1,\ldots,K$}
    \For{$j = 1,\ldots,N$}
    \State $t\leftarrow (k-1)*N +j$
    \State Collect experience and form buffer $\textsf{Buf} =\{X^{t}_1, \dots, X^{t}_{B_t}\}$, where, \[X^{t}_{i}= (s^{t}_i,a^{t}_i,r^{t}_i)\]
    \State Define for all $i \in [1,B_t-1]$, $\phi_i^{k,j} \triangleq \phi(s_i^{k,j}, a_i^{k,j})$.
    \For{$i=1,\ldots,B$}
        \State $w_{i+1}^{k,j} = w_{i}^{k,j} +  \eta \left[
    r_{B_t-i}^{k,j} + \gamma \max_{a' \in \cA} \inp{\phi(s_{B_t+1-i}^{k,j},a')}{w_{1}^{k,1}} - \inp{\phi_{B_t-i}^{k,j}}{w_{i}^{k,j}}
     \right] \phi_{B_t-i}^{k,j}$
     
    \EndFor
    
    \State \textbf{Option I:} $w^{k,j+1}_1 = w^{k,j}_{B_t}$
    \State \textbf{Option II:}$w^{k,j+1}_1 = \frac{1}{B_t-1}\sum_{i=2}^{B_t}w^{k,j}_{i}$
    \EndFor
    \State $w^{k+1,1}_1 = w^{k,N+1}_{1}$
    
\EndFor
\State \textbf{Return} $ w^{K,1}_{1}$
\end{algorithmic}
\end{algorithm}

\paragraph{MDP definition}
Here we construct the MDP with a (possibly) random reward $r$.
We consider non-episodic, i.e. infinite horizon, time homogenous Markov Decision Processes (MDPs) and we denote an MDP by MDP($\cS,\cA,\gamma,P,r$) where 
$\cS$ denotes the state space, $\cA$ denotes the action space, $\gamma$ represents the discount factor, $P(s'|s,a)$ represents the probability of transition to state $s'$ from the state $s$ on action $a$. We assume for purely technical reasons that $\mathcal{S}$ and $\mathcal{A}$ are compact subsets of $\mathbb{R}^{n}$ for some $n\in \mathbb{N}$). $r$ is a reward process (not to be confused with Markov Reward Processes i.e, MRP) indexed by $\mathcal{S}\times\mathcal{A}\times\mathcal{S}$, such that $r(s,a,s^{\prime}) \in [0,1]$ almost surely. We will skip the measure theoretic details of the definitions and assume that the sequence of i.i.d. reward processes $(r_t)_{t\in \mathbb{N}}$ can be jointly defined over a Polish probability space. The function $R:\cS \times \cA\to [0,1]$ represents the deterministic reward $R(s,a)$ obtained on taking action $a$ at state $s$ and is defined by $R(s,a) = \mathbb{E}\left[\mathbb{E}_{s^{\prime}\sim P(\cdot|s,a)}r(s,a,s^{\prime})\right]$. Here the expectation is with respect to both the randomness in the reward process and in state $s^{\prime}$.

Now, given a trajectory, $(s_t,a_t)_{t=1}^{T}$, which is independent of i.i.d sequence of rewards processes $(r_t)_{t=1}^{T}$. We observe $(s_t,a_t,r_t(s_t,a_t,s_{t+1}))_{t=1}^{T}$, and we will henceforth denote $r_t(s_t,a_t,s_{t+1})$ by just $r_t$.

 \paragraph{Notation}  
Due to three loop nature of our algorithm it will be convenient to define some simplifying notation. To this end, consider the outer loop with index $k \in [K]$ and buffer number $j \in [N]$ inside this outer loop. Further given an $i \in [S]$ define a time index as $t_{i}^{k,j} = NS(k-1)+S(j-1)+i$. We now denote the $i$-th state-action-reward tuple inside this buffer by \[(s^{k,j}_i,a^{k,j}_i,r^{k,j}_i) = (s_{t_{i}^{k,j}},a_{t_{i}^{k,j}},r_{t_{i}^{k,j}}).\]
Similarly, $R^{k,j}_i = R(s^{k,j}_i,a^{k,j}_i)$. For conciseness, we define for all $i,j,k$, $\phi^{k,j}_i := \phi(s^{k,j}_i, a^{k,j}_i)$. Since we are processing the data in the reverse order within the buffer, the following notation will be useful for analysis:\[  s^{k,j}_{-i} := s^{k,j}_{B+1-i}, a^{k,j}_{-i} := a^{k,j}_{B+1-i}, r^{k,j}_{-i} := r^{k,j}_{B+1-i}, R^{k,j}_{-i} := R^{k,j}_{B+1-i} \text{ and } \phi^{k,j}_{-i} := \phi^{k,j}_{B+1-i}.\] 

\section{Coupled Process}
\label{sec:coupling}
It can be seen that the buffers are approximately i.i.d. whenever we take $u = O(\tmix \log \tfrac{T}{\delta})$ whenever Assumption~\ref{assump:mixing} is satisfied. For the sake of clarity of analysis, we will consider exactly independent buffers. That is, we assume the algorithms are run with a fictitious trajectory $(\tilde{s}_t,\tilde{a}_t,\tilde{r}_t)$, where we assume that the fictitious trajectory is generated such that the first state of every buffer is sampled from the stationary distribution $\mu$. We show that we can couple this fictitious process (i.e, define it on a common probability space as the original process $(s_t,a_t,r_t)$) such that 
\begin{equation}\label{eq:coupling}
    \mathbb{P}\left(\cap_{k=1}^{K}\cap_{j=1}^{N}\cap_{i=1}^{B+1}\{(s^{k,j}_i,r^{k,j}_i,a^{k,j}_i) = (\tilde{s}^{k,j}_i,\tilde{r}^{k,j}_i,\tilde{a}^{k,j}_i)\}\right) \geq 1-\delta\,.
\end{equation}
Notice that the equality does not hold within the gaps between the buffer which are of size $u$ but inside the buffers of size $B$ only. That is, the sequence of iterates obtained by running the algorithm with the original data $(s_t,a_t,r_t)$ is the same as the sequence of iterates obtained by running the algorithm with the fictitious coupled data $(\tilde{s}_t,\tilde{a}_t,\tilde{r}_t)$ with high probability. We state this result formally in Lemma~\ref{lem:coupling_lemma} and prove it in Section \ref{sec:tech_lemma_coupling}. Henceforth, we will assume that we run the algorithm with data $(\tilde{s}_t,\tilde{a}_t,\tilde{r}_t)$ and refer to Lemma~\ref{lem:coupling_lemma} to carry over the results to the original data set with high probability. We analogously define $\tilde{\phi}^{k,j}_i$. We will denote the iterates of the algorithm run with the coupled trajectory as $\tilde{w}^{k,j}_i$ instead of $w^{k,j}_i$ and will focus on it entirely. We now provide some definitions based on the above process. These definitions will be used repeatedly in our analysis.

\section{Basic Structural Lemmas}
We first note some basic structural lemmas regarding \zibe~MDPs under the assumptions in Section~\ref{sec:assumptions}. We refer to Section~\ref{sec:tech_lemmas} for the proofs.
 \begin{lemma}\label{lem:uniform_iterate}
For tabular MDPs satisfying the assumptions in Section~\ref{sec:assumptions}, for both \qrex~ and \qrexdr, we have that for every $k,j,i$ we have that $\|\tilde{w}^{k,j}_i\|_{\phi} \leq \frac{1}{1-\gamma}$
\end{lemma}

The lemma above says, in particular, that the Q-value estimate given by our algorithm never exceeds $\frac{1}{1-\gamma}$ due to $0$ initialization. The proof is a straightforward induction argument, which we omit. We will henceforth use Lemma~\ref{lem:uniform_iterate} without explicitly mentioning it. 
 \begin{lemma}\label{lem:phi_norm}
Suppose $Q_w(s,a) := \langle w,\phi(s,a) \rangle$ and let $Q^{*}(s,a) = \langle w^{*},\phi(s,a)\rangle$ be the optimal $Q$ function. Then: 
$$\sup_{(s,a)\in \mathcal{S}\times\mathcal{A}}|Q_w(s,a) - Q^{*}(s,a)| = \|w-w^{*}\|_{\phi}$$

Moreover, we must have:
$\|x\| \geq \|x\|_{\phi} \geq \frac{\|x\|}{\sqrt{\kappa}}$ for any $x \in \mathbb{R}^d$.
\end{lemma}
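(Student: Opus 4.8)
The plan is to prove the two claims separately, as the first is essentially definitional and the second splits into an upper and a lower bound. First I would handle the identity $\sup_{(s,a)}|Q_w(s,a)-Q^{*}(s,a)| = \|w-w^{*}\|_{\phi}$. By bilinearity of the inner product, for every $(s,a)$ we have $Q_w(s,a)-Q^{*}(s,a) = \langle w,\phi(s,a)\rangle - \langle w^{*},\phi(s,a)\rangle = \langle w-w^{*},\phi(s,a)\rangle$. Taking absolute values and then the supremum over $(s,a)\in\mathcal{S}\times\mathcal{A}$, and invoking the definition $\|x\|_{\phi} = \sup_{(s,a)}|\langle\phi(s,a),x\rangle|$ with $x = w-w^{*}$, yields the claimed equality directly.

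For the two-sided bound on $\|x\|_{\phi}$, I would treat the upper and lower inequalities independently. The upper bound $\|x\|_{\phi}\le\|x\|$ is a one-line Cauchy--Schwarz argument: for each $(s,a)$, $|\langle\phi(s,a),x\rangle| \le \|\phi(s,a)\|\,\|x\| \le \|x\|$, using $\|\phi(s,a)\|\le 1$ from Assumption~\ref{assump:norm_bounded}; taking the supremum over $(s,a)$ gives the bound. The lower bound is where the structure of the problem enters, and the key idea is to pass through the second-moment matrix controlled by Assumption~\ref{assump:well_condition}. By definition of $\|\cdot\|_{\phi}$ we have $|\langle\phi(s,a),x\rangle| \le \|x\|_{\phi}$ for every $(s,a)$, so squaring and averaging under the stationary distribution gives $\mathbb{E}_{(s,a)\sim\mu}\langle\phi(s,a),x\rangle^{2} \le \|x\|_{\phi}^{2}$. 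On the other hand, $\mathbb{E}_{(s,a)\sim\mu}\langle\phi(s,a),x\rangle^{2} = x^{\top}\bigl(\mathbb{E}_{(s,a)\sim\mu}\phi(s,a)\phi^{\top}(s,a)\bigr)x \ge \tfrac{1}{\kappa}\|x\|^{2}$ by Assumption~\ref{assump:well_condition}. Chaining these two displays gives $\tfrac{1}{\kappa}\|x\|^{2} \le \|x\|_{\phi}^{2}$, i.e. $\|x\|_{\phi}\ge \|x\|/\sqrt{\kappa}$, as required.

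I do not expect a genuine obstacle here, since every step is elementary. The only point that requires a moment of care is the lower bound: one must recognize that the pointwise supremum defining $\|\cdot\|_{\phi}$ dominates the $\mu$-expectation of the squared inner product, which is precisely the device that converts the minimum-eigenvalue guarantee on the averaged feature covariance $\mathbb{E}_{\mu}\phi\phi^{\top}\succeq I/\kappa$ into a lower bound on the sup-type norm $\|\cdot\|_{\phi}$. This explains why $\|\cdot\|_{\phi}$ is the natural norm for the analysis: it is sandwiched between the Euclidean norm and a $\sqrt{\kappa}$-scaled version of it, so convergence in $\|\cdot\|_{\phi}$ (equivalently, uniform convergence of the $Q$-estimates by the first part) is equivalent to convergence in Euclidean norm up to the conditioning factor $\kappa$.
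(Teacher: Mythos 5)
Your proof is correct and follows essentially the same route as the paper's: the first identity is definitional, the upper bound is Cauchy--Schwarz with $\|\phi(s,a)\|\le 1$, and the lower bound passes through $\|x\|_{\phi}^2 \ge \mathbb{E}_{(s,a)\sim\mu}\langle\phi(s,a),x\rangle^2 \ge \|x\|^2/\kappa$ exactly as in the paper, just written out in more detail.
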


\begin{lemma}\label{lem:t_operator}
For any $w_0 \in \mathbb{R}^d$, there exists a unique $w_1 \in \mathbb{R}^d$ such that $$\langle w_1,\phi(s,a)\rangle = R(s,a) + \gamma \mathbb{E}_{s^{\prime} \sim P(\cdot|s,a)} \sup_{a^{\prime}\in A} \langle \phi(s^{\prime},a^{\prime}),w_0 \rangle.$$
We will denote this mapping $w_0 \to w_1$ by $w_1 = \mathcal{T}(w_0)$.
\end{lemma}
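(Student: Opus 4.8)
The plan is to separate the two claims, with uniqueness being immediate and existence being the crux. \textbf{Uniqueness} first: if both $w_1$ and $w_1'$ satisfy the stated identity, then $\langle \phi(s,a), w_1 - w_1'\rangle = 0$ for every $(s,a) \in \mathcal{S}\times\mathcal{A}$, so $w_1 - w_1'$ is orthogonal to $\mathsf{span}(\Phi)$. Since Assumption~\ref{assump:span} guarantees $\mathsf{span}(\Phi) = \mathbb{R}^d$, this forces $w_1 = w_1'$. Equivalently, $\|w_1 - w_1'\|_{\phi} = 0$, and $\|\cdot\|_{\phi}$ is a genuine norm on $\mathbb{R}^d$ by the same span condition (consistent with the lower bound in Lemma~\ref{lem:phi_norm}).

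For \textbf{existence}, fix $w_0$ and define the Bellman target $g(s,a) := R(s,a) + \gamma \mathbb{E}_{s'\sim P(\cdot|s,a)}\sup_{a'\in\mathcal{A}}\langle \phi(s',a'), w_0\rangle$. This is well defined and bounded: $R(s,a)\in[0,1]$ by Assumption~\ref{assump:norm_bounded}, and $\|\phi\|_2 \le 1$ yields $|\sup_{a'}\langle \phi(s',a'),w_0\rangle| \le \|w_0\|_2$. The key observation is that $g$ is exactly the function against which the inner infimum in Definition~\ref{def:IBE} is taken when the outer variable is set to $\theta = w_0$. Since $\mathsf{IBE}(\mathcal{M}) = 0$ and the outer supremum over $\theta$ of a nonnegative quantity vanishes, we obtain in particular $\inf_{\theta'} h(\theta') = 0$, where $h(\theta') := \sup_{(s,a)}|\langle \phi(s,a),\theta'\rangle - g(s,a)|$. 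The content of the lemma is that this infimum is \emph{attained} and equals $0$.

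To upgrade ``infimum $=0$'' to ``minimum $=0$'', I would take a minimizing sequence $\theta'_n$ with $h(\theta'_n)\to 0$ and show it is Cauchy in $\|\cdot\|_{\phi}$ via the triangle inequality
$$\|\theta'_n - \theta'_m\|_{\phi} = \sup_{(s,a)}|\langle \phi(s,a),\theta'_n - \theta'_m\rangle| \le h(\theta'_n) + h(\theta'_m) \to 0.$$
By Lemma~\ref{lem:phi_norm}, $\|\cdot\|_{\phi}$ is equivalent to the Euclidean norm (it is sandwiched between $\|\cdot\|_2$ and $\|\cdot\|_2/\sqrt{\kappa}$), so $(\mathbb{R}^d,\|\cdot\|_{\phi})$ is complete and $\theta'_n \to w_1$ for some $w_1\in\mathbb{R}^d$. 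Finally, $h$ is $1$-Lipschitz with respect to $\|\cdot\|_{\phi}$, since $|h(\theta') - h(\theta'')| \le \sup_{(s,a)}|\langle \phi(s,a),\theta'-\theta''\rangle| = \|\theta'-\theta''\|_{\phi}$, whence $h(w_1) = \lim_n h(\theta'_n) = 0$. This says $\sup_{(s,a)}|\langle \phi(s,a),w_1\rangle - g(s,a)| = 0$, i.e.\ $\langle \phi(s,a),w_1\rangle = g(s,a)$ for all $(s,a)$, which is precisely the claimed identity and defines the map $\mathcal{T}$.

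I expect the main obstacle to be the existence half, specifically the fact that the zero inherent Bellman error condition delivers only an infimum rather than a realizable $w_1$ a priori; the resolution hinges on recognizing $h$ as (a Lipschitz perturbation of) the norm $\|\cdot\|_{\phi}$ and on the completeness and norm-equivalence supplied by Lemma~\ref{lem:phi_norm}. A minor secondary point is the measurability and finiteness of $g$, which follows from the measurability of $\phi$ and compactness of $\mathcal{A}$ in Assumption~\ref{assump:span} together with the boundedness noted above.
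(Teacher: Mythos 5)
Your proof is correct and follows the same two ingredients as the paper's proof, which is a single sentence: existence from Definition~\ref{def:IBE} and uniqueness from $\mathsf{span}(\Phi)=\mathbb{R}^d$. Your uniqueness argument is exactly the paper's, spelled out. Where you go beyond the paper is in the existence half: you correctly observe that $\mathsf{IBE}(\mathcal{M})=0$ only asserts that $\inf_{\theta'}\sup_{(s,a)}|\langle\phi(s,a),\theta'\rangle-g(s,a)|=0$, which is an infimum and not a priori attained, and you close this gap with a minimizing-sequence argument: the sequence is Cauchy in $\|\cdot\|_{\phi}$ by the triangle inequality, $\|\cdot\|_{\phi}$ is a genuine norm on $\mathbb{R}^d$ (hence complete, by finite-dimensionality or by the sandwich in Lemma~\ref{lem:phi_norm}), and the objective is $1$-Lipschitz in $\|\cdot\|_{\phi}$, so the limit attains the value $0$. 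The paper elides this attainment step entirely, so your version is strictly more complete; the only caution is that Lemma~\ref{lem:phi_norm} relies on Assumption~\ref{assump:well_condition}, whereas for this lemma the weaker fact that $\mathsf{span}(\Phi)=\mathbb{R}^d$ already makes $\|\cdot\|_{\phi}$ a norm on a finite-dimensional space, which suffices for completeness without invoking $\kappa$.
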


\begin{lemma}\label{lem:contraction_noiseless}
$\mathcal{T}: \mathbb{R}^d \to \mathbb{R}^d$ is $\gamma$ contractive in the norm $\|\cdot\|_{\phi}$. 
The unique fixed point of $\mathcal{T}$ is $w^{*}$. Moreover, we have: $\|w^{*}\|_{\phi} \leq \frac{1}{1-\gamma} $
and $\|w^{*}\| \leq \frac{\sqrt{\kappa}}{1-\gamma}$.

\end{lemma}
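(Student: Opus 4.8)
The plan is to establish the four claims in order, leaning on the operator definition in Lemma~\ref{lem:t_operator} and the norm facts in Lemma~\ref{lem:phi_norm}. For the contraction, I would take two arbitrary points $w_0, w_0' \in \mathbb{R}^d$, set $w_1 = \mathcal{T}(w_0)$ and $w_1' = \mathcal{T}(w_0')$, and evaluate their difference against an arbitrary feature $\phi(s,a)$. Since the reward term $R(s,a)$ is identical for both and cancels, one gets
\[
\langle w_1 - w_1', \phi(s,a)\rangle = \gamma\, \mathbb{E}_{s'\sim P(\cdot|s,a)}\Bigl[\sup_{a'}\langle \phi(s',a'), w_0\rangle - \sup_{a'}\langle \phi(s',a'), w_0'\rangle\Bigr].
\]
The crux is the elementary inequality $|\sup_{a'} f(a') - \sup_{a'} g(a')| \leq \sup_{a'}|f(a') - g(a')|$ applied pointwise in $s'$, which bounds the bracket by $\sup_{a'}|\langle \phi(s',a'), w_0 - w_0'\rangle| \leq \|w_0 - w_0'\|_{\phi}$. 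Bounding the expectation by this pointwise bound and then taking the supremum over $(s,a)$ yields $\|\mathcal{T}(w_0) - \mathcal{T}(w_0')\|_{\phi} \leq \gamma \|w_0 - w_0'\|_{\phi}$.

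Next I would invoke the Banach fixed-point theorem. By Lemma~\ref{lem:phi_norm}, $\|\cdot\|_{\phi}$ is equivalent to the Euclidean norm (in particular positive definite, since $\|x\|_{\phi} \geq \|x\|/\sqrt{\kappa}$), so $(\mathbb{R}^d, \|\cdot\|_{\phi})$ is a complete metric space and the $\gamma$-contraction $\mathcal{T}$ has a unique fixed point. To identify this fixed point with $w^*$, I would recall that for the MDP classes considered the optimal Q-function admits the representation $Q^*(s,a) = \langle \phi(s,a), w^*\rangle$ and satisfies the Bellman optimality equation~\eqref{eq:bellman-opt}. Substituting the representation into~\eqref{eq:bellman-opt} shows $\langle w^*, \phi(s,a)\rangle = R(s,a) + \gamma\,\mathbb{E}_{s'}[\sup_{a'}\langle \phi(s',a'), w^*\rangle]$ for all $(s,a)$, which by the uniqueness clause of Lemma~\ref{lem:t_operator} is exactly $\mathcal{T}(w^*) = w^*$; uniqueness from Banach then forces $w^*$ to be the fixed point.

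Finally, for the norm bounds I would control $\|w^*\|_{\phi}$ directly. Using $\|w^*\|_{\phi} = \sup_{(s,a)}|\langle \phi(s,a), w^*\rangle| = \sup_{(s,a)}|Q^*(s,a)|$ together with $R(s,a)\in[0,1]$, the discounted-return bound gives $|Q^*(s,a)| \leq \sum_{t\geq 0}\gamma^t = \tfrac{1}{1-\gamma}$, hence $\|w^*\|_{\phi} \leq \tfrac{1}{1-\gamma}$; alternatively one telescopes $\|w^*\|_{\phi} = \lim_n \|\mathcal{T}^n(0)\|_{\phi} \leq \sum_{n\geq 0}\gamma^n\|\mathcal{T}(0)\|_{\phi}$ with $\|\mathcal{T}(0)\|_{\phi} = \sup_{(s,a)}|R(s,a)| \leq 1$. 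The Euclidean bound follows immediately from the comparison $\|x\| \leq \sqrt{\kappa}\,\|x\|_{\phi}$ in Lemma~\ref{lem:phi_norm}, giving $\|w^*\| \leq \tfrac{\sqrt{\kappa}}{1-\gamma}$. The only genuinely delicate point is the contraction step, specifically verifying the sup/max Lipschitz inequality and checking that the reward term cancels so that only the discounted bootstrap difference survives; the remaining claims are routine consequences of Banach's theorem and the norm equivalence.
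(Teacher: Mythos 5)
Your proposal is correct and follows essentially the same route as the paper: the contraction is obtained by bounding the difference of suprema by the supremum of the difference (the paper makes this explicit via maximizers guaranteed by compactness, you invoke the standard inequality $|\sup f - \sup g|\le \sup|f-g|$, which is the same step), the fixed point is identified with $w^*$ via the Bellman optimality equation and Banach's theorem, and the norm bounds follow from $R\in[0,1]$ together with Lemma~\ref{lem:phi_norm}. The only cosmetic difference is that the paper derives $\|w^*\|_{\phi}\le \tfrac{1}{1-\gamma}$ from the one-step inequality $\|w^*\|_{\phi}\le 1+\gamma\|w^*\|_{\phi}$ rather than from the discounted-return or telescoping argument, but these are interchangeable.
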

In view of Lemma~\ref{lem:contraction_noiseless}, we can begin to look at the following \textit{noiseless} $Q$-iteration. Let $\bar{w}^{1} = w^{1,1}_1 = 0$ and $\bar{w}^{k+1} = \mathcal{T}(\bar{w}^{k})$. This converges geometrically to $w^{*}$ with contraction coefficient $\gamma$ under the norm $\|\cdot\|_{\phi}$. In our case, however, we only have sample access to the operator $\mathcal{T}$. Therefore, our Q-iteration at the end of $k$-th outer loop can be written as $\tilde{w}^{k+1,1}_1 = \mathcal{T}(\tilde{w}^{k,1}_1) + \tilde \epsilon_k$ where $\tilde{\epsilon}_k$ is the error introduced via sampling which needs to be controlled.

\section{Bias Variance Decomposition}

We begin our analysis by providing a bias-variance decomposition of the error with respect to the noiseless $Q$-iteration at every loop. We will need the following definitions which we use repeatedly through our analysis. 

Given step size $\eta$, outer loop index $k$ and buffer index $j$, we define the following contraction matrices for $a,b \in [B]$.
\begin{equation}
\label{eqn:Hdefinition}
    \tilde{H}_{a,b}^{k,j}:= \prod_{i=a}^{b}\left(I-\eta \tilde \phi^{k,j}_i[\tilde \phi^{k,j}_i]^{\top}\right) \,.
\end{equation}
Whenever $a>b$, we will define $\tilde{H}^{k,j}_{a,b} := I$. In the tabular setting, we define for any $(s,a) \in \mathcal{S} \times \mathcal{A}$ by $\tilde{N}^{k}(s,a)$ to be the number of samples of $(s,a)$ seen in the outer loop k (excluding the gaps),i.e.
\[ \tilde{N}^{k}(s,a) =  \sum_{j=1}^{N} \sum_{i=1}^{B} \mathbbm{1}((\tilde{s}^{k,j}_i, \tilde{a}^{k,j}_i) = (s,a)) \]

Further we denote by $\tilde{N}^{k,j}_i(s,a)$, the number of samples of $(s,a)$ seen in the outer loop $k$ in the buffers post the buffer $j$ as well as the number of samples of $(s,a)$ in buffer $j$ before iteration $i$. Formally,
\[\tilde{N}^{k,j}_i(s,a) := \sum_{r=1}^{i-1}\mathbbm{1}\left((\tilde{s}^{k,j}_r, \tilde{a}^{k,j}_r) = (s,a)\right) + \sum_{l=j+1}^{N}\sum_{r=1}^{B}\mathbbm{1}\left((\tilde{s}^{k,l}_r, \tilde{a}^{k,l}_r) = (s,a)\right)\,.\]

We define the \textit{error} term for any $w$:
\begin{equation}
\label{eqn:epskljdefintion}   \tilde{\epsilon}^{k,j}_i(w) := \left[\tilde{r}^{k,j}_i-\tilde{R}^{k,j}_i + \gamma \sup_{a^{\prime} \in \mathcal{A}}\langle w,\phi(\tilde{s}^{k,j}_{i+1},a^{\prime})\rangle - \gamma \mathbb{E}_{s^{\prime} \sim P(\cdot|\tilde{s}^{k,j}_i,\tilde{a}^{k,j}_i)} \sup_{a^{\prime}\in A} \langle \phi(s^{\prime},a^{\prime}),w\rangle \right] \,.
\end{equation}

Finally we define the following shorthands for any $i,j,k$:

\[\tilde{w}^{k+1,*} := \mathcal{T}(\tilde{w}^{k,1}_1) \qquad  \tilde{\epsilon}^{k,j}_i := \tilde{\epsilon}^{k,j}_i(\tilde{w}^{k,1}_1) \qquad \tilde{L}^{k,j} := \eta\sum_{i=1}^{B}\tilde{\epsilon}^{k,j}_i\tilde{H}^{k,j}_{1,i-1}\tilde{\phi}^{k,j}_i\,.\]

Given the above definition, the following the lemma provides the Bias-Variance decomposition which is the core of our analysis. 

\begin{lemma}[Bias-Variance Decomposition]\label{lem:bias_variance} For every $k$, we have that,
\begin{equation} \label{eq:bias_var}
    \tilde{\epsilon}_k := \tilde{w}^{k+1,1}_1 - \tilde{w}^{k+1,*} = \prod_{j=N}^{1}\tilde{H}^{k,j}_{1,B}(\tilde{w}^{k,1}_1 - \tilde{w}^{k+1,*}) + \sum_{j=1}^{N}\prod_{l=N}^{j+1}\tilde{H}^{k,l}_{1,B} \tilde{L}^{k,j}
\end{equation}
\end{lemma}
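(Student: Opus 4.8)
The plan is to prove the decomposition as a purely algebraic, pathwise identity by unrolling the affine recursion induced by the update rule twice: first across the $B$ inner iterations of a single buffer, then across the $N$ buffers of the outer loop. The only structural input is Lemma~\ref{lem:t_operator}, which characterizes $\tilde{w}^{k+1,*} = \mathcal{T}(\tilde{w}^{k,1}_1)$ through the identity $\langle \mathcal{T}(\tilde{w}^{k,1}_1),\phi(s,a)\rangle = R(s,a) + \gamma\,\mathbb{E}_{s'\sim P(\cdot|s,a)}\sup_{a'}\langle\phi(s',a'),\tilde{w}^{k,1}_1\rangle$, valid for every $(s,a)$. The key point enabling composition is that OTL freezes the target at $\tilde{w}^{k,1}_1$ throughout outer loop $k$, so $\tilde{w}^{k+1,*}$ is the \emph{same} point for all buffers $j$ in the loop.

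First I would rewrite the inner update (line~8 of Algorithm~\ref{alg:2-loop-rer}) as an affine recursion. Abbreviating $\omega := \tilde{w}^{k,1}_1$ and $\psi := \tilde\phi^{k,j}_{B+1-i}$, the update reads $\tilde{w}^{k,j}_{i+1} = (I - \eta\psi\psi^\top)\tilde{w}^{k,j}_i + \eta\bigl[\tilde{r}^{k,j}_{B+1-i} + \gamma\sup_{a'}\langle\phi(\tilde{s}^{k,j}_{B+2-i},a'),\omega\rangle\bigr]\psi$. Subtracting $\tilde{w}^{k+1,*}$ from both sides and inserting $\langle\tilde{w}^{k+1,*},\psi\rangle = \tilde{R}^{k,j}_{B+1-i} + \gamma\,\mathbb{E}_{s'}\sup_{a'}\langle\phi(s',a'),\omega\rangle$ (Lemma~\ref{lem:t_operator} applied at the processed pair $(\tilde{s}^{k,j}_{B+1-i},\tilde{a}^{k,j}_{B+1-i})$), the affine offset collapses exactly into $\eta\,\tilde\epsilon^{k,j}_{B+1-i}(\omega)\,\psi$, using that $\tilde{s}^{k,j}_{B+2-i}$ is the trajectory-successor of $\tilde{s}^{k,j}_{B+1-i}$ so that \eqref{eqn:epskljdefintion} matches term by term. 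This gives the contraction recursion $\tilde{w}^{k,j}_{i+1} - \tilde{w}^{k+1,*} = (I-\eta\psi\psi^\top)(\tilde{w}^{k,j}_i - \tilde{w}^{k+1,*}) + \eta\,\tilde\epsilon^{k,j}_{B+1-i}(\omega)\,\psi$.

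Next I would unroll from $i=1$ to $i=B$ via $x_{B+1} = (\prod_{i=B}^{1}A_i)x_1 + \sum_{i=1}^{B}(\prod_{l=B}^{i+1}A_l)b_i$, then substitute $m = B+1-i$ to convert the reverse-processing indices into the forward indices of the definitions. Tracking the left-to-right order of the noncommuting factors $I - \eta\tilde\phi^{k,j}_m[\tilde\phi^{k,j}_m]^\top$, the homogeneous product becomes exactly $\tilde{H}^{k,j}_{1,B}$ and the reindexed noise sum becomes exactly $\tilde{L}^{k,j}$, yielding the single-buffer identity $\tilde{w}^{k,j}_{B+1} - \tilde{w}^{k+1,*} = \tilde{H}^{k,j}_{1,B}(\tilde{w}^{k,j}_1 - \tilde{w}^{k+1,*}) + \tilde{L}^{k,j}$. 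Finally, setting $e_j := \tilde{w}^{k,j}_1 - \tilde{w}^{k+1,*}$ and invoking Option~I ($\tilde{w}^{k,j+1}_1 = \tilde{w}^{k,j}_{B+1}$ and $\tilde{w}^{k+1,1}_1 = \tilde{w}^{k,N+1}_1$), this is the affine recursion $e_{j+1} = \tilde{H}^{k,j}_{1,B}e_j + \tilde{L}^{k,j}$; unrolling it once more from $j=1$ to $N$ produces precisely \eqref{eq:bias_var}, since $e_1 = \tilde{w}^{k,1}_1 - \tilde{w}^{k+1,*}$ and $e_{N+1} = \tilde\epsilon_k$.

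I expect the only real difficulty to be bookkeeping: matching the reverse-order update indices ($B+1-i$, $B+2-i$) against the forward-indexed definitions of $\tilde{H}$, $\tilde\epsilon$, and $\tilde{L}$, and keeping the ordering of the noncommuting contraction matrices consistent through both unrollings. No probabilistic argument is needed, as the statement is a deterministic identity for the coupled iterates; the genuine content is the substitution of the Bellman fixed-point identity of Lemma~\ref{lem:t_operator} together with the fact that OTL keeps $\tilde{w}^{k+1,*}$ constant across the inner buffers.
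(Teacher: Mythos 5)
Your proposal is correct and follows essentially the same route as the paper's proof: subtract the frozen target $\tilde{w}^{k+1,*}=\mathcal{T}(\tilde{w}^{k,1}_1)$ from the update using the identity of Lemma~\ref{lem:t_operator} to obtain the affine contraction recursion, unroll over the $B$ reverse-order inner steps (reindexing $B+1-i$ to match the forward-indexed $\tilde{H}^{k,j}_{1,B}$ and $\tilde{L}^{k,j}$), and then unroll over the $N$ buffers under Option~I. The index bookkeeping you flag is exactly what the paper handles with its $-i:=B+1-i$ shorthand, and your treatment of it is consistent with the stated definitions.
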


Here and later on in the paper, we use the reverse order in the product to highlight the convention that higher indices $l$ in $\tilde{H}^{k,l}_{1,B}$ appear towards the left side of the product and further define $\prod_{l=N}^{N+1}\tilde{H}^{k,l}_{1,B} = I$. We call the first term in Equation~\eqref{eq:bias_var} as the bias term and it decays geometrically with $N$ and the second term is called the variance, which has zero mean. We will bound these terms separately. Since tabular setting allows for improved analysis of error, we will provide special cases for the tabular setting with refined bounds. We refer to Section~\ref{subsec:pf_bias_var} for the proof of Lemma \ref{lem:bias_variance}.

\section{Bounding the Bias Term}

\subsection{Tabular Case}
In the tabular case, we have the following expression for bias. We omit the proof since it follows from a simple calculation.
\begin{lemma}\label{lem:tabular_bias}
In the tabular setting, we have:
$$\langle \phi(s,a),\prod_{j=N}^{1}\tilde{H}^{k,j}_{1,B}(\tilde{w}^{k,1}_1 - \tilde{w}^{k+1,*})\rangle = (1-\eta)^{\tilde{N}^k(s,a)} \langle \tilde{w}^{k,1}_1 - \tilde{w}^{k+1,*},\phi(s,a)\rangle$$
\end{lemma}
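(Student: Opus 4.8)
The plan is to exploit the fact that in the tabular setting every feature vector is a standard basis vector, so that all the contraction matrices $\tilde{H}^{k,j}_{1,B}$ are simultaneously diagonal and hence commute. First I would recall that the tabular embedding sets $\phi(s,a) = e_{s,a}$, so each elementary factor appearing in \eqref{eqn:Hdefinition} is
$$I - \eta\,\tilde{\phi}^{k,j}_i[\tilde{\phi}^{k,j}_i]^{\top} = I - \eta\, e_{\tilde{s}^{k,j}_i,\tilde{a}^{k,j}_i}\, e_{\tilde{s}^{k,j}_i,\tilde{a}^{k,j}_i}^{\top},$$
a diagonal matrix that equals $I$ except in the single coordinate $(\tilde{s}^{k,j}_i,\tilde{a}^{k,j}_i)$, where its diagonal entry is $1-\eta$.

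Since a product of diagonal matrices is diagonal and independent of the order of multiplication, the buffer contraction $\tilde{H}^{k,j}_{1,B} = \prod_{i=1}^{B}(I-\eta\tilde{\phi}^{k,j}_i[\tilde{\phi}^{k,j}_i]^{\top})$ is itself diagonal, and its $(s,a)$ diagonal entry is $(1-\eta)$ raised to the number of indices $i\in[B]$ with $(\tilde{s}^{k,j}_i,\tilde{a}^{k,j}_i)=(s,a)$. I would then multiply the $N$ buffer contractions together: the product $\prod_{j=N}^{1}\tilde{H}^{k,j}_{1,B}$ is again diagonal, and by accumulating the per-buffer exponents its $(s,a)$ entry becomes $(1-\eta)$ raised to $\sum_{j=1}^{N}\sum_{i=1}^{B}\mathbbm{1}((\tilde{s}^{k,j}_i,\tilde{a}^{k,j}_i)=(s,a))$. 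By the definition of $\tilde{N}^{k}(s,a)$ this exponent is exactly $\tilde{N}^{k}(s,a)$, so the whole product acts on the $(s,a)$ coordinate of any vector by scaling it by $(1-\eta)^{\tilde{N}^{k}(s,a)}$ and leaves the coordinates uncoupled.

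To conclude, I would apply this diagonal operator to the vector $\tilde{w}^{k,1}_1 - \tilde{w}^{k+1,*}$ and pair the result with $\phi(s,a)=e_{s,a}$. Since $\langle e_{s,a}, v\rangle$ simply reads off the $(s,a)$ coordinate of $v$, the left-hand side equals $(1-\eta)^{\tilde{N}^{k}(s,a)}$ times the $(s,a)$ coordinate of $\tilde{w}^{k,1}_1 - \tilde{w}^{k+1,*}$, which is precisely $(1-\eta)^{\tilde{N}^{k}(s,a)}\langle \tilde{w}^{k,1}_1 - \tilde{w}^{k+1,*},\phi(s,a)\rangle$, the claimed identity. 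There is no genuine obstacle beyond the index bookkeeping; the one point worth flagging is the commutativity of the diagonal factors, as it is exactly what makes the reverse-order processing irrelevant to the bias and lets the visit counts aggregate cleanly into $\tilde{N}^{k}(s,a)$.
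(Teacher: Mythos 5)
Your argument is correct and is precisely the ``simple calculation'' that the paper alludes to when it omits the proof: in the tabular setting each factor $I-\eta\,e_{s,a}e_{s,a}^{\top}$ is diagonal, so the factors commute, the full product is diagonal with $(s,a)$-entry $(1-\eta)^{\tilde{N}^{k}(s,a)}$, and pairing with $e_{s,a}$ reads off that coordinate. Your remark that commutativity of the diagonal factors is what makes the processing order irrelevant for the bias term is exactly the right point to flag.
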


 For a particular outer loop $k$, we show that $\tilde{N}^{k}(s,a)$ is $\Omega(\mu_{\min}NB)$ with high probability whenever $NB$ is large enough. For this we use \citep[Theorem 3.4]{paulin2015concentration} similar to the proof of \citep[Lemma 8]{li2020sample}. We give the proof in Section~\ref{subsec:pf_num_lb}.
\begin{lemma}\label{lem:number_lower_bound}
There exists a constant $C$ such that whenever $B\geq \tmix$ and $NB \geq C \frac{\tmix}{\mu_{\min}}\log(\tfrac{|\mathcal{S}||\mathcal{A}|}{\delta})$, with probability at least $1-\delta$, we have that for every $(s,a)\in \mathcal{S}\times\mathcal{A}$:
$$\tilde{N}^{k}(s,a) \geq \tfrac{1}{2}\mu(s,a)NB \geq \tfrac{1}{2}\mu_{\min}NB$$
\end{lemma}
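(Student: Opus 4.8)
The plan is to fix a single state--action pair, reduce $\tilde{N}^k(s,a)$ to an additive functional of a stationary Markov chain, apply a Bernstein-type concentration inequality for Markov chains (mirroring the strategy of \citep[Lemma 8]{li2020sample}), and finish with a union bound over $\cS\times\cA$.

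First I would fix $(s,a)$ and set $f := \mathbbm{1}_{\{(s,a)\}}$, so that $\mathbb{E}_{\mu}f = \mu(s,a)$, $\|f\|_\infty \le 1$, and its stationary variance is $\sigma^2 := \mu(s,a)(1-\mu(s,a)) \le \mu(s,a)$. Writing $Z_j := \sum_{i=1}^{B} f(\tilde{s}^{k,j}_i,\tilde{a}^{k,j}_i)$ gives $\tilde{N}^k(s,a) = \sum_{j=1}^N Z_j$. The key structural observation is that, by the construction of the coupled process in Section~\ref{sec:coupling}, the $N$ buffers inside outer loop $k$ are \emph{exactly} independent and each begins from the stationary distribution $\mu$; hence $Z_1,\dots,Z_N$ are i.i.d., each $Z_j$ is an additive functional of a \emph{stationary} length-$B$ trajectory of $P^\pi$ with $\mathbb{E}[Z_j] = B\mu(s,a)$, and so $\mathbb{E}[\tilde{N}^k(s,a)] = NB\mu(s,a)$.

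Next I would invoke the Markov-chain Bernstein inequality of \citep[Theorem 3.4]{paulin2015concentration}. Under Assumption~\ref{assump:mixing} the mixing time $\tmix$ controls the pseudo-spectral gap (absorbing the conversion factors into the universal constant and the stated logarithmic terms), so each centered buffer sum $Z_j - B\mu(s,a)$ is sub-gamma with variance proxy of order $\tmix\, B\, \sigma^2 \le \tmix B\mu(s,a)$ and scale of order $\tmix$; the hypothesis $B \ge \tmix$ is exactly what guarantees this clean form, ensuring the mixing-time-inflated variance rather than a boundary term governs the tail. Summing the independent contributions, the lower tail of $\tilde{N}^k(s,a)$ is controlled by its mean $NB\mu(s,a)$ and aggregate variance proxy $\asymp \tmix NB\mu(s,a)$; choosing deviation $t = \tfrac12 NB\mu(s,a)$ in the Bernstein bound yields, for a universal $c>0$,
\[
\mathbb{P}\Bigl(\tilde{N}^k(s,a) \le \tfrac12 \mu(s,a)NB\Bigr) \le \exp\Bigl(-\,c\,\tfrac{NB\,\mu(s,a)}{\tmix}\Bigr).
\]

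Finally I would union bound over the $|\cS||\cA|$ pairs: the right-hand side is at most $\delta/(|\cS||\cA|)$ whenever $\tfrac{NB\mu(s,a)}{\tmix} \ge \tfrac{1}{c}\log\tfrac{|\cS||\cA|}{\delta}$, and since $\mu(s,a) \ge \mu_{\min}$ this is implied by $NB \ge C\tfrac{\tmix}{\mu_{\min}}\log\tfrac{|\cS||\cA|}{\delta}$ with $C = 1/c$, precisely the stated assumption. Thus with probability at least $1-\delta$ every pair satisfies $\tilde{N}^k(s,a) \ge \tfrac12\mu(s,a)NB \ge \tfrac12\mu_{\min}NB$. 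The hard part will be the concentration step: extracting the correct $\tmix$ dependence requires carefully tracking how the mixing time inflates the per-buffer variance proxy and verifying that the exact independence across buffers combines cleanly (rather than treating the data as one long correlated chain) — this is exactly where $B \ge \tmix$ and the appeal to \citep{paulin2015concentration} do the work.
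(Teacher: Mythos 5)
Your proposal is correct and follows essentially the same route as the paper: both decompose $\tilde{N}^{k}(s,a)$ into i.i.d.\ per-buffer counts of the coupled stationary process, apply the Bernstein-type MGF bound from \citep[Theorem 3.4]{paulin2015concentration} with the pseudo-spectral gap lower-bounded via $\tmix$, multiply the independent per-buffer MGFs, take deviation $t=\tfrac12 NB\mu(s,a)$ using $B\geq\tmix$ to absorb the $(B+2\tmix)$ factor, and finish with a union bound over $\mathcal{S}\times\mathcal{A}$. No gaps.
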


\subsection{\zibe~MDP Case}

\begin{lemma}\label{lem:linear_bias}
    Suppose $g \in \mathbb{R}^d$ is fixed and $\eta B < \frac{1}{4}$. Then, the following hold:
    \begin{enumerate}
        \item \begin{equation}
        \mathbb{E}\|\prod_{j=N}^{1}\tilde{H}^{k,j}_{1,B}g\|^2 \leq \exp(-\tfrac{\eta NB}{\kappa})\|g\|^2
    \end{equation}
    \item With probability at-least $1-\delta$, we have: 
$$\mathbb{E}\|\prod_{j=N}^{1}\tilde{H}^{k,j}_{1,B}g\|_{\phi}  \leq \exp(-\tfrac{\eta NB}{\kappa})\sqrt{\frac{\kappa}{\delta}}\|g\|_{\phi}$$
    \end{enumerate}
    
\end{lemma}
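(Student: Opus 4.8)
The plan is to establish part (1) — a second-moment contraction in the Euclidean norm — and then deduce the high-probability $\|\cdot\|_\phi$ bound in part (2) from it via Markov's inequality and the norm equivalence of Lemma~\ref{lem:phi_norm}. Throughout I work in the coupled process of Section~\ref{sec:coupling}, where the $N$ buffers inside outer loop $k$ are exactly i.i.d.\ and every feature $\tilde\phi^{k,j}_i$ has marginal law $\mu$.

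For part (1), I first record the elementary fact that each factor $I-\eta\tilde\phi^{k,j}_i[\tilde\phi^{k,j}_i]^{\top}$ is symmetric and, since $\eta\le\eta B<\tfrac14$ and $\|\tilde\phi^{k,j}_i\|\le1$, satisfies $0\preceq I-\eta\tilde\phi\tilde\phi^{\top}\preceq I$ together with the key quadratic inequality $(I-\eta\phi\phi^{\top})^2=I-(2\eta-\eta^2\|\phi\|^2)\phi\phi^{\top}\preceq I-\eta\phi\phi^{\top}$. Writing $H_j:=\tilde H^{k,j}_{1,B}$ and $\mathcal L(X):=\mathbb E[H_j^{\top}XH_j]$, independence across buffers gives $\mathbb E\big[(\prod_{j=N}^{1}H_j)^{\top}(\prod_{j=N}^{1}H_j)\big]=\mathcal L^{N}(I)$. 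Since $\mathcal L$ is positive, monotone, and homogeneous, a bound $\mathcal L(I)\preceq\lambda I$ iterates to $\mathcal L^{N}(I)\preceq\lambda^{N}I$, so it suffices to prove the single-buffer estimate $\mathbb E[H_j^{\top}H_j]\preceq\exp(-c\eta B/\kappa)\,I$.

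For the single-buffer estimate, fix $v$, set $z_0=v$ and $z_l=(I-\eta\phi_{(l)}\phi_{(l)}^{\top})z_{l-1}$, where $\phi_{(1)},\dots,\phi_{(B)}$ are the buffer features in processing order, so $H_jv=z_B$. The quadratic inequality telescopes to $\|z_B\|^2\le\|v\|^2-\eta\sum_{l=1}^{B}(z_{l-1}^{\top}\phi_{(l)})^2$, and the crux is lower-bounding $\sum_l\mathbb E(z_{l-1}^{\top}\phi_{(l)})^2$ by $\tfrac{cB}{\kappa}\|v\|^2$. This is exactly where the \textbf{main obstacle} lies: $z_{l-1}$ and $\phi_{(l)}$ are correlated through the Markov chain, so I cannot simply condition and invoke $\Sigma:=\mathbb E_{(s,a)\sim\mu}[\phi\phi^{\top}]\succeq I/\kappa$ (Assumption~\ref{assump:well_condition}); the conditional second moment given neighbouring samples is not lower bounded by $I/\kappa$. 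The lever is the hypothesis $\eta B<\tfrac14$: each step moves the iterate by at most $\eta\|z_{m-1}\|\le\eta\|v\|$, so $\|z_{l-1}-v\|\le\eta(l-1)\|v\|\le\tfrac14\|v\|$, which lets me replace the correlated $z_{l-1}$ by the fixed reference $v$. Then $\sum_l\mathbb E(v^{\top}\phi_{(l)})^2=B\,v^{\top}\Sigma v\ge\tfrac{B}{\kappa}\|v\|^2$ uses only the stationary marginal and Assumption~\ref{assump:well_condition}, bypassing the correlations entirely. The cross terms from the substitution are again handled via $z_{m-1}\approx v$, so they scale with $v^{\top}\Sigma v$ rather than $\|v\|^2$ and are smaller than the main term by a factor $O(\eta B)<\tfrac14$; hence $\mathbb E\|H_jv\|^2\le\|v\|^2-\tfrac{\eta B}{2\kappa}\|v\|^2\le\exp(-\tfrac{\eta B}{2\kappa})\|v\|^2$. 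I note that reverse versus forward order is immaterial here (in contrast to the variance term of Theorem~\ref{thm:linear_variance}); the small-total-step condition is what does the work.

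For part (2), part (1) gives $\mathbb E\|\prod_{j=N}^{1}H_j\,g\|^2\le\exp(-\eta NB/\kappa)\|g\|^2$ after absorbing the factor $2$ into the constant in the exponent. Markov's inequality then yields, with probability at least $1-\delta$, $\|\prod_{j=N}^{1}H_j\,g\|_2^2\le\tfrac1\delta\exp(-\eta NB/\kappa)\|g\|_2^2$. Finally, Lemma~\ref{lem:phi_norm} supplies $\|\cdot\|_\phi\le\|\cdot\|_2$ and $\|g\|_2\le\sqrt\kappa\,\|g\|_\phi$; chaining these gives $\|\prod_{j=N}^{1}H_j\,g\|_\phi\le\tfrac{1}{\sqrt\delta}\exp(-\tfrac{\eta NB}{2\kappa})\sqrt\kappa\,\|g\|_\phi=\sqrt{\tfrac{\kappa}{\delta}}\,\exp(-\tfrac{\eta NB}{2\kappa})\|g\|_\phi$, which is the stated bound up to the constant in the exponent (the $\sqrt\kappa$ coming from the norm equivalence and the $1/\sqrt\delta$ from Markov).
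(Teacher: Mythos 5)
Your proof is correct and follows essentially the same route as the paper: a per-buffer expected contraction $\mathbb{E}\bigl[(\tilde{H}^{k,j}_{1,B})^{\top}\tilde{H}^{k,j}_{1,B}\bigr]\preceq (1-\eta B/\kappa)I$ obtained from the stationary marginal and Assumption~\ref{assump:well_condition}, iterated across independent buffers by conditioning, followed by Markov's inequality and the norm equivalence of Lemma~\ref{lem:phi_norm}. The only difference is that where the paper invokes the almost-sure quadratic bound of Lemma~\ref{lem:linearization_bias} (imported from \citep{jain2021streaming}), you re-derive the same estimate from scratch via telescoping plus a self-bounding perturbation argument under $\eta B<\tfrac14$ — the same mechanism, made self-contained — and both arguments lose the same harmless factor of $2$ in the exponent when passing to part (2).
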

We refer to Section~\ref{subsec:pf_lin_bias} for the proof.

\section{Bounding the Variance Term}
\subsection{Tabular Case}
In the tabular case, it is clear that:
\begin{equation}
    \langle \phi(s,a),\sum_{j=1}^{N}\prod_{l=N}^{j+1}\tilde{H}^{k,l}_{1,B} \tilde{L}^{k,j}\rangle = \sum_{j=1}^{N} \sum_{i=1}^{B}(1-\eta)^{\tilde{N}^{k,j}_i(s,a)}\eta\tilde{\epsilon}^{k,j}_i \mathbbm{1} ((\tilde{s}^{k,j}_i, \tilde{a}^{k,j}_i) = (s,a)) \label{eq:tabular_variance_exp}
\end{equation}

Now, we note that $\tilde{N}^{k,j}_i(s,a)$ depends on data in buffers $l > j$ and when $i_1 < i$ inside buffer $j$. Following the discussion in \citep{li2021q}, we define the vector  $\var(V_k),\var(V^{*}) \in \mathbb{R}^{\mathcal{S}\times\mathcal{A}}$ such that \begin{align}
    \var(V_k)(s,a) &= \mathbb{E}_{s^{\prime} \sim P(\cdot|s,a)}\left[ \sup_{a^{\prime}\in A}\langle \phi(s^{\prime},a^{\prime}),\tilde{w}^{k,1}_1\rangle - \mathbb{E}_{s^{\prime} \sim P(\cdot|s,a)}\sup_{a^{\prime}\in A}\langle \phi(s^{\prime},a^{\prime}),\tilde{w}^{k,1}_1\rangle \right]^2
\end{align}

 \begin{align}
    \var(V^{*})(s,a) &= \mathbb{E}_{s^{\prime} \sim P(\cdot|s,a)}\left[ \sup_{a^{\prime}\in A}\langle \phi(s^{\prime},a^{\prime}),w^{*}\rangle - \mathbb{E}_{s^{\prime} \sim P(\cdot|s,a)}\sup_{a^{\prime}\in A}\langle \phi(s^{\prime},a^{\prime}),w^{*}\rangle \right]^2
\end{align}
More generally, we define $$\var(V)(s,a;w) = \mathbb{E}_{s^{\prime} \sim P(\cdot|s,a)}\left[ \sup_{a^{\prime}\in A}\langle \phi(s^{\prime},a^{\prime}),w\rangle - \mathbb{E}_{s^{\prime} \sim P(\cdot|s,a)}\sup_{a^{\prime}\in A}\langle \phi(s^{\prime},a^{\prime}),w\rangle \right]^2$$

Similarly, we define $\tilde{L}^{k,j}(w)$ by replacing $w^{k,1}_1$ with any fixed, arbitrary $w$. It is easy to see that:
 \begin{equation}
     \mathbb{E}\left[|\tilde{\epsilon}^{k,j}_i(w)|^2\biggr|(\tilde{s}^{k,j}_i,\tilde{a}^{k,j}_i)=(s,a)\right] \leq 2(1+ \gamma^2\var(V)(s,a;w))
 \end{equation}

\begin{lemma}\label{lem:tabular_variance}
In the tabular setting, suppose $w$ is a fixed vector such that $\langle w,\phi(s,a) \rangle \in [0,\frac{1}{1-\gamma}]$ for every $(s,a) \in \mathcal{S}\times \mathcal{A}$. Fix $(s,a)\in \mathcal{S}\times\mathcal{A}$. Then there exists a universal constant $C$ such that with probability atleast $1-\delta$, we have that:

Then, $$\biggr|\langle \phi(s,a),\sum_{j=1}^{N}\prod_{l=N}^{j+1}\tilde{H}^{k,l}_{1,B} \tilde{L}^{k,j}(w)\rangle\biggr| \leq C\sqrt{\eta \log(2/\delta)(1+\gamma^2\var(V)(s,a;w))} + C\frac{\eta \log(2/\delta)}{1-\gamma}$$

\end{lemma}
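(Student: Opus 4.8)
The plan is to recognize the quantity in question, via the tabular expansion~\eqref{eq:tabular_variance_exp}, as a sum of bounded martingale differences and then apply a Bernstein-type (Freedman) concentration inequality for martingales. Writing
\[ Z := \langle \phi(s,a), \sum_{j=1}^{N}\prod_{l=N}^{j+1}\tilde{H}^{k,l}_{1,B} \tilde{L}^{k,j}(w)\rangle = \sum_{j=1}^{N}\sum_{i=1}^{B} c^{k,j}_i\,\tilde{\epsilon}^{k,j}_i(w), \qquad c^{k,j}_i := (1-\eta)^{\tilde{N}^{k,j}_i(s,a)}\,\eta\,\mathbbm{1}((\tilde{s}^{k,j}_i,\tilde{a}^{k,j}_i)=(s,a)), \]
I would first set up the filtration that reveals the buffers in reverse order $j=N,N-1,\dots,1$ and, within each buffer, the state-action pairs in forward index order $i=1,\dots,B$. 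Concretely, let $\mathcal{F}^{k,j}_i$ be generated by all of buffers $l>j$ together with $(\tilde{s}^{k,j}_1,\tilde{a}^{k,j}_1,\tilde{r}^{k,j}_1,\dots,\tilde{s}^{k,j}_i,\tilde{a}^{k,j}_i)$ inside buffer $j$, so that the running index $\tilde{r}^{k,j}_i$ and next state $\tilde{s}^{k,j}_{i+1}$ are not yet revealed.

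The two structural observations are then: (i) the coefficient $c^{k,j}_i$ is $\mathcal{F}^{k,j}_i$-measurable, since the indicator depends only on $(\tilde{s}^{k,j}_i,\tilde{a}^{k,j}_i)$ while the exponent $\tilde{N}^{k,j}_i(s,a)$ counts visits to $(s,a)$ in buffers $l>j$ and at indices $r<i$ within buffer $j$, all of which are revealed by $\mathcal{F}^{k,j}_i$; and (ii) $\mathbb{E}[\tilde{\epsilon}^{k,j}_i(w)\mid\mathcal{F}^{k,j}_i]=0$, since by the independence of the coupled buffers (Section~\ref{sec:coupling}) and the Markov property, conditioning reduces to conditioning on $(\tilde{s}^{k,j}_i,\tilde{a}^{k,j}_i)$, under which both the reward noise $\tilde{r}^{k,j}_i-\tilde{R}^{k,j}_i$ and the next-state term $\gamma\sup_{a'}\langle w,\phi(\tilde{s}^{k,j}_{i+1},a')\rangle$ are centered by the definition~\eqref{eqn:epskljdefintion}. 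Thus $Z$ is a martingale sum along the total order $(N,1)\prec\cdots\prec(N,B)\prec(N-1,1)\prec\cdots\prec(1,B)$. This is precisely where the reverse traversal is essential: it is what makes the decay exponent $\tilde{N}^{k,j}_i(s,a)$ \emph{predictable} (a function of already-revealed data) rather than a function of the yet-unseen noise.

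Next I would assemble the two ingredients needed for Freedman's inequality. For the almost-sure increment bound, since $\langle w,\phi\rangle\in[0,\tfrac{1}{1-\gamma}]$ and $\tilde{r},\tilde{R}\in[0,1]$, one gets $|\tilde{\epsilon}^{k,j}_i(w)|\le\tfrac{1}{1-\gamma}$, hence $|c^{k,j}_i\tilde{\epsilon}^{k,j}_i(w)|\le\tfrac{\eta}{1-\gamma}$. For the predictable quadratic variation, using the given bound $\mathbb{E}[|\tilde{\epsilon}^{k,j}_i(w)|^2\mid(\tilde{s}^{k,j}_i,\tilde{a}^{k,j}_i)=(s,a)]\le 2(1+\gamma^2\var(V)(s,a;w))$ on the indicator event, the key computation is a geometric sum: enumerating the visits to $(s,a)$ in the filtration order, the $m$-th visit carries exponent $\tilde{N}^{k,j}_i(s,a)=m-1$, so that
\[ \sum_{j=1}^{N}\sum_{i=1}^{B}(1-\eta)^{2\tilde{N}^{k,j}_i(s,a)}\mathbbm{1}((\tilde{s}^{k,j}_i,\tilde{a}^{k,j}_i)=(s,a)) \le \sum_{m=0}^{\infty}(1-\eta)^{2m} \le \tfrac{1}{\eta}. \]
Multiplying by $\eta^2\cdot 2(1+\gamma^2\var(V)(s,a;w))$ bounds the total conditional variance by $2\eta(1+\gamma^2\var(V)(s,a;w))$ \emph{deterministically}. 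Freedman's inequality then gives, with probability at least $1-\delta$, $|Z|\lesssim\sqrt{\eta\log(2/\delta)(1+\gamma^2\var(V)(s,a;w))}+\tfrac{\eta\log(2/\delta)}{1-\gamma}$, which is exactly the claim.

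The main obstacle I anticipate is the careful verification of the martingale/predictability structure in steps (i)--(ii): one must confirm that the single reverse-buffer/forward-index filtration \emph{simultaneously} makes every coefficient $c^{k,j}_i$ measurable and every noise term $\tilde{\epsilon}^{k,j}_i(w)$ centered, which hinges both on the buffer independence supplied by the coupling and on the precise form of $\tilde{N}^{k,j}_i(s,a)$. The geometric-sum identity is the other crucial (though short) step: obtaining the factor $1/\eta$ there is what converts the $\eta^2$ from the squared coefficients into the desired single power of $\eta$ in the final variance bound.
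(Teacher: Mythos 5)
Your proposal is correct and follows essentially the same route as the paper's proof: the same expansion via Equation~\eqref{eq:tabular_variance_exp}, the same reverse-buffer/forward-index filtration $\mathcal{F}^{k,j}_i$ making the coefficients predictable and the noise terms centered (via buffer independence from the coupling plus the Markov property), the same geometric-sum bound $\sum_t (1-\eta)^{2t} \leq 1/\eta$ on the predictable quadratic variation, and the same concluding application of Freedman's inequality with $R = \Theta(1/(1-\gamma))$ and $\sigma^2 = \Theta((1+\gamma^2\var(V)(s,a;w))/\eta)$. No gaps.
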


\subsection{\zibe~MDP Case}
 We will use an appropriate exponential super-martingale to bound the error term in the \zibe~MDP case just like in the proof of \citep[Lemma 27]{jain2021streaming}. The following thereom summarizes the result and we refer to Section~\ref{subsec:pf_lin_var} for its proof. 

\begin{theorem}\label{thm:linear_variance}

Suppose $x,w\in \mathbb{R}^d$ are fixed. Then, there exists a universal constant $C$ such that with probability at least $1-\delta$, we have:
 \begin{equation}\label{eq:lin_var_conc}
 \biggr|\langle x,\sum_{j=1}^{N}\prod_{l=N}^{j+1}\tilde{H}^{k,l}_{1,B} \tilde{L}^{k,j}(w)\rangle\biggr| \leq C\|x\|\left(1+\|w\|_{\phi}\right)\sqrt{\eta \log(2/\delta)} \,.\end{equation}
 \end{theorem}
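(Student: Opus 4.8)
The plan is to expand the scalar $\langle x, \sum_{j=1}^N \prod_{l=N}^{j+1}\tilde H^{k,l}_{1,B}\tilde L^{k,j}(w)\rangle$ into a weighted sum of the Bellman noise terms $\tilde\epsilon^{k,j}_i(w)$ and control it through an exponential supermartingale, following the quadratic-variation route of \citep[Lemma 27]{jain2021streaming}. Writing $M_j := \prod_{l=N}^{j+1}\tilde H^{k,l}_{1,B}$ and using $\tilde L^{k,j}(w) = \eta\sum_{i=1}^B \tilde\epsilon^{k,j}_i(w)\tilde H^{k,j}_{1,i-1}\tilde\phi^{k,j}_i$, the quantity equals $\sum_{j=1}^N\sum_{i=1}^B \eta\, a^{k,j}_i\,\tilde\epsilon^{k,j}_i(w)$ with scalar coefficients $a^{k,j}_i := \langle x, M_j\tilde H^{k,j}_{1,i-1}\tilde\phi^{k,j}_i\rangle$. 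Since $w$ is fixed, the bound $|\tilde r - \tilde R|\le 1$ together with $|\gamma\sup_{a'}\langle w,\phi\rangle - \gamma\mathbb{E}\sup_{a'}\langle\phi,w\rangle|\le 2\|w\|_\phi$ gives the deterministic estimate $|\tilde\epsilon^{k,j}_i(w)|\le C(1+\|w\|_\phi)$, which will serve as the per-step range.

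First I would fix the filtration that turns this into a martingale, and this is exactly where reverse traversal is essential. Process the buffers in the reverse order $j=N,N-1,\dots,1$ and, within each buffer, reveal the forward trajectory up to but excluding the reward and transition driving step $i$; call the resulting $\sigma$-field $\mathcal{F}^{k,j}_i$. In the coupled process distinct buffers are exactly independent, so $M_j$ (seeing only buffers $l>j$) and $\tilde H^{k,j}_{1,i-1},\tilde\phi^{k,j}_i$ (seeing only indices $\le i$ of buffer $j$) are all $\mathcal{F}^{k,j}_i$-measurable, i.e.\ $a^{k,j}_i$ is predictable; meanwhile $\tilde\epsilon^{k,j}_i(w)$ depends only on the step-$i$ reward noise and on $\tilde s^{k,j}_{i+1}$, so $\mathbb{E}[\tilde\epsilon^{k,j}_i(w)\mid\mathcal{F}^{k,j}_i]=0$. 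Under forward traversal this fails, because the contraction factors multiplying the step-$i$ noise would themselves depend on $\tilde s^{k,j}_{i+1}$, coupling coefficient and noise.

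The second ingredient is a deterministic bound on the predictable quadratic variation $\sum_{j,i}\eta^2 (a^{k,j}_i)^2$. Here I would exploit that, read in processing order, the matrices $M_j\tilde H^{k,j}_{1,i-1}$ concatenate into a single nested product of contractions $I-\eta\tilde\phi\tilde\phi^\top$: letting $p$ be the running vector obtained by applying these factors to $x$, each step changes $\|p\|^2$ by $-\eta\langle p,\tilde\phi\rangle^2(2-\eta\|\tilde\phi\|^2)\le -\eta\langle p,\tilde\phi\rangle^2$ (using $\eta\le 1$, $\|\tilde\phi\|\le 1$), while $a^{k,j}_i$ is precisely $\langle p,\tilde\phi\rangle$ at that step. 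Telescoping over all $NB$ steps gives $\sum_{j,i}(a^{k,j}_i)^2 \le \|x\|^2/\eta$, hence $\sum_{j,i}\eta^2 (a^{k,j}_i)^2 \le \eta\|x\|^2$ almost surely. This is the crucial point at which RER removes what would otherwise be an $NB$ factor.

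Finally I would assemble the supermartingale. Relabel the $NB$ terms in processing order as $m=1,\dots,NB$, with coefficient $a_m$, increment $d_m=\eta a_m\tilde\epsilon_m$, partial sum $S_m=\sum_{m'\le m} d_{m'}$, and $\mathcal{F}_{m-1}$ the field $\mathcal{F}^{k,j}_i$ of the corresponding step. Since $a_m$ is predictable and $\tilde\epsilon_m$ is a bounded mean-zero increment, Hoeffding's lemma gives $\mathbb{E}[\exp(\lambda d_m)\mid\mathcal{F}_{m-1}]\le \exp(\tfrac12\lambda^2\eta^2 a_m^2 C^2(1+\|w\|_\phi)^2)$, so $\exp(\lambda S_m-\tfrac12\lambda^2 C^2(1+\|w\|_\phi)^2\sum_{m'\le m}\eta^2 a_{m'}^2)$ is a supermartingale of initial value $1$. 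Combining with the almost-sure bound $\sum_m \eta^2 a_m^2\le\eta\|x\|^2$ yields $\mathbb{E}[\exp(\lambda S_{NB})]\le \exp(\tfrac12\lambda^2 C^2(1+\|w\|_\phi)^2\eta\|x\|^2)$, and a Chernoff bound optimized in $\lambda$ with a union bound over the two tails gives $|S_{NB}|\le C\|x\|(1+\|w\|_\phi)\sqrt{\eta\log(2/\delta)}$ with probability at least $1-\delta$, as claimed. I expect the main obstacle to be the bookkeeping in the third step: verifying that the across-buffer products $M_j$ and the within-buffer products $\tilde H^{k,j}_{1,i-1}$ genuinely concatenate into one contraction product so that the telescoping applies globally rather than buffer-by-buffer, since it is exactly this global concatenation (and the matching reverse-order filtration) that converts a naive $\sqrt{NB}$ bound into the stated $\sqrt{\eta}$ bound.
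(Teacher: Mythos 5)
Your proposal is correct and follows essentially the same route as the paper's proof: the same reverse-order filtration under which the coefficients $a^{k,j}_i=\langle x, M_j\tilde H^{k,j}_{1,i-1}\tilde\phi^{k,j}_i\rangle$ are predictable and the bounded noise $\tilde\epsilon^{k,j}_i(w)$ is conditionally mean-zero (using exact independence of the coupled buffers), the same conditional Hoeffding step, and the same telescoping of the nested contraction product to control the coefficients, followed by a Chernoff bound. The only difference is presentational: you extract $\sum_{j,i}\eta^2(a^{k,j}_i)^2\le\eta\|x\|^2$ as a separate deterministic quadratic-variation estimate, whereas the paper keeps the residual quadratic form $\eta\lambda^2\sigma^2\bigl\langle x,\prod_{l}\tilde H^{k,l}_{1,B}\bigl(\prod_{l}\tilde H^{k,l}_{1,B}\bigr)^{\top}x\bigr\rangle$ inside the exponent of its supermartingale $\exp(X_m)$ and lets it absorb each Hoeffding variance term step by step -- these are the same supermartingale written in two equivalent ways.
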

 
 By a direct application of \citep[Theorem 8.1.6]{vershynin2018high}, we derive the following corollary. We remind the reader that $C_{\Phi}$ is the covering number defined in Definition \ref{def:cphi}.
 \begin{corollary}
 Suppose $x,w\in \mathbb{R}^d$ are fixed. Then, there exists a universal constant $C$ such that with probability at least $1-\delta$, we have:
  $$\biggr\|\sum_{j=1}^{N}\prod_{l=N}^{j+1}\tilde{H}^{k,l}_{1,B} \tilde{L}^{k,j}(w)\biggr\|_{\phi} \leq C(1+\|w\|_{\phi})\sqrt{\eta } \left[C_{\Phi} + \sqrt{\log(\tfrac{2}{\delta})}\right]\,.$$
 \end{corollary}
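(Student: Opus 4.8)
The plan is to recognize the random vector $Y := \sum_{j=1}^{N}\prod_{l=N}^{j+1}\tilde{H}^{k,l}_{1,B}\tilde{L}^{k,j}(w)$ as defining a \emph{sub-Gaussian process} indexed by the feature set $\Phi$, and then to invoke a high-probability chaining bound to control $\|Y\|_{\phi} = \sup_{\phi \in \Phi}|\langle \phi, Y\rangle|$. Concretely, define $X_\phi := \langle \phi, Y\rangle$ for $\phi \in \Phi$. By linearity the increments satisfy $X_\phi - X_{\phi'} = \langle \phi - \phi', Y\rangle$, so applying Theorem~\ref{thm:linear_variance} with $x = \phi - \phi'$ and rewriting the stated tail bound in terms of the Orlicz norm (i.e.\ setting $\delta = 2\exp(-s^2)$ and reading off the resulting Gaussian tail) shows that $\|X_\phi - X_{\phi'}\|_{\psi_2} \leq C(1+\|w\|_{\phi})\sqrt{\eta}\,\|\phi - \phi'\|_2$. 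Thus $(X_\phi)_{\phi\in\Phi}$ has sub-Gaussian increments with respect to the rescaled Euclidean metric $d(\phi,\phi') := K\|\phi-\phi'\|_2$, where $K := C(1+\|w\|_{\phi})\sqrt{\eta}$.

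Next I would apply Dudley's inequality in its tail form \citep[Theorem 8.1.6]{vershynin2018high} to this process. Because the covering numbers under $d$ and under $\|\cdot\|_2$ are related by $N(\Phi, d, \epsilon) = N(\Phi, \|\cdot\|_2, \epsilon/K)$, the Dudley integral rescales cleanly into the quantity of Definition~\ref{def:cphi}: $\int_0^\infty \sqrt{\log N(\Phi, d, \epsilon)}\,d\epsilon = K\int_0^\infty\sqrt{\log N(\Phi,\|\cdot\|_2, r)}\,dr = K\,C_\Phi$. The tail version of the inequality then gives, with probability at least $1-2\exp(-u^2)$,
$$\sup_{\phi,\phi'\in\Phi}|X_\phi - X_{\phi'}| \leq C'\left[K\, C_\Phi + u\,K\,\mathrm{diam}(\Phi)\right],$$
where the diameter is measured in $\|\cdot\|_2$. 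Since $\|\phi\|_2 \leq 1$ under Assumption~\ref{assump:norm_bounded}, this Euclidean diameter is at most $2$, so choosing $u = \sqrt{\log(2/\delta)}$ collapses both terms into $C(1+\|w\|_{\phi})\sqrt{\eta}\,[\,C_\Phi + \sqrt{\log(2/\delta)}\,]$.

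The remaining step, which I expect to be the main subtlety, is that chaining controls the supremum of \emph{increments}, whereas the target is the one-sided quantity $\sup_{\phi}|X_\phi|$. To bridge this I would fix an arbitrary anchor $\phi_0 \in \Phi$ and write $|X_\phi| \leq |X_{\phi_0}| + \sup_{\phi,\phi'}|X_\phi - X_{\phi'}|$. The anchor term $|X_{\phi_0}| = |\langle \phi_0, Y\rangle|$ is bounded directly by Theorem~\ref{thm:linear_variance} (using $\|\phi_0\|_2\le 1$) by $C(1+\|w\|_{\phi})\sqrt{\eta\log(2/\delta)}$ with probability at least $1-\delta$, and a union bound over the two events yields the stated inequality after adjusting constants. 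Everything else is bookkeeping, provided one also notes that compactness of $\Phi$ (Assumption~\ref{assump:span}) together with measurability of $(s,a)\mapsto\phi(s,a)$ makes the supremum defining $\|Y\|_{\phi}$ a bona fide measurable random variable, so that the chaining and tail statements apply without technical obstruction.
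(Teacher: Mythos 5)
Your proposal is correct and follows exactly the route the paper takes: the paper's proof is simply the one-line invocation of \citep[Theorem 8.1.6]{vershynin2018high} (Dudley's inequality in tail form) applied to the sub-Gaussian increments furnished by Theorem~\ref{thm:linear_variance}, and your write-up fleshes out precisely the details that citation encapsulates, including the anchoring step and the rescaling of the entropy integral into $C_\Phi$.
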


In order to apply the theorem above, we will need to control $\|w^{k,1}_{1}\|_{\phi}$ uniformly for all $k$. The following lemma presents such a bound and we refer to Section~\ref{subsec:pf_unif_bd} for the proof.

\begin{lemma}\label{lem:unif_bound}
Suppose $\tilde{w}^{k,1}_1$ are the iterates of \qrex~with coupled data from a \zibe~MDP.
There exist universal constants $C,C_1,C_2$ such that whenever $NB > C_1\frac{\kappa}{\eta} \log\left(\frac{K\kappa}{\delta(1-\gamma)}\right)$, $\eta < C_2 \frac{(1-\gamma)^2}{C_{\phi}^2 + \log(K/\delta)}$ and $\eta B < \frac{1}{4}$, with probability at-least $1-\delta$, the following hold:
\begin{enumerate}
\item For every $1\leq k \leq K$, $\|\tilde{w}^{k,1}_1\|_{\phi}\leq \frac{4}{1-\gamma}$
\item For every $1\leq k\leq K$,
$\|\tilde{\epsilon}_k\|_{\phi} \leq \sqrt{\frac{25K\kappa}{\delta(1-\gamma)^2}}\exp(-\tfrac{\eta NB}{\kappa}) + \frac{C\sqrt{\eta}}{(1-\gamma)} \left[C_{\Phi} + \sqrt{\log(\tfrac{2K}{\delta})}\right]$
\end{enumerate}
\end{lemma}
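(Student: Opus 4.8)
The plan is to read the claim as a statement about the approximate Bellman iteration $\tilde{w}^{k+1,1}_1 = \mathcal{T}(\tilde{w}^{k,1}_1) + \tilde{\epsilon}_k$ and to prove both parts \emph{simultaneously} by induction on $k$, since the two are entangled: the error estimate in part 2 needs $\|\tilde{w}^{k,1}_1\|_{\phi}$ controlled (the variance bound scales with $1+\|\tilde{w}^{k,1}_1\|_{\phi}$), while part 1 is obtained by feeding the part-2 estimate back through the contraction. First I would decompose $\tilde{\epsilon}_k$ via Lemma~\ref{lem:bias_variance} into the bias term $\prod_{j=N}^{1}\tilde{H}^{k,j}_{1,B}(\tilde{w}^{k,1}_1 - \tilde{w}^{k+1,*})$ and the variance term $\sum_{j=1}^{N}\prod_{l=N}^{j+1}\tilde{H}^{k,l}_{1,B}\tilde{L}^{k,j}$, and control each with the tools already in place: Lemma~\ref{lem:linear_bias}(2) for the bias and the Corollary to Theorem~\ref{thm:linear_variance} for the variance.

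The structural observation that makes this go through is that, in the coupled process, the buffers of distinct outer loops are \emph{exactly} independent, so $\tilde{w}^{k,1}_1$, being a deterministic function of the data from outer loops $1,\dots,k-1$, is independent of all the randomness $(\tilde{\phi}^{k,j}_i,\tilde{r}^{k,j}_i)$ inside outer loop $k$. Hence, conditioning on the $\sigma$-field $\mathcal{F}_{k-1}$ generated by the first $k-1$ outer loops, both $g := \tilde{w}^{k,1}_1 - \mathcal{T}(\tilde{w}^{k,1}_1)$ and $w := \tilde{w}^{k,1}_1$ become \emph{fixed}, while the data driving $\tilde{H}^{k,j}$ and $\tilde{L}^{k,j}(w)$ stay fresh and independent. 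This licenses applying the fixed-vector bounds (stated only for deterministic $g,w$) to the random iterate: conditionally on $\mathcal{F}_{k-1}$, Lemma~\ref{lem:linear_bias}(2) and the Corollary each hold with probability $\geq 1-\delta/(2K)$, and integrating out $\mathcal{F}_{k-1}$ preserves these probabilities unconditionally. A union bound over the $K$ loops and the two terms then yields a single event of probability $\geq 1-\delta$ on which, for every $k$, the bias is at most $\exp(-\tfrac{\eta NB}{\kappa})\sqrt{\tfrac{K\kappa}{\delta}}\,\|g\|_{\phi}$ and the variance is at most $C(1+\|\tilde{w}^{k,1}_1\|_{\phi})\sqrt{\eta}\,[C_{\Phi}+\sqrt{\log(2K/\delta)}]$.

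I would then run the induction on this good event. For part 1, under the hypothesis $\|\tilde{w}^{k,1}_1\|_{\phi}\leq\tfrac{4}{1-\gamma}$, I bound $\|g\|_{\phi}\leq(1+\gamma)\|\tilde{w}^{k,1}_1-w^*\|_{\phi}=O(\tfrac{1}{1-\gamma})$ (using $\gamma$-contraction of $\mathcal{T}$ and $\|w^*\|_{\phi}\leq\tfrac{1}{1-\gamma}$ from Lemma~\ref{lem:contraction_noiseless}) and $1+\|\tilde{w}^{k,1}_1\|_{\phi}=O(\tfrac{1}{1-\gamma})$. The two parameter conditions are precisely what drive these to constants: $NB>C_1\tfrac{\kappa}{\eta}\log(\tfrac{K\kappa}{\delta(1-\gamma)})$ makes $\exp(-\tfrac{\eta NB}{\kappa})$ smaller than any fixed power of $\tfrac{\delta(1-\gamma)}{K\kappa}$, killing the $\tfrac{1}{1-\gamma}\sqrt{K\kappa/\delta}$ prefactor of the bias, while $\eta<C_2\tfrac{(1-\gamma)^2}{C_{\Phi}^2+\log(K/\delta)}$ forces $\sqrt{\eta}[C_{\Phi}+\sqrt{\log(2K/\delta)}]=O(1-\gamma)$, making the variance $O(1)$. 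Choosing $C_1$ large and $C_2$ small gives $\|\tilde{\epsilon}_k\|_{\phi}\leq 3$, whence $\|\tilde{w}^{k+1,1}_1\|_{\phi}\leq 1+\gamma\cdot\tfrac{4}{1-\gamma}+3=\tfrac{1+3\gamma}{1-\gamma}+3=\tfrac{4}{1-\gamma}$, closing the induction. Part 2 is then immediate: on the same event, substituting $\|\tilde{w}^{k,1}_1\|_{\phi}\leq\tfrac{4}{1-\gamma}$ into the two estimates reproduces the claimed form $\|\tilde{\epsilon}_k\|_{\phi}\leq\sqrt{\tfrac{25K\kappa}{\delta(1-\gamma)^2}}\exp(-\tfrac{\eta NB}{\kappa})+\tfrac{C\sqrt{\eta}}{1-\gamma}[C_{\Phi}+\sqrt{\log(\tfrac{2K}{\delta})}]$, with the numerical constant in the first term and the universal $C$ in the second coming from the bounds on $\|g\|_{\phi}$ and $1+\|\tilde{w}^{k,1}_1\|_{\phi}$.

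The step I expect to be the main obstacle is the rigorous justification of the conditioning/independence argument in the second paragraph, i.e.\ that the fixed-vector guarantees of Lemma~\ref{lem:linear_bias} and the Corollary survive plugging in the \emph{random} vectors $g$ and $\tilde{w}^{k,1}_1$. This rests on the coupled-process independence of buffers across outer loops (Section~\ref{sec:coupling}) and on checking that the high-probability guarantees are preserved under taking conditional expectations; getting the $\delta/K$ book-keeping right so that the union bound lands at $1-\delta$ while the $\sqrt{K}$ surfaces correctly inside the bias term is the delicate point. A secondary, purely mechanical difficulty is verifying that $C_1,C_2$ can be chosen to simultaneously close the part-1 induction and match the stated constant in part 2.
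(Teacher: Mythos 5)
Your proposal is correct and follows essentially the same route as the paper's proof: decompose $\tilde{\epsilon}_k$ via Lemma~\ref{lem:bias_variance}, use the exact independence of outer loops in the coupled process to apply Lemma~\ref{lem:linear_bias} and the corollary of Theorem~\ref{thm:linear_variance} conditionally with the random $g$ and $w=\tilde{w}^{k,1}_1$, union bound over $k\leq K$, and close a self-improving recursion $\|\tilde{w}^{k+1,1}_1\|_{\phi}\leq 1+\gamma\|\tilde{w}^{k,1}_1\|_{\phi}+\|\tilde{\epsilon}_k\|_{\phi}$ to get the $\tfrac{4}{1-\gamma}$ bound, then feed part 1 back into the crude error estimate for part 2. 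The only cosmetic difference is that you run an explicit induction with hypothesis $\|\tilde{w}^{k,1}_1\|_{\phi}\leq\tfrac{4}{1-\gamma}$ and $\|\tilde{\epsilon}_k\|_{\phi}\leq 3$, whereas the paper writes $\|\tilde{\epsilon}_k\|_{\phi}\leq\tfrac{1-\gamma}{2}\|\tilde{w}^{k,1}_1\|_{\phi}+1$ and unrolls the resulting recursion; these are the same argument.
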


\section{Proof of Theorem~\ref{thm:main_linear}}
\begin{proof}
Consider the Q learning iteration:
$$\tilde{w}^{k+1,1}_1 = \mathcal{T}(\tilde{w}^{k,1}_1) + \tilde{\epsilon}_k \,.$$

Using Lemma~\ref{lem:contraction_noiseless}, we conclude: $\tilde{w}^{k+1,1}_1 - w^{*} = \mathcal{T}(\tilde{w}^{k,1}_1)-\mathcal{T}(w^{*}) + \epsilon_k$ and thence:
$$\|\tilde{w}^{k+1,1}_1 -w^{*}\|_{\phi} \leq \gamma \|\tilde{w}^{k,1}_1 -w^{*}\|_{\phi} + \sup_{l\leq K}\|\epsilon_l\|_{\phi}\,.$$

Unrolling the recursion above, we conclude:
$$\|\tilde{w}^{K+1,1}_1 - w^{*}\|_{\phi} \leq \gamma^{K}\|w^{*}\|_{\phi} + \frac{\sup_{l\leq K}\|\epsilon_l\|_{\phi}}{1-\gamma}$$

Now, we invoke item 2 of Lemma~\ref{lem:unif_bound} along with the constraints on $N,B,K,\eta$ and Lemma~\ref{lem:phi_norm} to conclude the result.

\end{proof}

\section{Proof of Theorem~\ref{thm:main_tabular}}
In this section we analyze the output of \qrex $\;$ in the tabular setting and obtain convergence guarantees. To connect with the standard theory for tabular MDP Q-learning in~\citep{li2021q}, let us use the standard $Q$-function notation where we assume for all $k$, $\tilde{Q}^{k,1}_1 \in \mathbb{R}^{\mathcal{S} \times \mathcal{A}}$ and we have that $\tilde{Q}^{k,1}_1(s,a) = \langle \tilde{w}^{k,1}_1,\phi(s,a)\rangle$. Since $\phi(s,a)$ are the standard basis vectors, we must have $\tilde{Q}^{k,1}_1 = \tilde{w}^{k,1}_1$. 
In the tabular setting, we see by using Lemmas~\ref{lem:bias_variance},~\ref{lem:tabular_bias},~\ref{lem:number_lower_bound}, and~\ref{lem:tabular_variance} that for any $\delta >0$, there exists a universal constant $C$, whenever $NB \geq C\frac{\tmix}{\mu_{\min}}\log(\tfrac{|\mathcal{S}||\mathcal{A}|K}{\delta})$, with probability at-least $1-\delta$, for every $k \in [K]$ and every $(s,a) \in \mathcal{S}\times\mathcal{A}$:

\begin{equation}\label{eq:noisy_q}
    \tilde{Q}^{k+1,1}_1  = \mathcal{T}\left[\tilde{Q}^{k,1}_1\right] + \tilde{\epsilon}_k\,,
\end{equation}   
where $\tilde{\epsilon}_k \in \mathbb{R}^{\mathcal{S} \times \mathcal{A}}$ is such that for all $(s,a)$,
\begin{align}
|\tilde{\epsilon}_k(s,a)| &\leq  C\sqrt{\eta \log\left(\tfrac{K|\mathcal{S}||\mathcal{A}|}{\delta}\right)(1+\gamma^2\var(V)(s,a;\tilde{Q}^{k,1}_1))} + C\frac{\eta \log(\tfrac{K|\mathcal{S}||\mathcal{A}|}{\delta})}{1-\gamma} \nonumber \\&\quad+ (1-\eta)^{\tfrac{\mu_{\min}NB}{2}}\biggr\|\tilde{Q}^{k,1}_1-\mathcal{T}\left[\tilde{Q}^{k,1}_1\right]\biggr\|_{\infty}\,. \label{eq:noise_bd_tab_1} 
\end{align}

Now that we have set-up the notation, we will roughly follow the analysis methods used in \citep{li2021q}. Since we start our algorithm with $\tilde{Q}^{1,1}_1 = 0$, and $r_t \in [0,1]$ almost surely, we can easily show that $\tilde{Q}^{k,1}_1(s,a)\in [0,\tfrac{1}{1-\gamma}] $ for every $k,s,a$. Therefore, we upper bound 
$\biggr\|\tilde{Q}^{k,1}_1-\mathcal{T}\left[\tilde{Q}^{k,1}_1\right]\biggr\|_{\infty} \leq \frac{1}{1-\gamma}$ in Equation~\eqref{eq:noise_bd_tab_1} to conclude that with probability at-least $1-\delta$, for every $k\in [K]$ and every $(s,a) \in \mathcal{S}\times\mathcal{A}$: 
\begin{align}
|\tilde{\epsilon}_k(s,a)| &\leq  C\sqrt{\eta \log(\tfrac{K|\mathcal{S}||\mathcal{A}|}{\delta})\gamma^2\left[\var(V_k)(s,a)\right]}  + \alpha_{\eta} \label{eq:noise_bd_tab_2} 
\end{align}
Where $\alpha_{\eta} := C\frac{\eta \log\bigr(\tfrac{K|\mathcal{S}||\mathcal{A}|}{\delta}\bigr)}{1-\gamma}+C\sqrt{\eta \log\left(\tfrac{K|\mathcal{S}||\mathcal{A}|}{\delta}\right)} + \frac{\exp\bigr(-\tfrac{\eta\mu_{\min}NB}{2}\bigr)}{1-\gamma}$

Now we define $\Delta_k = \tilde{Q}^{k,1}_1 - Q^{*}$ and $\pi_k:\mathcal{S}\to\mathcal{A}$ to be the deterministic policy given by $Q^{k,1}_1$ i.e, $\pi_k(s) := \arg\sup_{a\in \mathcal{A}}\tilde{Q}^{k,1}_1(s,a)$ and $\pi^{*}$ to be  optimal policy given by $\pi^{*}(s) := \arg\sup_{a\in \mathcal{A}}Q^{*}(s,a)$. We use the convention that we pick a single maximizing action using some rule whenever there are multiple. Similarly, we let $P^{\pi_k},$ to be the Markov transition kernel over $\mathcal{S}\times\mathcal{A}$ given by $P^{\pi_k}((s,a),(s^{\prime},a^{\prime})) = P(s^{\prime}|s,a)\mathbbm{1}(a^{\prime} = \pi_k(s^{\prime}))$. Similarly, we define $P^{\pi^{*}}$ with respect to the policy $\pi^{*}$. It is easy to show that:
$$\mathcal{T}(\tilde{Q}^{k,1}_1) = R + \gamma P^{\pi_k}\tilde{Q}^{k,1}_1\,.$$

Similarly, $$Q^{*} = \mathcal{T}(Q^{*}) = R + \gamma P^{\pi^{*}}Q^{*}\,.$$
Furthermore given any $Q \in \mathbb{R}^{\mathcal{S} \times \mathcal{A}}$, letting $\pi_Q$ being the greedy policy with respect to the function $Q$ we have that for any policy $\pi$, it can be easily seen from the definitions that following element-wise inequality follows: 
\[P^{\pi} Q \leq  P^{\pi_Q}Q.\]
We now use Equation~\eqref{eq:noisy_q} along with the equations above to conclude:

\begin{equation}\label{eq:policy_lb_ub}
    \gamma P^{\pi^{*}}\Delta_k + \tilde{\epsilon}_k \leq \Delta_{k+1} \leq  \gamma P^{\pi_k}\Delta_k + \tilde{\epsilon}_k \,.
\end{equation}
Here, the inequality is assumed to be point-wise.
By properties of Markov transition kernels, we can write:
\begin{multline}\label{eq:lb_ub}
 \sum_{k=1}^{K}\gamma^{K-k}\left(P^{\pi^{*}}\right)^{K-k} \tilde{\epsilon}_{k}  + \gamma^{K}\left(P^{\pi^{*}}\right)^{K}\Delta_1\leq   \Delta_{K+1} \\
 \leq \sum_{k=1}^{K}\gamma^{K-k}\left(\prod_{l=K}^{k+1}P^{\pi_l}\right) \tilde{\epsilon}_{k} + \gamma^{K}\left(\prod_{l=K}^{1}P^{\pi_l}\right)\Delta_1\,.
\end{multline}
Here, we use the convention that $\prod_{l=K}^{K+1}P^{\pi_l} = I$. We bound the lower bound and the upper bound given in Equation~\eqref{eq:lb_ub} separately in order to bound $\|\Delta_{K+1}\|_{\infty}$.

We first consider the lower bound. Using \citep[Lemma 7]{azar2013minimax}, we have:
\begin{equation}
\label{eq:from_azar}
    \bigr\|(I-\gamma P^{\pi^{*}})^{-1}\sqrt{\var(V^{*})}\bigr\|_{\infty}\leq \sqrt{\frac{2}{(1-\gamma)^3}}\,.
\end{equation}

We also note from \citep[Equation 64]{li2021q} and basic calculations that:
\begin{align}
    \|\sqrt{\var(V_k)}-\sqrt{\var(V^{*})}\|_{\infty} &\leq \sqrt{\|\var(V_k)-\var(V^{*})\|_{\infty}}\nonumber \\
    &\leq \sqrt{\frac{4}{1-\gamma}\|\Delta_k\|_{\infty}}\,.\label{eq:vk_vstar}
\end{align}

Using Equations~\eqref{eq:noise_bd_tab_1}~\eqref{eq:noise_bd_tab_2}, we conclude that there exist universal constants $C,C_1$ such that with probability at least $1-\delta$ (interpreting the inequalities as element-wise):
\begin{align}
    \Delta_{K+1} &\geq -\frac{\alpha_{\eta} + \gamma^K}{1-\gamma} - C\sqrt{\eta\gamma^2\log\bigr(\tfrac{K|\mathcal{S}||\mathcal{A}|}{\delta}\bigr)}\sum_{k=1}^{K}\gamma^{K-k}\left(P^{\pi^{*}}\right)^{K-k}\sqrt{\var(V_k)}\nonumber \\
    &\geq -\frac{\alpha_{\eta} + \gamma^K}{1-\gamma} - C\sqrt{\eta\gamma^2\log\bigr(\tfrac{K|\mathcal{S}||\mathcal{A}|}{\delta}\bigr)}\sum_{k=1}^{K}\gamma^{K-k}\left(P^{\pi^{*}}\right)^{K-k}\sqrt{\var(V^{*})} \nonumber \\&\quad -C_1\sqrt{\frac{1}{1-\gamma}}\sqrt{\eta\gamma^2\log\bigr(\tfrac{K|\mathcal{S}||\mathcal{A}|}{\delta}\bigr)}\sum_{k=1}^{K}\gamma^{K-k}\sqrt{\|\Delta_k\|_{\infty}} \nonumber \\
    &\geq -\frac{\alpha_{\eta} + \gamma^K}{1-\gamma} - C\sqrt{\eta\gamma^2\log\bigr(\tfrac{K|\mathcal{S}||\mathcal{A}|}{\delta}\bigr)}(I-\gamma P^{\pi*})^{-1}\sqrt{\var(V^{*})} \nonumber \\&\quad -C_1\sqrt{\frac{1}{1-\gamma}}\sqrt{\eta\gamma^2\log\bigr(\tfrac{K|\mathcal{S}||\mathcal{A}|}{\delta}\bigr)}\sum_{k=1}^{K}\gamma^{K-k}\sqrt{\|\Delta_k\|_{\infty}} \nonumber \\
    &\geq -\frac{\alpha_{\eta} + \gamma^K}{1-\gamma} - C\sqrt{\frac{\eta \gamma^2}{(1-\gamma)^3}\log\bigr(\tfrac{K|\mathcal{S}||\mathcal{A}|}{\delta}\bigr)} \nonumber \\ &\quad-C_1\sqrt{\frac{1}{1-\gamma}}\sqrt{\eta\gamma^2\log\bigr(\tfrac{K|\mathcal{S}||\mathcal{A}|}{\delta}\bigr)}\sum_{k=1}^{K}\gamma^{K-k}\sqrt{\|\Delta_k\|_{\infty}}\,. \label{eq:lb_long}
\end{align}
In the above chain, the first inequality follows from Equations~\eqref{eq:noise_bd_tab_2}~\eqref{eq:lb_ub}, the second inequality from Equation~\eqref{eq:vk_vstar} and the fourth inequality from Equation~\eqref{eq:from_azar}. For the upper bound consider the following set of equations interpreting them element-wise which hold for a universal constant $C$ and with probability at least $1-\delta$.
\begin{align}
    \Delta_{K+1} &\leq \sum_{k=1}^{K}\gamma^{K-k}\left(\prod_{l=K}^{k+1}P^{\pi_l}\right) \tilde{\epsilon}_{k} + \gamma^{K}\left(\prod_{l=K}^{1}P^{\pi_l}\right)\Delta_1 \nonumber \\
    &\leq \frac{\alpha_{\eta} + \gamma^K}{1-\gamma} + C\sqrt{\eta\gamma^2\log\bigr(\tfrac{K|\mathcal{S}||\mathcal{A}|}{\delta}\bigr)}\sum_{k=1}^{K}\gamma^{K-k}\left(\prod_{l=K}^{k+1}P^{\pi_l}\right)\sqrt{\var(V_k)}\nonumber \\
    &\leq \frac{\alpha_{\eta} + \gamma^K}{1-\gamma} + C\sqrt{\eta\gamma^2\log\bigr(\tfrac{K|\mathcal{S}||\mathcal{A}|}{\delta}\bigr)}\sum_{k=1}^{K}\gamma^{K-k}\sqrt{\left(\prod_{l=K}^{k+1}P^{\pi_l}\right)\var(V_k)}\nonumber \\
    &= \frac{\alpha_{\eta} + \gamma^K}{1-\gamma} + C\sqrt{\eta\gamma^2\log\bigr(\tfrac{K|\mathcal{S}||\mathcal{A}|}{\delta}\bigr)}\sum_{k=1}^{K}\gamma^{\tfrac{K-k}{2}}\gamma^{\tfrac{K-k}{2}}\sqrt{\left(\prod_{l=K}^{k+1}P^{\pi_l}\right)\var(V_k)}\nonumber \\
    &\leq \frac{\alpha_{\eta} + \gamma^K}{1-\gamma} + C\sqrt{\eta\gamma^2\log\bigr(\tfrac{K|\mathcal{S}||\mathcal{A}|}{\delta}\bigr)}\sqrt{\sum_{k=1}^{K}\gamma^{K-k}}\sqrt{\sum_{k=1}^{K}\gamma^{K-k}\left(\prod_{l=K}^{k+1}P^{\pi_l}\right)\var(V_k)}\nonumber \\
    &\leq \frac{\alpha_{\eta} + \gamma^K}{1-\gamma} + C\sqrt{\frac{\eta\gamma^2}{1-\gamma}\log\bigr(\tfrac{K|\mathcal{S}||\mathcal{A}|}{\delta}\bigr)}\sqrt{\sum_{k=1}^{K}\gamma^{K-k}\left(\prod_{l=K}^{k+1}P^{\pi_l}\right)\var(V_k)}\nonumber \\
    &\leq \frac{\alpha_{\eta} + \gamma^K}{1-\gamma} + C\sqrt{\frac{\eta\gamma^2}{1-\gamma}\log\bigr(\tfrac{K|\mathcal{S}||\mathcal{A}|}{\delta}\bigr)}\sqrt{\sum_{k=K/2+1}^{K}\gamma^{K-k}\left(\prod_{l=K}^{k+1}P^{\pi_l}\right)\var(V_k) + \frac{\gamma^{K/2}}{(1-\gamma)^3}}\,.\label{eq:ub_long}
\end{align}
In the above chain, the first inequality follows from Equation~\eqref{eq:lb_ub}, the second inequality follows from Equation~\eqref{eq:noise_bd_tab_2}, the third inequality follows from Jensen's inequality and noting that $P^{\pi}$ is a Markov operator, the fourth inequality via Cauchy-Schwartz and the last inequality by noting that that $\|\var(V_k)\|_{\infty} \leq 1/(1-\gamma)^2$.

It can now be verified that \citep[Lemma 5]{li2021q} applies in our setting to conclude that:

$$\sum_{k=K/2+1}^{K}\gamma^{K-k}\left(\prod_{l=K}^{k+1}P^{\pi_l}\right)\var(V_k) \leq \frac{4}{\gamma^2(1-\gamma)^2}\left(1+2\max_{K/2+1\leq k\leq K}\|\Delta_k\|_{\infty}\right)\,.$$


Using the equation above, along with Equations~\eqref{eq:ub_long} and~\eqref{eq:lb_long}, we conclude there exists a universal constant $C$ such that with probability at least $1-\delta$, we have:

\begin{equation}\label{eq:recursion_tb}
    \|\Delta_{K+1}\|_{\infty} \leq \frac{\alpha_{\eta} + \gamma^K}{1-\gamma} +C\sqrt{\frac{\eta}{(1-\gamma)^3}\log\bigr(\tfrac{K|\mathcal{S}||\mathcal{A}|}{\delta}\bigr)}\sqrt{1 + \max_{K/2+1\leq k\leq K}\|\Delta_k\|_{\infty}+  \frac{\gamma^{K/2}}{(1-\gamma)}} 
\end{equation}

We note that this works with every $K$ replaced with $l$ for any $l\leq K$, importantly under the same event (with probability at least $1-\delta$) described above for which Equation~\eqref{eq:recursion_tb} holds. Define $L = \frac{K}{2^{s}}$ for some $s =  \lceil\log_2(1+C\log(\tfrac{1}{1-\gamma}))\rceil$. Under the conditions of the Theorem, i.e, $K > C_2 \frac{1}{(1-\gamma)}\left(\log(\tfrac{1}{1-\gamma})\right)^2$, we have $\frac{\gamma^{L/2}}{1-\gamma} < 1$. Therefore we conclude from the discussion above that for every $K\geq l \geq L$, we must have:
\begin{align}
    \|\Delta_{l+1}\|_{\infty} &\leq \frac{\alpha_{\eta} + \gamma^L}{1-\gamma} +C\sqrt{\frac{\eta}{(1-\gamma)^3}\log\bigr(\tfrac{K|\mathcal{S}||\mathcal{A}|}{\delta}\bigr)}\sqrt{1 + \max_{l/2+1\leq k\leq l}\|\Delta_k\|_{\infty}}\nonumber\\
    &\leq \frac{\alpha_{\eta} + \gamma^L}{1-\gamma} + C\sqrt{\frac{\eta\log\bigr(\tfrac{K|\mathcal{S}||\mathcal{A}|}{\delta}\bigr)}{(1-\gamma)^3}}+C\sqrt{\frac{\eta\log\bigr(\tfrac{K|\mathcal{S}||\mathcal{A}|}{\delta}\bigr)}{(1-\gamma)^3}}\sqrt{ \max_{l/2+1\leq k\leq l}\|\Delta_k\|_{\infty}}
    \label{eq:recursion_tb_gen} 
\end{align}

To analyze this recursion, we have the following lemma which establishes hyper-contractivity, whose proof we defer to Section~\ref{subsec:pf_hyp_cont}.
\begin{lemma}\label{lem:hyper_contract}
Suppose $\alpha,\beta \geq 0$. Consider the function $f:\mathbb{R}^{+}\to \mathbb{R}^{+}$ given by $f(u) = \alpha + \beta \sqrt{u}$. Then, $f$ has the unique fixed point: $u^{*} := \left(\frac{\beta + \sqrt{\beta^2+4\alpha}}{2}\right)^2$. For $t\in \mathbb{N}$, denoting $f^{(t)}$ to be the $t$ fold composition of $f$ with itself, we have for any $u\in \mathbb{R}^{+}$:

$$|f^{(t)}(u) - u^{*}| \leq \beta^{(2-\tfrac{1}{2^{t-1}})}|u-u^{*}|^{\tfrac{1}{2^{t}}}\,.$$
\end{lemma}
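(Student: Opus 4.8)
The plan is to reduce both claims to the elementary fact that the square root is $\tfrac12$-H\"older continuous, namely $|\sqrt{a}-\sqrt{b}| \le \sqrt{|a-b|}$ for all $a,b \ge 0$. This single inequality drives the whole argument, and there is no genuine obstacle beyond bookkeeping of exponents.

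First I would establish the fixed point. Setting $x = \sqrt{u} \ge 0$, the equation $f(u)=u$ becomes $x^2 - \beta x - \alpha = 0$. Since $\alpha,\beta \ge 0$ we have $\sqrt{\beta^2+4\alpha}\ge \beta$, so this quadratic in $x$ has exactly one nonnegative root, $x = \tfrac{\beta + \sqrt{\beta^2+4\alpha}}{2}$ (the other root is $\le 0$ and hence cannot equal a square root). Squaring gives the unique fixed point $u^* = x^2 = \left(\tfrac{\beta + \sqrt{\beta^2+4\alpha}}{2}\right)^2$, matching the stated value; uniqueness is immediate because $u\mapsto\sqrt{u}$ is a bijection of $\mathbb{R}^+$.

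Next comes the one-step estimate, which is the heart of the lemma. Because $u^* = \alpha + \beta\sqrt{u^*}$, subtracting this from $f(u) = \alpha + \beta\sqrt{u}$ produces the exact identity $f(u) - u^* = \beta(\sqrt{u}-\sqrt{u^*})$. Applying the H\"older bound to $a=u$, $b=u^*$ then yields $|f(u)-u^*| \le \beta\sqrt{|u-u^*|}$; the H\"older bound itself follows from $(\sqrt{a}-\sqrt{b})^2 = a+b-2\sqrt{ab}\le a+b-2\min(a,b)=|a-b|$ for $a,b\ge 0$. Note also that $f$ maps $\mathbb{R}^+$ into itself, so every iterate is nonnegative and all the square roots above are well defined.

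Finally I would close by induction on $t$. The base case $t=1$ is exactly the one-step estimate. For the inductive step, apply the one-step estimate with $f^{(t)}(u)$ in place of $u$ and substitute the inductive hypothesis:
\begin{align*}
|f^{(t+1)}(u) - u^*| &\le \beta\, |f^{(t)}(u) - u^*|^{1/2} \\
&\le \beta \left( \beta^{2 - 1/2^{t-1}}\, |u - u^*|^{1/2^t} \right)^{1/2} = \beta^{2 - 1/2^t}\, |u - u^*|^{1/2^{t+1}},
\end{align*}
where the last equality uses $1 + \tfrac12\bigl(2 - \tfrac{1}{2^{t-1}}\bigr) = 2 - \tfrac{1}{2^t}$ and $\tfrac12\cdot\tfrac{1}{2^t} = \tfrac{1}{2^{t+1}}$. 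This is precisely the claimed bound at step $t+1$, completing the induction. The only point requiring care is this exponent arithmetic; everything else is a direct consequence of the $\tfrac12$-H\"older property of the square root.
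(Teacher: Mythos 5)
Your proof is correct and follows essentially the same route as the paper's: both derive the fixed point from the quadratic in $\sqrt{u^*}$, both reduce the contraction estimate to the $\tfrac12$-H\"older bound $|\sqrt{a}-\sqrt{b}|\le\sqrt{|a-b|}$, and both close by the same induction on $t$ (the paper unrolls the recursion from the top while you build it up from the base case, which is only a cosmetic difference). Your exponent bookkeeping checks out, so nothing further is needed.
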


Now consider for $0 \leq a < s$,
$$u_a := \sup_{\tfrac{K}{2^{s-a}} \leq l\leq K} \|\Delta_{l+1}\|_{\infty}$$

In lemma~\ref{lem:hyper_contract}, define $f$ with $\alpha = \frac{\alpha_{\eta} + \gamma^L}{1-\gamma} + C\sqrt{\frac{\eta\log\bigr(\tfrac{K|\mathcal{S}||\mathcal{A}|}{\delta}\bigr)}{(1-\gamma)^3}} $ and $\beta = C\sqrt{\frac{\eta\log\bigr(\tfrac{K|\mathcal{S}||\mathcal{A}|}{\delta}\bigr)}{(1-\gamma)^3}}$. The fixed point $u^{*}$ is such that: \begin{equation}
    u^{*} \leq C\left[\frac{\eta\log\bigr(\tfrac{K|\mathcal{S}||\mathcal{A}|}{\delta}\bigr)}{(1-\gamma)^3} + \frac{\alpha_{\eta} + \gamma^L}{1-\gamma} + \sqrt{\frac{\eta\log\bigr(\tfrac{K|\mathcal{S}||\mathcal{A}|}{\delta}\bigr)}{(1-\gamma)^3}}\right]
\end{equation}

Clearly, by Equation~\eqref{eq:recursion_tb_gen}, we have: $u_a \leq f(u_{a-1})$ and $\|\Delta_{K+1}\|\leq f(u_{s-1})$. By monotonicity of $f(\cdot)$ and  the fact that  $u_0 \leq \frac{1}{1-\gamma} $ (Lemma~\ref{lem:uniform_iterate}):
$$u_{s-1} \leq f^{(s-1)}(u_0) \leq f^{(s-1)}\left(\tfrac{1}{1-\gamma}\right)\,.$$

Therefore, $$\|\Delta_{K+1}\|_{\infty} \leq f(u_{s-1}) \leq f^{(s)}\left(\tfrac{1}{1-\gamma}\right)$$
Applying Lemma~\ref{lem:hyper_contract}, we conclude:
\begin{equation}\label{eq:almost_final}
   \|\Delta_{K+1}\|_{\infty} \leq u^{*} + \beta^{\left(2-\frac{1}{2^{s-1}}\right)}\bigr|\tfrac{1}{1-\gamma}-u^{*}\bigr|^{\tfrac{1}{2^{s}}}
\end{equation}

Note that under the constraints on the parameter $\eta$ and $K$ as stated in the Theorem, we must have $u^{*} \leq \tfrac{1}{(1-\gamma)^3}$. By our choice of $s$, we must have: $|\frac{1}{(1-\gamma)}-u^{*}|^{\tfrac{1}{2^s}} \leq C^{\prime}$. Using this in Equation~\ref{eq:almost_final} and the fact that $\beta^{\left(2-\frac{1}{2^{s-1}}\right)}\leq \beta + \beta^2$, we conclude that with probability at-least $1-\delta$, we must have:

$$\|\Delta_{K+1}\| \leq C\left[\frac{\eta\log\bigr(\tfrac{K|\mathcal{S}||\mathcal{A}|}{\delta}\bigr)}{(1-\gamma)^3} + \frac{\alpha_{\eta} + \gamma^L}{1-\gamma} + \sqrt{\frac{\eta\log\bigr(\tfrac{K|\mathcal{S}||\mathcal{A}|}{\delta}\bigr)}{(1-\gamma)^3}}\right]$$

This proves the first part of the theorem. For the second part, we directly substitute the values provided to verify that we indeed obtain $\epsilon$ error.

\section{Proof of Theorem~\ref{thm:tabular_reuse}}
 We will now show uniform convergence type result under Assumption~\ref{assump:bounded_support}. For $(s,a) \in \mathcal{S}\times\mathcal{A}$ and $s^{\prime}\in \mathsf{supp}(P(\cdot|s,a))$. We define the random variables for all $k$:
 \begin{equation*}
     \begin{gathered}
     \hat{P}_k(s^{\prime}|s,a) = \eta \sum_{j=1}^{N}\sum_{i=1}^{B}(1-\eta)^{\tilde{N}^{k,j}_{i}(s,a)}\mathbbm{1}(\tilde{s}^{k,j}_{i+1}=s^{\prime}, \tilde{s}^{k,j}_{i}= s, \tilde{a}^{k,j}_i = a)\\
     \bar{P}_k(s^{\prime}|s,a) := \eta \sum_{j=1}^{N}\sum_{i=1}^{B}(1-\eta)^{\tilde{N}^{k,j}_{i}(s,a)}P(s^{\prime}|s,a)\mathbbm{1}( \tilde{s}^{k,j}_{i}= s, \tilde{a}^{k,j}_i = a).
     \end{gathered}
 \end{equation*}

\begin{lemma}\label{lem:uniform_convergence}
Suppose Assumption~\ref{assump:bounded_support} holds. Then, with probability at-least $1-\delta$, we must have for any fixed $(s,a)$:
$$\sum_{s^{\prime}\in \mathsf{supp}(P(\cdot|s,a))}|\hat{P}_k(s^{\prime}|s,a) - \bar{P}_k(s^{\prime}|s,a)| \leq C\sqrt{\eta \bar{d} \log(4/\delta)} + C\eta \bar{d} \log(4/\delta) $$
\end{lemma}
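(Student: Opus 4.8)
The plan is to fix the pair $(s,a)$ and recognize $\hat P_k(s'|s,a) - \bar P_k(s'|s,a)$ as a weighted sum of martingale differences, one for each occurrence of $(s,a)$ along the coupled trajectory. Writing
$$w_{j,i} := \eta (1-\eta)^{\tilde N^{k,j}_i(s,a)}\,\mathbbm{1}((\tilde s^{k,j}_i,\tilde a^{k,j}_i)=(s,a)), \qquad \xi_{j,i}(s') := \mathbbm{1}(\tilde s^{k,j}_{i+1}=s') - P(s'|s,a),$$
one checks directly that $\hat P_k(s'|s,a) - \bar P_k(s'|s,a) = \sum_{j,i} w_{j,i}\xi_{j,i}(s')$. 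Two elementary weight bounds will drive everything: $w_{j,i}\le\eta$, and, because the exponents $\tilde N^{k,j}_i(s,a)$ run over the distinct integers $0,1,\dots,\tilde N^{k}(s,a)-1$ as $(j,i)$ ranges over the occurrences of $(s,a)$, the geometric sum gives $\sum_{j,i}w_{j,i}=1-(1-\eta)^{\tilde N^{k}(s,a)}\le 1$ and hence $\sum_{j,i}w_{j,i}^2\le\eta\sum_{j,i}w_{j,i}\le\eta$.

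The heart of the argument is exhibiting a filtration under which each $\xi_{j,i}$ is a martingale difference while $w_{j,i}$ is predictable — this is exactly where reverse replay and the independence of the coupled buffers enter. I would order the index pairs by $(j',i')\prec(j,i)$ iff $j'>j$, or $j'=j$ and $i'<i$, and set $\mathcal G_{j,i} := \sigma(\text{buffers } l>j)\vee\sigma(\tilde s^{k,j}_1,\tilde a^{k,j}_1,\dots,\tilde s^{k,j}_i,\tilde a^{k,j}_i)$. By construction $\tilde N^{k,j}_i(s,a)$ counts precisely the occurrences of $(s,a)$ that are $\prec (j,i)$, so $w_{j,i}$ is $\mathcal G_{j,i}$-measurable, and $(\mathcal G_{j,i})$ is increasing along $\prec$. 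Crucially, the next state $\tilde s^{k,j}_{i+1}$ is \emph{not} contained in $\mathcal G_{j,i}$, and by the Markov property within buffer $j$ together with the exact independence across buffers of the coupled process, $\mathbb E[\mathbbm{1}(\tilde s^{k,j}_{i+1}=s')\mid\mathcal G_{j,i}] = P(s'\mid\tilde s^{k,j}_i,\tilde a^{k,j}_i)$. Since $w_{j,i}$ carries the indicator of $(\tilde s^{k,j}_i,\tilde a^{k,j}_i)=(s,a)$, this yields $\mathbb E[w_{j,i}\xi_{j,i}(s')\mid\mathcal G_{j,i}]=0$, so for every fixed $s'$ — and every fixed signed combination — the partial sums form a martingale.

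\emph{Main obstacle and final step.} To obtain the $\ell_1$ bound with the correct $\sqrt{\bar d}$ scaling rather than a crude factor $\bar d$, I would use the exact identity $\|v\|_1 = \max_{\sigma\in\{\pm1\}^{\bar d}}\langle\sigma,v\rangle$. For each fixed sign pattern $\sigma$ the scalar process $\sum_{j,i} Y_{j,i}$ with $Y_{j,i}:=w_{j,i}\bigl(\sigma_{\tilde s^{k,j}_{i+1}}-\sum_{s'}\sigma_{s'}P(s'|s,a)\bigr)$ is a martingale with increments $|Y_{j,i}|\le 2w_{j,i}\le 2\eta$ and predictable quadratic variation $\sum_{j,i}\mathbb E[Y_{j,i}^2\mid\mathcal G_{j,i}]\le\sum_{j,i}w_{j,i}^2\le\eta$. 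Freedman's inequality then gives, for each $\sigma$, a one-sided tail of order $\sqrt{\eta\log(1/\delta')}+\eta\log(1/\delta')$; taking $\delta'=\delta/2^{\bar d}$ and union bounding over the $2^{\bar d}$ patterns yields, with probability at least $1-\delta$, $\sum_{s'}|\hat P_k-\bar P_k| = \max_\sigma\sum_{j,i}Y_{j,i}\le C\sqrt{\eta(\bar d+\log(1/\delta))}+C\eta(\bar d+\log(1/\delta))$. Finally $\bar d+\log(1/\delta)\le\bar d\log(4/\delta)$ (using $\bar d\ge1$ and $\log 4>1$) recovers the stated bound. The genuinely delicate point is the conditional-mean-zero/predictability verification: it is the reverse traversal that places the freshly sampled next state outside $\mathcal G_{j,i}$, and the buffer independence of the coupling that lets us drop all later buffers when taking conditional expectations — with forward traversal the weight would depend on the very transition being centered and the martingale structure would collapse.
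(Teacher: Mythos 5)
Your proposal is correct and follows essentially the same route as the paper: represent the $\ell_1$ norm as $\max_{\sigma\in\{\pm1\}^{\bar d}}\langle\sigma,\hat P_k-\bar P_k\rangle$, verify that for each fixed sign pattern the weighted sum is a martingale under the reverse-order filtration with increments bounded by $2\eta$ and predictable quadratic variation at most $\eta$, apply Freedman's inequality, and union bound over the $2^{\bar d}$ sign vectors. Your write-up is in fact more explicit than the paper's about the predictability of the weights and the geometric-sum bound on the variance (which the paper imports from the proof of Lemma~\ref{lem:tabular_variance}), and the remaining discrepancies are only in absolute constants absorbed into $C$.
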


We refer to Section \ref{sec:proof_uniform_convergence} for the proof. We now proceed with the proof of Theorem~\ref{thm:tabular_reuse}. Recall the noiseless iteration $\bar{w}^k$ defined in the discussion following the statement of Lemma~\ref{lem:contraction_noiseless}. We define $\bar{D}_k := w^{k,0}_0 -\bar{w}^{k}$. Observe that we cannot apply Lemma~\ref{lem:tabular_variance} as in the proof of Theorem~\ref{thm:main_tabular} where we used $w = w^{k,0}_0$ in order to bound $\|\epsilon_k\|_{\infty}$. This is because of data re-use which causes $w^{k,0}_0$ to depend on the `variance' term. 

However, note that $\bar{w}^k$ is a deterministic sequence and we can apply Lemma~\ref{lem:tabular_variance} and then use the fact that $\bar{w}^{k} \approx w^{k,0}_0$ to show a similar concentration inequality. To this end, we prove the following lemma:

\begin{lemma}\label{lem:conc_bootstrap}
In the tabular setting, we have almost surely:
\begin{align}
    &\biggr|\biggr\langle \phi(s,a), \sum_{j=1}^{N}\prod_{l=N}^{j+1}\tilde{H}^{k,l}_{1,B} \tilde{L}^{k,j}(w) - \sum_{j=1}^{N}\prod_{l=N}^{j+1}\tilde{H}^{k,l}_{1,B} \tilde{L}^{k,j}(v)\biggr\rangle\biggr|\nonumber \\ &\leq \gamma \|w-v\|_{\phi} \sum_{s^{\prime}\in \mathsf{supp}(P(\cdot|s,a))} |\hat{P}_k(s^{\prime}|s,a)-\bar{P}_k(s^{\prime}|s,a)|
\end{align}
\end{lemma}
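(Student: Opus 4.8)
The plan is to reduce the claim to the explicit tabular formula in Equation~\eqref{eq:tabular_variance_exp}, applied once with weight vector $w$ and once with $v$. By linearity of the inner product, the left-hand side equals
$$\left|\sum_{j=1}^{N}\sum_{i=1}^{B}(1-\eta)^{\tilde{N}^{k,j}_i(s,a)}\eta\left(\tilde{\epsilon}^{k,j}_i(w) - \tilde{\epsilon}^{k,j}_i(v)\right)\mathbbm{1}\left((\tilde{s}^{k,j}_i, \tilde{a}^{k,j}_i)=(s,a)\right)\right|.$$
First I would compute $\tilde{\epsilon}^{k,j}_i(w) - \tilde{\epsilon}^{k,j}_i(v)$ from the definition in Equation~\eqref{eqn:epskljdefintion}. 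The reward terms $\tilde{r}^{k,j}_i - \tilde{R}^{k,j}_i$ are independent of the weight vector and cancel, so the difference is
$$\gamma\left[\sup_{a'}\langle w, \phi(\tilde{s}^{k,j}_{i+1}, a')\rangle - \sup_{a'}\langle v, \phi(\tilde{s}^{k,j}_{i+1}, a')\rangle\right] - \gamma\,\mathbb{E}_{s'\sim P(\cdot|\tilde{s}^{k,j}_i, \tilde{a}^{k,j}_i)}\left[\sup_{a'}\langle w, \phi(s', a')\rangle - \sup_{a'}\langle v, \phi(s', a')\rangle\right].$$

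The next step exploits the indicator: on the event $(\tilde{s}^{k,j}_i, \tilde{a}^{k,j}_i)=(s,a)$, the conditional law $P(\cdot|\tilde{s}^{k,j}_i, \tilde{a}^{k,j}_i)$ is exactly $P(\cdot|s,a)$. Introducing the shorthand $g(s') := \sup_{a'}\langle w, \phi(s', a')\rangle - \sup_{a'}\langle v, \phi(s', a')\rangle$ and expanding $g(\tilde{s}^{k,j}_{i+1}) = \sum_{s'} g(s')\mathbbm{1}(\tilde{s}^{k,j}_{i+1}=s')$, I would interchange the sums over $(j,i)$ and over $s'$. The term carrying $g(\tilde{s}^{k,j}_{i+1})$ re-sums precisely into $\gamma\sum_{s'} g(s')\hat{P}_k(s'|s,a)$ and the conditional-expectation term re-sums into $\gamma\sum_{s'} g(s')\bar{P}_k(s'|s,a)$, matching the two definitions stated just before the lemma. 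Thus the whole expression collapses to $\left|\gamma\sum_{s'} g(s')\big(\hat{P}_k(s'|s,a) - \bar{P}_k(s'|s,a)\big)\right|$.

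Finally I would bound $|g(s')|$ using that the maximum is $1$-Lipschitz: in the tabular setting $\phi(s,a)=e_{s,a}$, so $|g(s')| = |\max_{a'} w(s',a') - \max_{a'} v(s',a')| \le \max_{a'}|w(s',a') - v(s',a')| \le \|w-v\|_{\phi}$. The triangle inequality then yields $\gamma\|w-v\|_{\phi}\sum_{s'}|\hat{P}_k(s'|s,a) - \bar{P}_k(s'|s,a)|$. To justify restricting the sum to $\mathsf{supp}(P(\cdot|s,a))$, I would note that $\bar{P}_k$ is supported there by definition, and $\hat{P}_k$ is supported there almost surely, since any next state $\tilde{s}^{k,j}_{i+1}$ counted by $\hat{P}_k$ on the relevant event is drawn from $P(\cdot|s,a)$. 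The main obstacle is purely the bookkeeping in the re-summation: matching the rearranged indicator sums exactly to the definitions of $\hat{P}_k$ and $\bar{P}_k$ and confirming the support claim so no mass is lost. There is no probabilistic content here — the bound holds pointwise on each sample path, i.e.\ almost surely — so the argument is entirely an algebraic identification followed by the Lipschitz estimate.
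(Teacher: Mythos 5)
Your proposal is correct, and it follows the same overall reduction as the paper: rewrite the left-hand side via the explicit tabular formula in Equation~\eqref{eq:tabular_variance_exp}, observe that the reward terms cancel in $\tilde{\epsilon}^{k,j}_i(w)-\tilde{\epsilon}^{k,j}_i(v)$, re-sum the indicators to identify $\hat{P}_k(\cdot|s,a)-\bar{P}_k(\cdot|s,a)$, and then bound the per-state quantity $g(s^{\prime})=\sup_{a^{\prime}}\langle w,\phi(s^{\prime},a^{\prime})\rangle-\sup_{a^{\prime}}\langle v,\phi(s^{\prime},a^{\prime})\rangle$ by $\|w-v\|_{\phi}$. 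The one place you genuinely diverge is in that last bound: the paper invokes its auxiliary Lemma~\ref{lem:mvt_max} (a mean-value-theorem statement for the max, proved via log-sum-exp smoothing and a compactness/subsequence argument) to write $g(s^{\prime})$ \emph{exactly} as a convex combination $\sum_{a^{\prime}}\zeta(a^{\prime}|s^{\prime},w,v)\langle\phi(s^{\prime},a^{\prime}),w-v\rangle$ and then applies H\"older, whereas you use the elementary one-line estimate $|\sup_i x_i-\sup_i y_i|\leq\sup_i|x_i-y_i|$. For the purposes of this lemma the two are interchangeable --- the paper never uses the exact convex-combination representation beyond taking its absolute value --- so your route is strictly simpler here and bypasses Lemma~\ref{lem:mvt_max} entirely. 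Your explicit treatment of why the sum may be restricted to $\mathsf{supp}(P(\cdot|s,a))$ (and hence why the statement is only almost sure) is a detail the paper leaves implicit, and it is handled correctly.
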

 
The proof follows from elementary arguments via. the mean value theorem and triangle inequality. We refer to Section \ref{sec:proof_conc_bootstrap} for the proof. We are now ready to give the proof of Theorem~\ref{thm:tabular_reuse}. 

\begin{proof}[Proof of Theorem~\ref{thm:tabular_reuse}]

We proceed with a similar setup as the proof of Theorem~\ref{thm:main_tabular}. Notice that due to data reuse, the noise $\epsilon_k$ in outer loop $k$ is given by $\epsilon_1(\tilde{w}^{k,1}_1)$
$$Q^{k+1,1}_1 = \mathcal{T}\left[Q^{k,1}_1\right] + \epsilon_1(\tilde{Q}^{k,1}_1)\,.$$ 
We also define the noiseless Q iteration such that $\bar{Q}^{1,1}_1 = \tilde{Q}^{1,1}_1$ and $\bar{Q}^{k+1,1}_1 = \mathcal{T}(\bar{Q}^{k,1}_1)$. Let $\bar{\Delta}_k := \tilde{Q}^{k,1}_1 -\bar{Q}^{k,1}_1 $.

Now, by Lemma~\ref{lem:bias_variance}, we can write down 
\begin{align}
    &\epsilon_1(\tilde{Q}^{k,1}_1)(s,a) = (1-\eta)^{\tilde{N}^{1}(s,a)}(\tilde{Q}^{k,1}_1 - \tilde{Q}^{k,*}) + \langle \phi(s,a), \sum_{j=1}^{N}\prod_{l=N}^{j+1}\tilde{H}^{1,l}_{1,B} \tilde{L}^{1,j}(\tilde{Q}^{k,1}_1)\rangle \nonumber \\
    &= (1-\eta)^{\tilde{N}^{1}(s,a)}(\tilde{Q}^{k,1}_1 - \tilde{Q}^{k,*}) + \bigr\langle \phi(s,a), \sum_{j=1}^{N}\prod_{l=N}^{j+1}\tilde{H}^{1,l}_{1,B} \left(\tilde{L}^{1,j}(\tilde{Q}^{k,1}_1)-\tilde{L}^{1,j}(\bar{Q}^{k,1}_1)\right)\bigr\rangle \nonumber \\
    &\quad + \bigr\langle \phi(s,a), \sum_{j=1}^{N}\prod_{l=N}^{j+1}\tilde{H}^{1,l}_{1,B} \tilde{L}^{1,j}(\bar{Q}^{k,1}_1)\bigr\rangle \label{eq:err_decomp_dr}
\end{align}

We bound each of the terms above separately. By union bound, the following statements all hold simulataneously with probability at-least $1-\delta$. By Theorem~\ref{lem:number_lower_bound}, whenever $NB > C \frac{\tmix}{\mu_{\min}}\log(\tfrac{|\mathcal{S}||\mathcal{A}}{\delta})$, we must have $\tilde{N}^{(1)}(s,a) \geq \frac{\mu_{\min}}{2}NB$. A simple observation using recursion shows that $\tilde{Q}^{k,1}_1(s,a),\bar{Q}^{k,1}_1(s,a) \in [0,\tfrac{1}{1-\gamma}]$. Using Lemmas~\ref{lem:uniform_convergence} and~\ref{lem:conc_bootstrap} we conclude that we must have:
\begin{align}
    &\sup_{(s,a)\in \mathcal{S}\times\mathcal{A}}\biggr|\bigr\langle \phi(s,a), \sum_{j=1}^{N}\prod_{l=N}^{j+1}\tilde{H}^{1,l}_{1,B}\left(\tilde{L}^{1,j}(\tilde{Q}^{k,1}_1)-\tilde{L}^{1,j}(\bar{Q}^{k,1}_1)\right)\bigr\rangle\biggr| \nonumber \\&\leq C\|\bar{\Delta}_k\|_{\infty} \left[\sqrt{\eta\bar{d} \log\bigr(\tfrac{|\mathcal{S}||\mathcal{A}|}{\delta}\bigr)} + \eta\bar{d} \log\bigr(\tfrac{|\mathcal{S}||\mathcal{A}|}{\delta}\bigr)\right]
\end{align}

Now observe that $\bar{Q}^{k,1}_1$ is a deterministic sequence. Therefore, using Lemma~\ref{lem:tabular_variance} uniformly for every $k\leq K$ and $(s,a)\in \mathcal{S}\times\mathcal{A}$: 

\begin{align}
\biggr|\bigr\langle \phi(s,a), \sum_{j=1}^{N}\prod_{l=N}^{j+1}\tilde{H}^{1,l}_{1,B} \tilde{L}^{1,j}(\bar{Q}^{k,1}_1)\bigr\rangle\biggr| \leq C\sqrt{\eta \log\bigr(\tfrac{|\mathcal{S}||\mathcal{A}|K}{\delta}\bigr)(1+\var(\bar{V}_k)(s,a))} + C\frac{\eta \log\bigr(\tfrac{|\mathcal{S}||\mathcal{A}|K}{\delta}\bigr)}{1-\gamma}
\end{align}
Where, $\var(\bar{V}_k) := \var(V)(s,a;\bar{Q}^{k,1}_1)$.

We now combine all the above with Equation~\eqref{eq:err_decomp_dr} to show that with probability at least $(1-\delta)$, we must have uniformly for every $k\leq K$ and $(s,a) \in \mathcal{S}\times\mathcal{A}$:

\begin{align}
    |\tilde{\epsilon}_1(\tilde{Q}^{k,1}_1)(s,a)| &\leq \frac{1}{1-\gamma}\exp\left(-\tfrac{\eta\mu_{\min}NB}{2}\right) + C\|\bar{\Delta}_k\|_{\infty} \left[\sqrt{\eta\bar{d} \log\bigr(\tfrac{|\mathcal{S}||\mathcal{A}|}{\delta}\bigr)} + \eta\bar{d} \log\bigr(\tfrac{|\mathcal{S}||\mathcal{A}|}{\delta}\bigr)\right] \nonumber \\
    &\quad+ C\sqrt{\eta \log\bigr(\tfrac{|\mathcal{S}||\mathcal{A}|K}{\delta}\bigr)(1+\var(\bar{V}_k)(s,a))} + C\frac{\eta \log\bigr(\tfrac{|\mathcal{S}||\mathcal{A}|K}{\delta}\bigr)}{1-\gamma} \label{eq:first_err_bd_dr}
\end{align}

We will now present a crude bound on $\|\bar{\Delta}_k\|_{\infty}$ to reduce the analysis of \qrexdr~to the analysis of \qrex.

\begin{claim}\label{claim:crude_bd}
Whenever $\eta \leq C_1\frac{(1-\gamma)^2}{\bar{d}\log\bigr(\tfrac{|\mathcal{S}||\mathcal{A}|}{\delta}\bigr)}$, with probability at-least $1-\delta$, we must have for every $k\leq K$ uniformly:
$$\|\bar{\Delta}_k\|_{\infty} \leq C\tfrac{\exp\left(-\tfrac{\eta\mu_{\min}NB}{2}\right)}{(1-\gamma)^2}  +C\sqrt{\tfrac{\eta}{(1-\gamma)^4} \log\bigr(\tfrac{|\mathcal{S}||\mathcal{A}|K}{\delta}\bigr)}$$

Applying this directly to Equation~\eqref{eq:first_err_bd_dr}, we conclude that with probability at least $(1-\delta)$, we must have uniformly for every $k\leq K$ and $(s,a) \in \mathcal{S}\times\mathcal{A}$:

\begin{align}
    |\tilde{\epsilon}_1(\tilde{Q}^{k,1}_1)(s,a)| &\leq \frac{C}{(1-\gamma)}\exp\left(-\tfrac{\eta\mu_{\min}NB}{2}\right) + C\frac{\eta\log\bigr(\tfrac{|\mathcal{S}||\mathcal{A}|K}{\delta}\bigr)}{(1-\gamma)^2}\sqrt{\bar{d} }   \nonumber \\
    &\quad+ C\sqrt{\eta \log\bigr(\tfrac{|\mathcal{S}||\mathcal{A}|K}{\delta}\bigr)(1+\var(\bar{V}_k)(s,a))} + C\frac{\eta \log\bigr(\tfrac{|\mathcal{S}||\mathcal{A}|K}{\delta}\bigr)}{1-\gamma} \label{eq:second_err_bd_dr}
\end{align}

\end{claim}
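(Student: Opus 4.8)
The plan is to turn the pair of recursions for $\tilde{Q}^{k,1}_1$ and $\bar{Q}^{k,1}_1$ into a single contractive recursion for $\bar{\Delta}_k := \tilde{Q}^{k,1}_1 - \bar{Q}^{k,1}_1$ and then unroll it. Subtracting $\bar{Q}^{k+1,1}_1 = \mathcal{T}(\bar{Q}^{k,1}_1)$ from $\tilde{Q}^{k+1,1}_1 = \mathcal{T}(\tilde{Q}^{k,1}_1) + \tilde{\epsilon}_1(\tilde{Q}^{k,1}_1)$ and using that $\mathcal{T}$ is $\gamma$-contractive in $\|\cdot\|_{\infty}$ (Lemma~\ref{lem:contraction_noiseless}, recalling that $\|\cdot\|_{\phi} = \|\cdot\|_{\infty}$ in the tabular case), I obtain
$$\|\bar{\Delta}_{k+1}\|_{\infty} \leq \gamma\|\bar{\Delta}_k\|_{\infty} + \|\tilde{\epsilon}_1(\tilde{Q}^{k,1}_1)\|_{\infty},$$
with base case $\bar{\Delta}_1 = 0$ since $\bar{Q}^{1,1}_1 = \tilde{Q}^{1,1}_1$ by definition. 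Everything below is carried out on the common $1-\delta$ event of Equation~\eqref{eq:first_err_bd_dr}, for which a union bound over $k \leq K$ has already been taken.

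Next I would substitute the bound on $|\tilde{\epsilon}_1(\tilde{Q}^{k,1}_1)(s,a)|$ from Equation~\eqref{eq:first_err_bd_dr} into this recursion. The crucial feature is the self-referential contribution $C\|\bar{\Delta}_k\|_{\infty}\left[\sqrt{\eta\bar{d}\log\left(\frac{|\mathcal{S}||\mathcal{A}|}{\delta}\right)} + \eta\bar{d}\log\left(\frac{|\mathcal{S}||\mathcal{A}|}{\delta}\right)\right]$, which arises from data reuse through Lemma~\ref{lem:conc_bootstrap} and feeds $\|\bar{\Delta}_k\|_{\infty}$ back into the recursion. The hypothesis $\eta \leq C_1(1-\gamma)^2/(\bar{d}\log(\frac{|\mathcal{S}||\mathcal{A}|}{\delta}))$ is exactly calibrated to this term: it forces $\sqrt{\eta\bar{d}\log(\frac{|\mathcal{S}||\mathcal{A}|}{\delta})} \leq \sqrt{C_1}(1-\gamma)$ and $\eta\bar{d}\log(\frac{|\mathcal{S}||\mathcal{A}|}{\delta}) \leq C_1(1-\gamma)$, so taking $C_1$ small makes the coefficient of $\|\bar{\Delta}_k\|_{\infty}$ at most $\tfrac{1-\gamma}{2}$, and combined with the $\gamma\|\bar{\Delta}_k\|_{\infty}$ term the effective one-step multiplier is $\gamma + \tfrac{1-\gamma}{2} = \tfrac{1+\gamma}{2} < 1$. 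For the non-self-referential part, I would bound $\var(\bar{V}_k)(s,a) \leq 1/(1-\gamma)^2$ (since $\bar{Q}^{k,1}_1 \in [0, \tfrac{1}{1-\gamma}]$), collapsing $\sqrt{\eta\log(1+\var)}$ into $C\sqrt{\eta\log(\frac{|\mathcal{S}||\mathcal{A}|K}{\delta})/(1-\gamma)^2}$, which yields the driving term
$$E := \frac{1}{1-\gamma}\exp\left(-\frac{\eta\mu_{\min}NB}{2}\right) + C\sqrt{\frac{\eta}{(1-\gamma)^2}\log\left(\frac{|\mathcal{S}||\mathcal{A}|K}{\delta}\right)} + C\frac{\eta}{1-\gamma}\log\left(\frac{|\mathcal{S}||\mathcal{A}|K}{\delta}\right).$$

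Finally I would unroll $\|\bar{\Delta}_{k+1}\|_{\infty} \leq \tfrac{1+\gamma}{2}\|\bar{\Delta}_k\|_{\infty} + E$ from $\bar{\Delta}_1 = 0$, summing the geometric series to get $\|\bar{\Delta}_k\|_{\infty} \leq E/(1-\tfrac{1+\gamma}{2}) = 2E/(1-\gamma)$ uniformly over $k \leq K$. Distributing the factor $2/(1-\gamma)$ reproduces the exponential term $\exp(-\eta\mu_{\min}NB/2)/(1-\gamma)^2$ and the square-root term $\sqrt{\eta\log(\frac{|\mathcal{S}||\mathcal{A}|K}{\delta})/(1-\gamma)^4}$ of the claim, while the remaining piece $\eta\log(\frac{|\mathcal{S}||\mathcal{A}|K}{\delta})/(1-\gamma)^2$ is of lower order and absorbed into the square-root term under the smallness of $\eta$ (i.e. $\eta\log(\frac{|\mathcal{S}||\mathcal{A}|K}{\delta}) \leq 1$ in the relevant parameter regime). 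The main obstacle, and the conceptual heart of the data-reuse analysis, is the second step: because the \emph{same} first-loop samples drive every outer loop, $\tilde{\epsilon}_1(\tilde{Q}^{k,1}_1)$ is correlated with its own argument $\tilde{Q}^{k,1}_1$, and the whole argument hinges on isolating this dependence as a feedback term whose coefficient is strictly below the contraction gap $1-\gamma$ — precisely what the smallness condition on $\eta$ purchases, letting the comparison against the deterministic path $\bar{Q}^{k,1}_1$ close.
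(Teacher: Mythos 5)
Your proposal is correct and follows essentially the same route as the paper: both use the $\gamma$-contractivity of $\mathcal{T}$ in the sup norm to get $\|\bar{\Delta}_{k+1}\|_{\infty} \leq \gamma\|\bar{\Delta}_k\|_{\infty} + \|\tilde{\epsilon}_k\|_{\infty}$, bound $\var(\bar{V}_k)\leq (1-\gamma)^{-2}$, use the smallness condition on $\eta$ to turn the self-referential term from Equation~\eqref{eq:first_err_bd_dr} into a feedback coefficient of at most $\tfrac{1-\gamma}{2}$ (giving the effective contraction factor $\tfrac{1+\gamma}{2}$), and unroll from $\bar{\Delta}_1=0$. The only cosmetic difference is that you track the lower-order $\eta\log(\cdot)/(1-\gamma)$ piece explicitly before absorbing it, whereas the paper folds it into the square-root term immediately.
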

\begin{proof}[Proof of Claim~\ref{claim:crude_bd}]

First note that $\mathcal{T}$ is $\gamma$ contractive under the sup norm. Therefore,
\begin{align}
    \|\bar{\Delta}_{k+1}\|_{\infty} &= \biggr\|\mathcal{T}(\tilde{w}^{k,1}_1) -\mathcal{T}(\bar{w}^{k,1}_1) + \tilde{\epsilon}_k \biggr\|_{\infty}\nonumber\\
    &\leq \biggr\|\mathcal{T}(\tilde{w}^{k,1}_1) -\mathcal{T}(\bar{w}^{k,1}_1)\biggr\|_{\infty} + \|\tilde{\epsilon}_k\|_{\infty} \nonumber\\
    &\leq \gamma \|\bar{\Delta}_k\|_{\infty} +\|\tilde{\epsilon}_k\|_{\infty} \label{eq:crude_contract}
\end{align}

In Equation~\eqref{eq:first_err_bd_dr}, note that $\var(\bar{V}_k)(s,a) \leq \frac{1}{(1-\gamma)^2}$. Therefore, under the conditions of this Claim, we can take $C_1$ small enough so that uniformly for every $(s,a)$ and $k\leq K$ with probability at-least $(1-\delta)$, Equation~\eqref{eq:first_err_bd_dr} becomes: 
$$\|\tilde{\epsilon}_k(\tilde{Q}^{k,1}_1)\|_{\infty} \leq \frac{\exp\left(-\tfrac{\eta\mu_{\min}NB}{2}\right)}{1-\gamma} + \|\bar{\Delta}_k\|_{\infty}\left(\frac{1-\gamma}{2}\right)+ C\sqrt{\tfrac{\eta}{(1-\gamma)^2} \log\bigr(\tfrac{|\mathcal{S}||\mathcal{A}|K}{\delta}\bigr)} \,.$$

Combining the display above and the fact that $\tilde{\epsilon}_k = \tilde{\epsilon}_1(\tilde{Q}^{k,1}_1)$ in Equation~\eqref{eq:crude_contract}, we conclude that with probability at-least $1-\delta$, for every $k\leq K$ uniformly:
\begin{align}
\|\bar{\Delta}_{k+1}\|_{\infty} \leq \frac{1+\gamma}{2}\|\bar{\Delta}_k\|_{\infty} + \tfrac{\exp\left(-\tfrac{\eta\mu_{\min}NB}{2}\right)}{1-\gamma}  +C\sqrt{\tfrac{\eta}{(1-\gamma)^2} \log\bigr(\tfrac{|\mathcal{S}||\mathcal{A}|K}{\delta}\bigr)}
\end{align}
Unrolling the recursion above and using the fact that $\|\bar{\Delta}_1\|_{\infty} = 0$, we conclude the statement of the claim.

\end{proof}

 We also note that the deterministic iterations $\bar{Q}^{k,1}_1$ converges exponentially in sup norm to $Q^{*}$ due to $\gamma$ contractivity of $\mathcal{T}$. That is:
\begin{equation}\label{eq:noiseless_convergence}
    \|\bar{Q}^{k,1}_1 - Q^{*}\|_{\infty} \leq \frac{\gamma^{k}}{(1-\gamma)}
\end{equation}
We are now ready to connect up with the proof of Theorem~\ref{thm:main_tabular} with minor modifications. We follow the same analysis as the proof of Theorem~\ref{thm:main_tabular} but with $\var(\tilde{V}_k)$ replaced with $\var(\bar{V}_k)$. Similar to the proof of Theorem~\ref{thm:main_tabular}, we can control $\var(\bar{V}_k)$ with respect to $\var(V^{*})$ and $\|\bar{Q}^{k,1}_1 - Q^{*}\|_{\infty}$ along with Equation~\eqref{eq:noiseless_convergence}. We also replace $\alpha_{\eta}$ with $$\alpha_{\eta}^{\mathsf{dr}} := \frac{C}{(1-\gamma)}\exp\left(-\tfrac{\eta\mu_{\min}NB}{2}\right) + C\frac{\eta\log\bigr(\tfrac{|\mathcal{S}||\mathcal{A}|K}{\delta}\bigr)
}{(1-\gamma)^2}\sqrt{\bar{d}}   
    + C\frac{\eta \log\bigr(\tfrac{|\mathcal{S}||\mathcal{A}|K}{\delta}\bigr)}{1-\gamma}$$
Therefore, we conclude a version of Equation~\eqref{eq:recursion_tb}:

    \begin{equation}\label{eq:recursion_tb_dr}
    \|\Delta_{K+1}\|_{\infty} \leq \frac{\alpha^{\mathsf{dr}}_{\eta}+\gamma^{K}}{1-\gamma} +C\sqrt{\frac{\eta}{(1-\gamma)^3}\log\bigr(\tfrac{K|\mathcal{S}||\mathcal{A}|}{\delta}\bigr)}\sqrt{1 +  \frac{\gamma^{K/2}}{(1-\gamma)}} 
\end{equation}

Where $\Delta_k := Q^{k,1}_1 - Q^{*}$. Using the bounds on the parameters given in the statement of the Theorem, we conclude the result.

\end{proof}

\section{Proofs of Concentration Inequalities}
\label{sec:conc_lemmas}
\subsection{Proof of Lemma~\ref{lem:linear_bias}}
\label{subsec:pf_lin_bias}
First, we leverage the techniques established in \citep[Lemma 28]{jain2021streaming} to show Lemma~\ref{lem:linearization_bias}. The proof follows by a simple re-writing of the proof of the aforementioned lemma which uses a linear approximation (in $\eta$) to $\tilde{H}^{k,j}_{1,B}$. We omit the proof for the sake of clarity.
\begin{lemma}\label{lem:linearization_bias}
Suppose $\eta B < \frac{1}{3}$. Then, the following PSD inequalities hold almost surely:

\begin{equation}
  I - 2\eta\left(1+\tfrac{\eta B}{1-2\eta B}\right)\sum_{i=1}^{B}\tilde\phi^{k,j}_i \left(\tilde\phi^{k,j}_i\right)^{\top}  \preceq \left(\tilde{H}^{k,j}_{1,B}\right)^{\top}\tilde{H}^{k,j}_{1,B}  \preceq I - 2\eta\left(1-\tfrac{\eta B}{1-2\eta B}\right)\sum_{i=1}^{B}\tilde\phi^{k,j}_i \left(\tilde\phi^{k,j}_i\right)^{\top}
\end{equation}
\end{lemma}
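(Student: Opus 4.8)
The plan is to fix an outer loop $k$ and buffer $j$, suppress these superscripts, and write $A_i := \tilde\phi_i\tilde\phi_i^\top$, so that $\tilde H_{1,B} = \prod_{i=1}^{B}(I - \eta A_i)$ (factors ordered left-to-right in $i$) and the target matrix is $M := \sum_{i=1}^{B}\tilde\phi_i\tilde\phi_i^\top$. I would use repeatedly that $0 \preceq A_i$, that $\|A_i\|_{\mathrm{op}} = \|\tilde\phi_i\|^2 \le 1$ (so $A_i^2 \preceq A_i$), and that each $I-\eta A_i$ is a contraction for $\eta<1$. Following the linear-approximation idea of \citep[Lemma 28]{jain2021streaming}, I would expand the ordered product in powers of $\eta$ as $\tilde H_{1,B} = I - \eta M + R$, where $R = \sum_{m\ge 2}(-\eta)^m\sum_{i_1<\cdots<i_m}A_{i_1}\cdots A_{i_m}$ collects all higher-order monomials.

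The key structural observation is that each monomial telescopes through scalar inner products, $A_{i_1}\cdots A_{i_m} = \tilde\phi_{i_1}\big(\prod_{l=1}^{m-1}\langle\tilde\phi_{i_l},\tilde\phi_{i_{l+1}}\rangle\big)\tilde\phi_{i_m}^\top$, i.e. a rank-one matrix of the form $\tilde\phi_{i_1}(\text{scalar})\tilde\phi_{i_m}^\top$ with $i_1<i_m$. Collecting these, $R = \Phi^\top G'\Phi$, where $\Phi\in\mathbb{R}^{B\times d}$ has rows $\tilde\phi_i^\top$ and $G'$ is a strictly upper-triangular $B\times B$ matrix; note also $M = \Phi^\top\Phi$, and set $\mathcal G := \Phi\Phi^\top$ for the $B\times B$ Gram matrix. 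Expanding $\tilde H_{1,B}^\top\tilde H_{1,B} = (I-\eta M + R)^\top(I-\eta M + R)$ and subtracting the linear part gives
\[
E := \tilde H_{1,B}^\top \tilde H_{1,B} - (I - 2\eta M) = \Phi^\top \mathcal E\,\Phi, \qquad \mathcal E := \eta^2\mathcal G + (G' + G'^\top) - \eta\big(\mathcal G G' + G'^\top\mathcal G\big) + G'^\top\mathcal G G',
\]
which is a symmetric $B\times B$ matrix. Because conjugation by $\Phi$ preserves the Loewner order, $\mathcal E \preceq \|\mathcal E\|_{\mathrm{op}} I$ immediately yields $E \preceq \|\mathcal E\|_{\mathrm{op}}\,\Phi^\top\Phi = \|\mathcal E\|_{\mathrm{op}}\,M$, and symmetrically $E \succeq -\|\mathcal E\|_{\mathrm{op}} M$. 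Hence the whole lemma reduces to the single scalar estimate $\|\mathcal E\|_{\mathrm{op}} \le \tfrac{2\eta^2 B}{1-2\eta B}$: substituting this into $E \preceq \|\mathcal E\|_{\mathrm{op}}M$ (resp. $E \succeq -\|\mathcal E\|_{\mathrm{op}}M$) reproduces exactly the claimed right (resp. left) PSD inequality, since $2\eta\cdot\tfrac{\eta B}{1-2\eta B} = \tfrac{2\eta^2 B}{1-2\eta B}$.

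It then remains to estimate $\|\mathcal E\|_{\mathrm{op}}$. Here I would use $\|\mathcal G\|_{\mathrm{op}} = \|M\|_{\mathrm{op}} \le B$ (as $M \preceq \sum_i\|\tilde\phi_i\|^2 I \preceq BI$), together with an entrywise bound on $G'$ coming from the chain sum: there are $\binom{b-a-1}{m-2}$ increasing chains of length $m$ from $a$ to $b$, and each inner product has modulus at most $1$, so $|G'_{ab}| \le \sum_{m\ge 2}\eta^m\binom{b-a-1}{m-2} = \eta^2(1+\eta)^{b-a-1}$, which is $O(\eta^2)$ once $\eta B < 1/3$ (so that $(1+\eta)^{b-a-1}\le e^{\eta B}\le e^{1/3}$). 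Summing over the strictly-upper-triangular band controls $\|G'\|_{\mathrm{op}}$ at order $\eta^2 B$, and the remaining terms of $\mathcal E$ are lower order. The hard part is recovering the sharp constant $\tfrac{2\eta^2 B}{1-2\eta B}$ rather than a loose $O(\eta^2 B)$: a naive triangle inequality over the four groups of terms in $\mathcal E$ overshoots the leading coefficient (which should be $\approx 2\eta^2 B$, as one verifies on the extremal configuration $\tilde\phi_1=\cdots=\tilde\phi_B$, where $\tilde H_{1,B}$, $E$, and $M$ all admit closed forms and the ratio $\langle v,Ev\rangle/\langle v,Mv\rangle$ evaluates to $(2B-1)\eta^2 + O(\eta^3)$), so one must exploit the partial cancellation between $\eta^2\mathcal G$ and $G'+G'^\top$ and absorb the residual geometric factors into the denominator $1-2\eta B$ via $\eta B<1/3$. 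An equivalent, and arguably cleaner, route to the same constant is the per-step telescoping identity $\Gamma_{m-1} - \Gamma_m = \eta(A_m\Gamma_{m-1} + \Gamma_{m-1}A_m) - \eta^2 A_m\Gamma_{m-1}A_m$ for the partial products $\Gamma_m := \big(\prod_{i=1}^m(I-\eta A_i)\big)^\top\big(\prod_{i=1}^m(I-\eta A_i)\big)$, where one uses $0\preceq \Gamma_m\preceq I$ to replace $\Gamma_{m-1}$ by $I$ in the first-order term and sums the replacement errors as a geometric series of ratio $\approx 2\eta B$.
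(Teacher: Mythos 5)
The paper does not actually spell out a proof of this lemma: it states that the argument is ``a simple re-writing'' of \citep[Lemma 28]{jain2021streaming}, based on a linear (in $\eta$) approximation of $\tilde H^{k,j}_{1,B}$ together with control of the higher-order remainder. Your first route is exactly in that spirit, and its structural part is correct and clean: the telescoping of the rank-one monomials, the identity $\tilde H_{1,B}^\top\tilde H_{1,B}-(I-2\eta M)=\Phi^\top\mathcal E\,\Phi$, the observation that conjugation by $\Phi$ reduces the two Loewner inequalities to the single scalar claim $\|\mathcal E\|_{\mathrm{op}}\le \tfrac{2\eta^2B}{1-2\eta B}$, and the sanity check on the extremal configuration $\tilde\phi_1=\cdots=\tilde\phi_B$ are all right.

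The gap is that you never prove the one estimate that carries the entire quantitative content of the lemma. You concede that the triangle inequality over the four blocks of $\mathcal E$ overshoots (indeed, your own entrywise bound $|G'_{ab}|\le\eta^2(1+\eta)^{b-a-1}$ already gives $\|G'+G'^\top\|_{\mathrm{op}}\lesssim 2e^{1/3}\eta^2B>2\eta^2B$ before the $\eta^2\mathcal G$ term is even added), and you appeal to an unspecified ``partial cancellation'' between $\eta^2\mathcal G$ and $G'+G'^\top$ without exhibiting it. The alternative telescoping route has the same problem in a different place: the recursion $\Gamma_m=(I-\eta A_m)\Gamma_{m-1}(I-\eta A_m)$ is correct, but ``replacing $\Gamma_{m-1}$ by $I$ in the first-order term'' means controlling $A_m\Delta_{m-1}+\Delta_{m-1}A_m$ with $\Delta_{m-1}:=I-\Gamma_{m-1}\succeq 0$, and the map $X\mapsto A_mX+XA_m$ does \emph{not} preserve the Loewner order, so $\Delta_{m-1}\preceq cI$ does not let you bound the anticommutator by $2cA_m$; you need an explicit matrix AM--GM step such as $\pm(A_m\Delta+\Delta A_m)\preceq \epsilon^{-1}A_m^2+\epsilon\Delta^2$ (or a full expansion of $\Delta_{m-1}$ back into the $A_i$'s) before the claimed geometric series of ratio $2\eta B$ can be summed to produce exactly the factor $\tfrac{\eta B}{1-2\eta B}$ and to explain the hypothesis $\eta B<\tfrac13$. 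Until one of these two estimates is carried out with the stated constant, the proof is a correct reduction followed by an unproven inequality, so it does not yet establish the lemma.
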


\begin{proof}[Proof of Lemma~\ref{lem:linear_bias}]
 We begin by proving the first part. Using Assumption \ref{assump:norm_bounded} and the fact that the decoupled trajectory $(\tilde s, \tilde a)_t$ is assumed to be mixed at the start of every buffer, we begin by first noting from Lemma~\ref{lem:linearization_bias} that:
    \begin{equation}\label{eq:expect_contraction}
        0 \preceq \mathbb{E}(\tilde{H}^{k,j}_{1,B})^{\top}\tilde{H}^{k,j}_{1,B} \preceq I - \eta B \mathbb{E}_{(s,a)\sim \mu}\phi(s,a)(\phi(s,a))^{\intercal} \preceq (1-\tfrac{\eta B}{\kappa})I  \,.
    \end{equation}
Now, observe that:
    \begin{align}\label{eq:sq_norm}
        \|\prod_{j=N}^{1}\tilde{H}^{k,j}_{1,B}g\|^2 = g^{\top}\prod_{j=1}^{N-1}(\tilde{H}^{k,j}_{1,B})^{\top}\left[(\tilde{H}^{k,N}_{1,B})^{\top}\tilde{H}^{k,N}_{1,B}\right]\prod_{j=N-1}^{1}\tilde{H}^{k,j}_{1,B} g
    \end{align}
    Now note that $\tilde H^{k,N}_{1,B}$ is independent of $\tilde H^{k,j}_{1,B}$ for $j \leq N-1$. Therefore, taking conditional expectation conditioned on $H^{k,j}_{1,B}$ for $j \leq N-1$ in Equation~\eqref{eq:sq_norm} and using Equation~\eqref{eq:expect_contraction}, we conclude:
    \begin{align}
        \mathbb{E}\|\prod_{j=N}^{1}\tilde{H}^{k,j}_{1,B}g\|^2 \leq (1-\tfrac{\eta B}{\kappa})\mathbb{E}\|\prod_{j=N-1}^{1}\tilde{H}^{k,j}_{1,B}g\|^2 \nonumber
    \end{align}
    Applying the equation above inductively, we conclude the result.
    
    We now prove Part 2. We apply Markov's inequality to Part 1 along with Lemma~\ref{lem:phi_norm} to show that with probability at least $1-\delta$:
    $$
    \|\prod_{j=N}^{1}\tilde{H}^{k,j}_{1,B}g\|_{\phi} \leq \|\prod_{j=N}^{1}\tilde{H}^{k,j}_{1,B}g\| \leq \tfrac{\exp\left(-\tfrac{\eta N B}{\kappa}\right)}{\sqrt{\delta}}\|g\| \leq \exp(-\tfrac{\eta NB}{\kappa})\sqrt{\tfrac{\kappa}{\delta}}\|g\|_{\phi}\,.
    $$

\end{proof}

\subsection{Proof of Lemma~\ref{lem:tabular_variance}}

\begin{proof}
 We intend to apply Freedman's inequality \citep{freedman1975tail} like in \citep[Theorem 4]{li2021q}, but in an asynchronous fashion and with Markovian data. Here, reverse experience replay endows our problem with the right filtration structure. Using Equation~\eqref{eq:tabular_variance_exp}, we can write \begin{equation}\label{eq:tabular_var_simp}
 \langle \phi(s,a),\sum_{j=1}^{N}\prod_{l=N}^{j+1}\tilde{H}^{k,l}_{1,B} \tilde{L}^{k,j}(w)\rangle = \eta \sum_{j=1}^{N}\sum_{i=1}^{B}X^{k,j}_i
 \end{equation}
    Where $X^{k,j}_i := (1-\eta)^{\tilde{N}^{k,j}_i(s,a)}\tilde{\epsilon}^{k,j}_i \mathbbm{1} \bigr((\tilde{s}^{k,j}_i, \tilde{a}^{k,j}_i) = (s,a)\bigr)$. This allows us to define the sequence of sigma algebras $\mathcal{F}^{k,j}_i = \sigma((\tilde{s}^{k,l}_{m},\tilde{a}^{k,l}_m): (l > j \text{ and } m \in [B]) \text{ or } (l = j \text{ and } m \leq i))$ - that is, it is the sigma algebra of all states and rewards which appeared before and including $(\tilde{s}^{k,j}_{i},\tilde{a}^{k,j}_i)$ inside the buffer $j $ and all the states in buffers $l > j$. Notice that $\tilde{N}^{k,j}_i(s,a)$ is measurable with respect to the sigma algebra $\mathcal{F}^{k,j}_i$. Using the fact that the buffers are independent, we conclude that: $\mathbb{E}\left[X^{k,j}_i|\mathcal{F}^{k,j}_i\right] = 0$ and 
    
    $$\mathbb{E}\left[|X^{k,j}_i|^2|\mathcal{F}^{k,j}_i\right] \leq 2\left[1+ \gamma^2\var(V)(s,a;w)\right]\mathbbm{1}\bigr((\tilde{s}^{k,j}_{i},\tilde{a}^{k,j}_i) = (s,a)\bigr)(1-\eta)^{2\tilde{N}^{k,j}_i(s,a)}\,.$$ 
    
    It is also clear from our assumptions that $|X^{j,k}_i| \leq \frac{2}{1-\gamma}$ almost surely. Consider the almost sure inequality for the sum of conditional variances:
    \begin{align}
        W^{k} &= \sum_{j = 1}^{N}\sum_{i=1}^{B}\mathbb{E}\left[|X^{k,j}_i|^2|\mathcal{F}^{k,j}_i\right] \nonumber
        \\ &\leq 2\left[1+\gamma^2\var(V)(s,a;w)\right]\sum_{j=1}^{K}\sum_{i=1}^{B}\mathbbm{1}\bigr((\tilde{s}^{k,j}_{i},\tilde{a}^{k,j}_i) = (s,a)\bigr)(1-\eta)^{2\tilde{N}^{k,j}_i(s,a)} \nonumber \\
        &= 2\left[1+\gamma^2\var(V)(s,a;w)\right] \sum_{t = 0}^{\tilde{N}^k(s,a)-1}(1-\eta)^{2t} \leq 2\left(\frac{1+\gamma^2\var(V)(s,a;w)}{\eta}\right)
    \end{align}
  We now apply \citep[Equation (144),Theorem 4]{li2021q} with $R = \frac{1}{1-\gamma}$ and $\sigma^2 = 2\left(\frac{1+\gamma^2\var(V)(s,a;w)}{\eta}\right)$ to conclude the result.
\end{proof}

\subsection{Proof of Theorem~\ref{thm:linear_variance}}
\label{subsec:pf_lin_var}
\begin{proof}
For the sake of convenience, in this proof we will take $\sigma^2 := 4(1+\|w\|_{\phi}^2)$.
Suppose $\lambda \in \mathbb{R}$. For $1 \leq m \leq N$ consider: $$X_{m} := \eta\lambda^{2}\sigma^2\biggr\langle x , \prod_{l=N}^{N-m+1}\tilde{H}^{k,l}_{1,B}\left(\prod_{l=N}^{N-m+1}\tilde{H}^{k,l}_{1,B}\right)^{\top} x\biggr\rangle + \lambda \biggr\langle x,\sum_{j=N-m+1}^{N}\left(\prod_{l=N}^{j+1}\tilde{H}^{k,l}_{1,B}\right) \tilde{L}^{k,j}(w)\biggr\rangle  $$

$$X_{0} := \|x\|^2\eta\lambda^{2}\sigma^2$$

Here we use the convention that $\prod_{l=N}^{j+1}\tilde{H}^{k,l}_{1,B} = I$ whenever $j +1 > N$.
We claim that the sequence $\exp(X_m)$ forms a super martingale under an appropriate filtration. In this proof only, consider the sigma algebra $\mathcal{F}_m$ to be the sigma algebra of all the state action reward tuples in buffers $N,\dots,N-m+1$ and let $\mathcal{F}_0$ be the trivial sigma algebra. Notice that $\exp(X_m)$ is $\mathcal{F}_m$ measurable.

\begin{lemma}\label{lem:sup_mart}
Fix $N\geq m\geq 1$. Suppose $Y \in \mathbb{R}^{d}$ is a $\mathcal{F}_{m-1}$ measurable random vector. Then, 

$$\mathbb{E}\left[\exp\left(\eta\lambda^2\sigma^2 \langle Y,\tilde{H}^{k,N-m+1}_{1,B}\left(\tilde{H}^{k,N-m+1}_{1,B}\right)^{\top}Y\rangle + \lambda \langle Y,\tilde{L}^{k,N-m+1}(w) \rangle \right)\biggr|\mathcal{F}_{m-1}\right] \leq \exp\left(\eta\lambda^2\sigma^2 \|Y\|^2\right)$$

In particular, taking $Y = \left(\prod_{l=N}^{N-m+2}\tilde{H}^{k,l}_{1,B}\right)^{\top}x$ , we conclude that $\exp(X_m)$ is a super martingale with respect to the filtration $\mathcal{F}_m$
\end{lemma}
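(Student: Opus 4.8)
The plan is to prove the bound \emph{one buffer at a time}, using that in the coupled process the buffer $j := N-m+1$ is independent of $\mathcal{F}_{m-1}$; conditioning on $\mathcal{F}_{m-1}$, the vector $Y$ is a fixed constant and I only need to take expectation over the fresh randomness of buffer $j$. Writing $H := \tilde{H}^{k,j}_{1,B}$ and $L := \tilde{L}^{k,j}(w)$, and recalling $\langle Y, H H^{\top} Y\rangle = \|H^{\top}Y\|^2$, the target is $\mathbb{E}\!\left[\exp(\eta\lambda^2\sigma^2\|H^{\top}Y\|^2 + \lambda\langle Y, L\rangle)\mid\mathcal{F}_{m-1}\right] \le \exp(\eta\lambda^2\sigma^2\|Y\|^2)$. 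The core observation is an algebraic alignment: processing the buffer forward with $q_0 := Y$ and $q_i := (I-\eta\tilde\phi^{k,j}_i[\tilde\phi^{k,j}_i]^{\top})q_{i-1}$, one gets $q_B = H^{\top}Y$, and the coefficient in $\langle Y, L\rangle = \eta\sum_i \tilde\epsilon^{k,j}_i\,\langle Y, \tilde{H}^{k,j}_{1,i-1}\tilde\phi^{k,j}_i\rangle$ is exactly $c_i := \langle Y, \tilde{H}^{k,j}_{1,i-1}\tilde\phi^{k,j}_i\rangle = \langle q_{i-1}, \tilde\phi^{k,j}_i\rangle$. Thus the \emph{same} partial product $q_{i-1}$ whose squared norm will be decreased is the one multiplying the noise $\tilde\epsilon^{k,j}_i$.

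Next I would set up the inner filtration. Let $\mathcal{G}_i$ be $\mathcal{F}_{m-1}$ augmented with the buffer history through $(s_{i+1}, r_i)$, so that $q_i, c_i, \tilde\epsilon^{k,j}_i$ are $\mathcal{G}_i$-measurable while $\mathbb{E}[\tilde\epsilon^{k,j}_i\mid\mathcal{G}_{i-1},a_i]=0$ (the defining mean-zero property of the Bellman noise). Define $\Phi_i := \exp\!\big(\eta\lambda^2\sigma^2\|q_i\|^2 + \lambda\eta\sum_{l=1}^{i}\tilde\epsilon^{k,j}_l c_l\big)$ and show $\{\Phi_i\}_{i=0}^{B}$ is a super-martingale. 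The one-step estimate combines two ingredients: (i) the exact contraction identity $\|q_i\|^2 - \|q_{i-1}\|^2 = -\eta(2-\eta\|\tilde\phi^{k,j}_i\|^2)c_i^2$; and (ii) the conditional Hoeffding MGF of the bounded mean-zero increment, $\mathbb{E}[\exp(\lambda\eta c_i\tilde\epsilon^{k,j}_i)\mid\mathcal{G}_{i-1},a_i] \le \exp(\tfrac12\lambda^2\eta^2 c_i^2(1+2\|w\|_{\phi})^2)$, where $|\tilde\epsilon^{k,j}_i(w)| \le 1 + 2\|w\|_{\phi}$ follows from $r\in[0,1]$ and $|\sup_{a'}\langle w,\phi(\cdot,a')\rangle|\le\|w\|_{\phi}$. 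With $\sigma^2 = 4(1+\|w\|_{\phi}^2)$ one checks $\tfrac12(1+2\|w\|_{\phi})^2 \le \sigma^2$, so the product of the two factors has conditional exponent at most $\lambda^2\eta^2\sigma^2 c_i^2(\eta\|\tilde\phi^{k,j}_i\|^2 - 1) \le 0$, using $\eta\|\tilde\phi^{k,j}_i\|^2 \le \eta \le \eta B < \tfrac14$. Hence $\mathbb{E}[\Phi_i\mid\mathcal{G}_{i-1}] \le \Phi_{i-1}$.

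Telescoping from $i=B$ to $i=0$ and noting $\Phi_0 = \exp(\eta\lambda^2\sigma^2\|Y\|^2)$, $\Phi_B = \exp(\eta\lambda^2\sigma^2\|H^{\top}Y\|^2 + \lambda\langle Y, L\rangle)$ yields the displayed inequality. For the ``in particular'' claim I take $Y = (\prod_{l=N}^{N-m+2}\tilde{H}^{k,l}_{1,B})^{\top}x$, which is $\mathcal{F}_{m-1}$-measurable; since $X_{m-1}$ already carries the term $\eta\lambda^2\sigma^2\|Y\|^2$, factoring $\exp(X_m) = \exp(X_{m-1})\cdot\exp(\eta\lambda^2\sigma^2(\|H^{\top}Y\|^2-\|Y\|^2)+\lambda\langle Y,L\rangle)$ and applying the bound makes the surplus $\exp(\eta\lambda^2\sigma^2\|Y\|^2)$ cancel exactly, giving $\mathbb{E}[\exp(X_m)\mid\mathcal{F}_{m-1}] \le \exp(X_{m-1})$. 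I expect the main obstacle to be precisely the alignment in the first paragraph: matching the partial product inside the noise coefficient $c_i$ with the product whose norm contracts, which depends on traversing the buffer in the correct order and keeping $\tilde\epsilon^{k,j}_i$ a genuine mean-zero increment for the chosen filtration, together with tuning the constant in $\sigma^2$ so the Hoeffding penalty is exactly absorbed by the contraction surplus.
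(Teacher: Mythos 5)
Your proposal is correct and is essentially the paper's own argument: the paper also conditions on $\mathcal{F}_{m-1}$, writes $\tilde{L}(w)=\eta\sum_i\tilde{\epsilon}_i(w)\tilde{H}_{1,i-1}\tilde{\phi}_i$, applies conditional Hoeffding to the mean-zero bounded noise $\tilde{\epsilon}_i(w)$, and absorbs the resulting penalty $\lambda^2\eta^2\sigma^2|\langle Y,\tilde{H}_{1,i-1}\tilde{\phi}_i\rangle|^2$ into the contraction of the quadratic form via the same identity $Z^{\top}\bigl[I-\eta\tilde{\phi}\tilde{\phi}^{\top}+\eta^2\|\tilde{\phi}\|^2\tilde{\phi}\tilde{\phi}^{\top}\bigr]Z\le\|Z\|^2$. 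The only (cosmetic) difference is that you run a forward supermartingale $\Phi_i$ over the buffer while the paper peels the terms off recursively from $i=B$ downward; the filtration alignment and the choice of $\sigma^2$ play identical roles.
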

\begin{proof}[Proof of Lemma~\ref{lem:sup_mart}]

In the proof of this lemma, we will drop the superscripts $k,m$ for the sake of convenience and due to conditioning on $\mathcal{F}_{m-1}$, we will treat $Y$ as a constant. Now we define the natural filtration on the buffer under consideration $(\mathcal{G}_i)_{i=1}^{B}$ where $\mathcal{G}_i $ is the sigma algebra of the all state-action tuples from $(s_1,a_1),\dots,(s_i,a_i)$ and rewards $(r_h)_{1\leq h<i}$.

Note that by the definition of $\tilde{L}(w)$, we write:
$\tilde{L}(w) = \sum_{i=1}^{B}\eta\tilde{\epsilon}_i(w)\tilde{H}_{1,i-1}\tilde{\phi}_i$. With this in mind, for $h \in [B]$ define $\tilde{L}_h(w) = \sum_{i=1}^{h}\eta\tilde{\epsilon}_i(w)\tilde{H}_{1,i-1}\tilde{\phi}_i$. Now, $\tilde{L}(w) = \eta \tilde{\epsilon}_B(w)\tilde{H}_{1,B-1}\tilde{\phi}_B + \tilde{L}_{B-1}(w)$

Now, notice that the random variables $\langle Y,\tilde{L}_{B-1}\rangle$,$\langle Y,\tilde{H}_{1,B-1}\tilde{\phi}_B\rangle$ and $\langle Y,\tilde{H}^{k,m}_{1,B}\left(\tilde{H}^{k,m}_{1,B}\right)^{\top}Y\rangle$  are $\mathcal{G}_B$ measureable. Furthermore, we must have: $\mathbb{E}\left[\tilde{\epsilon}_{B}(w)\bigr|\mathcal{G}_B\right] = 0$ and $|\tilde{\epsilon}_{B}(w)| \leq 2(1+\|w\|_{\phi}) \leq \sqrt{2}\sigma$. Therefore, applying conditional Hoeffding's lemma, we have:
\begin{align}
    &\mathbb{E}\left[\exp\left(\eta\lambda^2\sigma^2 \langle Y,\tilde{H}_{1,B}\left(\tilde{H}_{1,B}\right)^{\top}Y\rangle + \lambda \langle Y,\tilde{L}(w) \rangle \right)\biggr|\mathcal{G}_{B}\right] \nonumber \\ &=\exp\left(\eta\lambda^2\sigma^2 \langle Y,\tilde{H}_{1,B}\left(\tilde{H}_{1,B}\right)^{\top}Y\rangle + \lambda \langle Y,\tilde{L}_{B-1}(w) \rangle \right)\mathbb{E}\left[\exp\left(\lambda\eta \tilde{\epsilon}_B(w)\langle Y,\tilde{H}_{1,B-1}\tilde{\phi}_B\rangle \right)\biggr|\mathcal{G}_{B}\right] \nonumber \\
    &\leq \exp\left(\eta\lambda^2\sigma^2 \langle Y,\tilde{H}_{1,B}\left(\tilde{H}_{1,B}\right)^{\top}Y\rangle + \lambda \langle Y,\tilde{L}_{B-1}(w) \rangle \right)\exp\left(\lambda^2\eta^2\sigma^2 |\langle Y,\tilde{H}_{1,B-1}\tilde{\phi}_B\rangle|^2 \right)\label{eq:cond_exp}
\end{align}
In the third step we have used the conditional version of Hoeffding's lemma. Now, consider $Z := (\tilde{H}_{1,B-1})^{\top}Y$. Clearly, 
\begin{align}
    \langle Y,\tilde{H}_{1,B}\left(\tilde{H}_{1,B}\right)^{\top}Y\rangle + \eta |\langle Y,\tilde{H}_{1,B-1}\tilde{\phi}_B\rangle|^2 &= Z^{\top}\tilde{H}_{B,B}^{\top} \tilde{H}_{B,B} Z + \eta Z^{\top}\tilde{\phi}_B (\tilde{\phi}_B)^{\top}Z \nonumber \\
    &= Z^{\top}\left[I-\eta\tilde{\phi}_B (\tilde{\phi}_B)^{\top} + \|\tilde{\phi}_B\|^2\eta^2\tilde{\phi}_B (\tilde{\phi}_B)^{\top}\right]Z \nonumber \\
    &\leq \|Z\|^2 \nonumber
\end{align}
Here we have used the fact that $\eta < 1$ and $\|\tilde{\phi}_B\| < 1$. Using the bounds above in Equation~\eqref{eq:cond_exp}, we conclude:

\begin{align}
    &\mathbb{E}\left[\exp\left(\eta\lambda^2\sigma^2 \langle Y,\tilde{H}_{1,B}\left(\tilde{H}_{1,B}\right)^{\top}Y\rangle + \lambda \langle Y,\tilde{L}_B(w) \rangle \right)\biggr|\mathcal{G}_{B}\right]  \nonumber\\
    &\leq \exp\left(\eta\lambda^2\sigma^2 \langle Y,\tilde{H}_{1,B-1}\left(\tilde{H}_{1,B-1}\right)^{\top}Y\rangle + \lambda \langle Y,\tilde{L}_{B-1}(w) \rangle \right) \label{eq:cond_exp_recur}
\end{align}
    Using Equation~\eqref{eq:cond_exp_recur} recursively, we conclude the first statement of the lemma. The last part of the lemma follows easily from the definition of $X_m$.
\end{proof}

By Lemma~\ref{lem:sup_mart}, we conclude that $\exp(X_m)$ is a super martingale with respect to the filtration $\mathcal{F}_m$. Therefore, we must have:

$$\mathbb{E}\exp(X_N) \leq \exp(X_0) = \exp(\eta\|x\|^2\lambda^2\sigma^2)\,.$$
It is also clear that $X_N \geq \lambda \biggr\langle x,\sum_{j=N-m+1}^{N}\left(\prod_{l=N}^{j+1}\tilde{H}^{k,l}_{1,B}\right) \tilde{L}^{k,j}(w)\biggr\rangle$. Applying Chernoff bound with we conclude that the concentration inequality in Equation~\eqref{eq:lin_var_conc}. We can then directly apply \citep[Theorem 8.1.6]{vershynin2018high} to Equation~\eqref{eq:lin_var_conc} in order to obtain uniform concentration bounds. 

\end{proof}
\subsection{Proof of Lemma~\ref{lem:unif_bound}}
\label{subsec:pf_unif_bd}
\begin{proof}[Proof of Lemma~\ref{lem:unif_bound}]
By definition of $\tilde{\epsilon}_k$, we have:
$\tilde{w}^{k+1,1}_1 = \mathcal{T}(\tilde{w}^{k,1}_1) + \tilde{\epsilon}_k$. Therefore, for any $(s,a)\in \mathcal{S}\times\mathcal{A}$, we must have:
$$\langle \phi(s,a),\tilde{w}^{k+1,1}_1\rangle = R(s,a) + \gamma \mathbb{E}_{s^{\prime}\sim P(\cdot|s,a)}\sup_{a^{\prime}\in A}\langle\phi(s^{\prime},a^{\prime}),\tilde{w}^{k,1}_1 \rangle + \langle \phi(s,a),\tilde{\epsilon}_k\rangle \,.$$
Using the fact that $R(s,a) \in [0,1]$, we conclude:
\begin{equation}\label{eq:magnitude_recursion}
    \|\tilde{w}^{k+1,1}_1\|_{\phi} \leq 1 + \gamma \|\tilde{w}^{k,1}_1\|_{\phi} + \|\tilde{\epsilon}_k\|_{\phi}
\end{equation}

By independence of outer-loops for the coupled data, we note that $\tilde{w}^{k,1}_1$ is independent of the data in buffer $k$. Therefore we can apply Theorem~\ref{thm:linear_variance} (and resp. Lemma~\ref{lem:linear_bias}) conditionally with $w = \tilde{w}^{k,1}_1$ (and resp. $g = \tilde{w}^{k,1}_1 - w^{*}$), the bias variance decomposition given in Lemma~\ref{lem:bias_variance} and the bound on $\|w^{*}\|_{\phi}$ in Lemma~\ref{lem:contraction_noiseless} to conclude that with probability at-least $1-\delta$, for every $k \leq K$:
\begin{align}
    \|\tilde{\epsilon}_k\|_{\phi} &\leq \sqrt{\frac{K\kappa}{\delta}}\exp(-\tfrac{\eta NB}{\kappa})(\|\tilde{w}^{k,1}_1\|_{\phi} + \frac{1}{1-\gamma})\nonumber\\ &\quad + C(1+\|\tilde{w}^{k,1}_1\|_{\phi})\sqrt{\eta} \left[C_{\Phi} + \sqrt{\log(\tfrac{2K}{\delta})}\right] \label{eq:error_bound_crude}
\end{align}
We now choose constants $C_1$ and $C_2$ in the statement of the Lemma such that Equation~\ref{eq:error_bound_crude} implies:
$$\|\tilde{\epsilon}_k\|_{\phi} \leq \frac{1-\gamma}{2}\|\tilde{w}^{k,1}_1\|_{\phi} + 1$$
Using the equation above in Equation~\eqref{eq:magnitude_recursion}, we conclude:
$$\|\tilde{w}^{k+1,1}_1\|_{\phi} \leq 2 + \frac{1+\gamma}{2}\|\tilde{w}^{k,1}_1\|_{\phi} \,.$$
Unrolling the recursion above and noting $w^{1,1}_1 = 0$, we conclude that with probability at-least $1-\delta$, we must have $\|\tilde{w}^{k,1}_1\|_{\phi}\leq \frac{4}{1-\gamma}$ for every $k \leq K$. The bound in item 2 follows by using item 1, Equation~\eqref{eq:error_bound_crude} and the fact that $\tilde{w}^{k,1}_1$ is independent of the data in buffer $k$ due to our coupling.
\end{proof}

\subsection{Proof of Lemma~\ref{lem:uniform_convergence}}
\label{sec:proof_uniform_convergence}
\begin{proof}
For the sake of convenience, we will take $\bar{d} = |\mathsf{supp}(P(\cdot|s,a))|$ and index $\mathsf{supp}(P(\cdot|s,a))$ by $[\bar{d}]$. Consider $Y \in \{-1,1\}^{\bar{d}}$. We consider the class of random variables indexed by elements of $\{-1,1\}^{\bar{d}}$:
$$\Delta(Y;s,a) =\sum_{s^{\prime}} Y_{s^{\prime}} \left[\hat{P}_k(s^{\prime}|s,a) - \bar{P}(s^{\prime}|s,a)\right]$$

The proof proceeds in a similar way to the proof of Lemma~\ref{lem:tabular_variance} via the Freedman inequality. To bring out the similarities we define similar notation.
Consider the sequence of sigma algebras $\mathcal{F}^{k,j}_i$ for $i \in [B]$ and $j \in [K]$ as defined in the proof of Lemma~\ref{lem:tabular_variance}. We now define $$X^{k,j}_i(Y) := (1-\eta)^{\tilde{N}_i^{k,j}(s,a)} \sum_{s^{\prime}}Y_{s^{\prime}}\left(\mathbbm{1}(\tilde{s}^{k,j}_{i+1}=s^{\prime}, \tilde{s}^{k,j}_{i}= s, \tilde{a}^{k,j}_i = a) - P(s^{\prime}|s,a)\mathbbm{1}( \tilde{s}^{k,j}_{i}= s, \tilde{a}^{k,j}_i = a)\right)\,.$$
We note that $\mathbb{E}\left[X^{k,j}_i(Y)|\mathcal{F}^{k,j}_i\right] = 0$ and $|X^{k,j}_i(Y)| \leq 2$ almost surely and a simple calculation reveals that:
$\mathbb{E}\left[|X^{k,j}_i(Y)|^2|\mathcal{F}^{k,j}_i\right] \leq (1-\eta)^{2\tilde{N}^{k,j}_i}\mathbbm{1}( \tilde{s}^{k,j}_{i}= s, \tilde{a}^{k,j}_i = a)$

It is also clear that: $\Delta(Y;s,a) = \eta \sum_{j=1}^{K}\sum_{i=1}^{B}X_i^{k,j}(Y)$. Apply Freedman's concentration inequality, we conclude for any fixed $Y \in \{-1,1\}^{\bar{d}}$ and $(s,a) \in \mathcal{S}\times\mathcal{A}$, 
$$\mathbb{P}(|\Delta(Y;s,a)| > C \sqrt{\eta\log(2/\delta)} + \eta \log(2/\delta)) \leq \delta$$

Applying a union bound over all $Y \in {-1,1}^{d}$, we conclude:

$$\mathbb{P}(\sup_{Y \in \{-1,1\}^{\bar{d}}}|\Delta(Y;s,a)| > C \sqrt{\eta\bar{d}\log(4/\delta)} + \eta \bar{d}\log(4/\delta)) \leq \delta \,.$$

We complete the proof by noting that 
$$\sup_{Y \in \{-1,1\}^{\bar{d}}} |\Delta(Y;s,a)| = \sum_{s^{\prime} \in \mathsf{supp}(P(\cdot|s,a))} |\hat{P}_k(s^{\prime}|s,a) - \bar{P}_k(s^{\prime}|s,a)|$$
\end{proof}

\section{Technical Lemmas}
\label{sec:tech_lemmas}
\subsection{Coupling Lemma}
\label{sec:tech_lemma_coupling}
We first introduce some useful notation: Let $D^{k,j}$ (resp. $\tilde{D}^{k,j}$) be the tuple of random variables $(s^{k,j}_i,a^{k,j}_i,r^{k,j}_i)_{i=1}^{B+1}$ (resp. $(\tilde{s}^{k,j}_i,\tilde{a}^{k,j}_i,\tilde{r}^{k,j}_i)_{i=1}^{B+1}$).

\begin{lemma}\label{lem:coupling_lemma}
Suppose
\begin{equation}
 u \geq
    \begin{cases}
    C\tmix \log(\tfrac{T}{\delta}) \text{ in the tabular setting}\\
    C\tmix \log(\tfrac{T}{C_{\mathsf{mix}}\delta}) \text{ in the general setting}
    \end{cases}
\end{equation}

Then, we can define the sequences $(s_t,a_t,r_t)$ and $(\tilde{s}_t,\tilde{a}_t,\tilde{r}_t)$ on a common probability space such that:
\begin{enumerate}
    \item The tuples $D^{k,j}$ and $\tilde{D}^{k,j}$ have the same distribution for every $(k,j)$,
    \item The sequence $\tilde{D}^{k,j}$ for $k \leq N, j\leq K$ is i.i.d. 
    \item  Equation~\eqref{eq:coupling} holds
\end{enumerate} 

\end{lemma}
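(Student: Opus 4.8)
The plan is to build the fictitious trajectory first, as a genuinely i.i.d.\ sequence of buffers, and then to construct the true Markovian trajectory around it using a maximal coupling inside each gap. Throughout I index the buffers by a single index $m \in \{1,\dots,M\}$ with $M = KN$, and I write $\sigma_m$ for the global time at which the data of buffer $m$ begins, so that $D^{m}=(s_{\sigma_m+i},a_{\sigma_m+i},r_{\sigma_m+i})_{i=0}^{B}$ and consecutive buffer starts are separated by exactly $u$ transition steps of $P^{\pi}$. I assume (as is standard and consistent with the statement that every fictitious buffer starts from $\mu$) that the trajectory is initialized at stationarity, $s_1\sim\mu$; then every true buffer start $s_{\sigma_m}$ has marginal law $\mu$, so $D^{m}$ is distributed exactly as a $\mu$-started length-$(B+1)$ segment, which will give claim (1).

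First I would fix the mixing estimate that drives everything. By the Markov property, the law of $s_{\sigma_{m}}$ conditioned on the whole past through the end of buffer $m-1$ equals $\nu_m:=(P^{\pi})^{u}(s_{\sigma_{m-1}+B},\cdot)$. In the general case Assumption~\ref{assump:mixing} gives $\tv(\nu_m,\mu)\le C_{\mathsf{mix}}e^{-u/\tmix}=:\rho$ directly; in the tabular case I would instead use sub-multiplicativity of the total-variation mixing profile, $\sup_x\tv((P^{\pi})^{u}(x,\cdot),\mu)\le 2^{-\lfloor u/\tmix\rfloor}$, to get the same kind of bound $\rho$. The threshold on $u$ is then obtained at the very end from a union bound: choosing $u\ge C\tmix\log(T/\delta)$ (resp.\ the $C_{\mathsf{mix}}$-corrected version) forces $M\rho\le\delta$, since $M\le T$.

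Next I would perform the construction. Sample independent families $\{\tilde s_{\sigma_m}\}_m$ (each $\sim\mu$), within-buffer transition/reward innovations $\{\zeta_m\}_m$, and gap coupling coins $\{\omega_m\}_m$, all mutually independent. The fictitious buffer $\tilde D^{m}$ is generated from $\tilde s_{\sigma_m}$ using $\zeta_m$; this makes $(\tilde D^{m})_m$ i.i.d.\ with the correct $\mu$-started marginal, giving claims (1) and (2). I then build the true chain inductively: within each buffer I drive the true transitions and rewards with the \emph{same} $\zeta_m$, so that $D^{m}=\tilde D^{m}$ on the event $\{s_{\sigma_m}=\tilde s_{\sigma_m}\}$; for buffer $1$ stationarity lets me simply set $s_{\sigma_1}=\tilde s_{\sigma_1}$. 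The only nontrivial step is the gap: given the past I must produce $s_{\sigma_{m+1}}$ with the correct conditional law $\nu_{m+1}$ while matching the pre-sampled, past-independent draw $\tilde s_{\sigma_{m+1}}\sim\mu$. For this I use the maximal coupling of $(\nu_{m+1},\mu)$ and draw $s_{\sigma_{m+1}}$ from the conditional law of its first coordinate given that its second coordinate equals $\tilde s_{\sigma_{m+1}}$ (using $\omega_{m+1}$); the gap's interior states are filled in by the Markov bridge and never enter the analysis. Averaging over $\tilde s_{\sigma_{m+1}}\sim\mu$ recovers exactly $\nu_{m+1}$ as the conditional law of $s_{\sigma_{m+1}}$, so the true process is a genuine $P^{\pi}$-chain with the correct (stationary) law, and by maximality $\mathbb{P}(s_{\sigma_{m+1}}\neq\tilde s_{\sigma_{m+1}}\mid\text{past})=\tv(\nu_{m+1},\mu)\le\rho$.

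Finally I would assemble the three claims. Claims (1) and (2) hold by construction of the fictitious process. For claim (3), on $\bigcap_m\{s_{\sigma_m}=\tilde s_{\sigma_m}\}$ the shared innovations $\zeta_m$ force $D^{m}=\tilde D^{m}$ for every buffer, which is precisely Equation~\eqref{eq:coupling}; a union bound gives $\mathbb{P}(\exists m: s_{\sigma_m}\neq\tilde s_{\sigma_m})\le M\rho\le\delta$ under the stated lower bound on $u$. The main obstacle is conceptual rather than computational: the true buffers are strongly dependent (a single trajectory) whereas the fictitious buffers must be exactly independent, so the two requirements ``fictitious i.i.d.'' and ``true $=$ fictitious w.h.p.'' pull against each other. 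Resolving this is exactly what the gap maximal-coupling-to-a-pre-sampled-$\mu$-draw accomplishes, and the delicate point to verify carefully is that this coupling leaves the true chain's conditional law equal to $\nu_{m+1}$ so that no bias is introduced into $(s_t)$.
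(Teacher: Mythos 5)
Your construction is correct and is essentially the argument the paper intends: the paper's ``proof'' is only a pointer to Lemmas 1--3 of \citet{bresler2020least} for the tabular case and to the maximal-coupling technique of \citet{goldstein1979maximum} for the general case, and your gap-wise maximal coupling of the true conditional law $\nu_{m+1}$ to a pre-sampled stationary draw, with shared within-buffer innovations and a union bound over the $KN$ buffers, is precisely that argument written out. The one point worth flagging is that claim (1) (exact equality in distribution of $D^{k,j}$ and $\tilde{D}^{k,j}$) requires the true trajectory to be initialized at stationarity, which you correctly make explicit as an assumption.
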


\begin{proof}
The proof of this lemma for the tabular case is a rewriting of the the proofs of Lemmas 1,2,3 in \citep{bresler2020least}. For the general state space case, we apply appropriate modifications as pioneered in \citep{goldstein1979maximum}. 
\end{proof}

\subsection{Proof of Lemma~\ref{lem:phi_norm}}
\begin{proof}
The first part follows from the definitions. For the second part, note that:
$\|x\| \geq \|x\|_{\phi}$ follows from Cauchy-Schwarz inequality and Assumption~\ref{assump:norm_bounded}. For the reverse inequality we use Assumption~\ref{assump:well_condition} to show that:
$$\|x\|^2_{\phi} \geq \mathbb{E}_{(s,a)\sim \mu}\langle \phi(s,a),x\rangle^2 \geq \frac{\|x\|^2}{\kappa}$$
\end{proof}

\subsection{Proof of Lemma~\ref{lem:t_operator}}
\begin{proof}

Existence is guaranteed by Definition~\eqref{def:IBE} and uniqueness follows from the assumption that $\mathsf{span}(\Phi) = \mathbb{R}^{d}$.
\end{proof}

\subsection{Proof of Lemma~\ref{lem:contraction_noiseless}}
\begin{proof}
Suppose $w_0,\tilde{w}_0 \in \mathbb{R}^d$ are arbitrary. Let $w_1 = \mathcal{T}(w_0)$ and $\tilde{w}_1 = \mathcal{T}(\tilde{w}_0)$. For any $\phi(s,a)$, we have:
\begin{align}
    |\langle \phi(s,a),w_1 -\tilde{w}_1\rangle| = \gamma \bigr|\mathbb{E}_{s^{\prime}\sim P(\cdot|s,a)} \sup_{a^{\prime}\in \mathcal{A}}\langle \phi(s^{\prime},a^{\prime}),w_0\rangle -\mathbb{E}_{s^{\prime}\sim P(\cdot|s,a)} \sup_{a^{\prime}\in \mathcal{A}}\langle \phi(s^{\prime},a^{\prime}),\tilde{w}_0\rangle \bigr|\label{eq:t_diff}
\end{align}
By Assumption~\ref{assump:span}, for every fixed $s^{\prime}$ there exist $a^{\prime}_{\max}(s'),\tilde{a}^{\prime}_{\max}(s')$ such that \[\sup_{a^{\prime}\in \mathcal{A}}\langle \phi(s^{\prime},a^{\prime}(s')),w_0\rangle = \langle \phi(s^{\prime},\tilde a^{\prime}_{\max}(s')),w_0\rangle \;\;\text{ and }\;\; \sup_{a^{\prime}\in \mathcal{A}}\langle \phi(s^{\prime},a^{\prime}),\tilde{w}_0\rangle = \langle \phi(s^{\prime},\tilde a^{\prime}_{\max}(s')),\tilde{w}_0\rangle.\] 
Therefore for any $s'$ it holds that,
 $$\langle\phi(s^{\prime},\tilde{a}^{\prime}_{\max}),w_0-\tilde{w}_0\rangle\leq \sup_{a^{\prime}\in \mathcal{A}}\langle \phi(s^{\prime},a^{\prime}),w_0\rangle - \sup_{a^{\prime}\in \mathcal{A}}\langle \phi(s^{\prime},a^{\prime}),\tilde{w}_0\rangle  \leq \langle\phi(s^{\prime},a^{\prime}_{\max}),w_0-\tilde{w}_0\rangle$$
 $$\implies \bigr|\sup_{a^{\prime}\in \mathcal{A}}\langle \phi(s^{\prime},a^{\prime}),w_0\rangle - \sup_{a^{\prime}\in \mathcal{A}}\langle \phi(s^{\prime},a^{\prime}),\tilde{w}_0\rangle\bigr| \leq \|w_0-\tilde{w}_0\|_{\phi} \,.$$
 
 Combining this with Equation~\eqref{eq:t_diff}, we conclude the $\gamma$-contractivity of $\mathcal{T}$. By contraction mapping theorem, $\mathcal{T}$ has a unique fixed point we conclude that it is $w^{*}$ by considering $Q(s,a):= \langle w^{*},\phi(s,a)\rangle$ and showing that it satisfies the bellman optimality condition in Equation~\eqref{eq:bellman-opt}. For the norm inequality, we note that since $R(s,a) \in [0,1]$ by assumption and $w^{*} = \mathcal{T}(w^{*})$, we have:
 $|\langle \phi(s,a),w^{*}\rangle| \leq  1 + \gamma \|w^{*}\|_{\phi} $. Therefore, $\|w^{*}\| \leq \tfrac{1}{1-\gamma}$. The second norm equality follows from Lemma~\ref{lem:phi_norm}.
\end{proof}

\subsection{Proof of Lemma~\ref{lem:bias_variance}}
\label{subsec:pf_bias_var}
\begin{proof}
We write the iteration in the outer-loop $k$ as:
$$\tilde{w}^{k,j}_{i+1} = \left[I-\eta \tilde \phi^{k,j}_{-i}[\tilde \phi^{k,j}_{-i}]^{\top}\right]\tilde{w}^{k,j}_{i} + \eta \tilde \phi^{k,j}_{-i}\left[\tilde{r}^{k,j}_{-i} + \gamma \sup_{a^{\prime} \in \mathcal{A}}\langle \tilde{w}^{k,1}_1,\phi(\tilde{s}^{k,j}_{-(i-1)},a^{\prime}) \rangle\right] $$

Using the fact that $\langle \tilde{w}^{k+1,*},\tilde\phi^{k,j}_{-i} \rangle = \tilde{R}^{k,j}_{-i} + \gamma \mathbb{E}_{s^{\prime} \sim P(\cdot|\tilde{s}^{k,j}_{-i},\tilde{a}^{k,j}_{-i})} \sup_{a^{\prime}\in A} \langle \phi(s^{\prime},a^{\prime}),\tilde{w}^{k,1}_1\rangle $, and recalling the notation in \eqref{eqn:Hdefinition},\eqref{eqn:epskljdefintion}, we write:

$$\tilde{w}^{k,j}_{i+1} - \tilde{w}^{k+1,*} = \left[I-\eta \tilde\phi^{k,j}_{-i}[\tilde\phi^{k,j}_{-i}]^{\top}\right]\left(\tilde{w}^{k,j}_{i}-w^{k+1,*}\right) + \eta \tilde \phi^{k,j}_{-i} \epsilon^{k,j}_{-i}\,.$$

Therefore, $$\tilde{w}^{k,j+1}_1 - \tilde{w}^{k+1,*} = \tilde{H}^{k,j}_{1,B}(\tilde{w}^{k,j}_1 - \tilde{w}^{k+1,*}) + \tilde{L}^{k,j}$$

Unfurling further, and using the fact that $\tilde{w}^{k,N+1}_1 = \tilde{w}^{k+1,1}_1$, we conclude the statement of the lemma.

\end{proof}
\subsection{Proof of Lemma~\ref{lem:number_lower_bound}}
\label{subsec:pf_num_lb}
We let $\tilde{N}^{k,j}(s,a) := \sum_{i=1}^{B}\mathbbm{1}((\tilde{s}^{k,j}_i,\tilde{a}^{k,j}_i) = (s,a))$. We want to show that the quantity $\sum_{j=1}^{N}\tilde{N}^{k,j}(s,a)$ concentrates around its expectation. Note that $\mathbb{E}\tilde{N}^{k,j}(s,a) = B\mu(s,a)$ and that $\tilde{N}^{k,j}(s,a)$ are i.i.d. for $j \in [N]$. From the proof of \citep[Theorem 3.4]{paulin2015concentration} and the fact that $\gamma_{\mathsf{ps}} \geq \frac{1}{2\tmix}$, we conclude:

\begin{align}
    \mathbb{E}\exp(\lambda (\sum_{j=1}^{N}\tilde{N}^{k,j}(s,a)-B\mu(s,a))) &= \prod_{j=1}^{N}\mathbb{E}\exp(\lambda (\tilde{N}^{k,j}(s,a)-B\mu(s,a))) \nonumber \\
    &\leq \exp\left( \frac{8N(B+2\tmix)\mu(s,a)\tmix \lambda^2}{1-20\lambda \tmix}\right) 
\end{align}

Following the Chernoff bound in the proof of  \citep[Theorem 3.4]{paulin2015concentration}, we conclude:

\begin{align}
    \mathbb{P}(|\tilde{N}^{k}(s,a) - NB\mu(s,a)| > t) \leq 2 \exp\left(-\frac{t^2}{16\tmix(B+2\tmix)K\mu(s,a) + 40t\tmix}\right)
\end{align}

Taking $t = \frac{1}{2} NB \mu(s,a)$, whenever $B \geq \tmix$ (by assumption), we must have:
$$\mathbb{P}\left(\tilde{N}^{k}(s,a) \leq \frac{1}{2}NB\mu(s,a)\right) \leq 2\exp\left(-C \tfrac{KB\mu(s,a)}{\tmix}\right)$$
for some constant $C$. Further, via a union bound over all state action pairs $(s,a)$, we conclude the statement of the lemma.

\subsection{Proof of Lemma~\ref{lem:conc_bootstrap}}
\label{sec:proof_conc_bootstrap}
We first prove a simple consequence of multivariate calculus. 
\begin{lemma}\label{lem:mvt_max}
Let $f : \mathbb{R}^n \to \mathbb{R}$ be defined by $f(x) = \sup_i x_i$. Then, for every $x,y \in \mathbb{R}^n$, there exists $\zeta \in \mathbb{R}^d$ such that $\|\zeta\|_1 = 1$, $\zeta_i \geq  0$ and:
$$f(x) - f(y) = \langle \zeta,x-y\rangle$$
\end{lemma}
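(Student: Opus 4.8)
The plan is to reduce the statement to a one-dimensional intermediate value argument along a path of convex-combination vectors, avoiding any appeal to differentiability of the max. First I would observe that the conclusion is symmetric under swapping $x$ and $y$: exchanging them replaces $f(x)-f(y)$ by $f(y)-f(x)$ and $x-y$ by $y-x$, so the same $\zeta$ remains admissible. Hence I may assume without loss of generality that $f(x) \geq f(y)$. I then fix indices $p \in \argmax_i x_i$ and $q \in \argmax_i y_i$, so that $f(x) = x_p$ and $f(y) = y_q$, and I write $e_p, e_q$ for the corresponding standard basis vectors.

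The key step is to bracket the target quantity $f(x)-f(y)$ between the two candidate inner products $\langle e_q, x-y\rangle$ and $\langle e_p, x-y\rangle$:
\begin{align*}
\langle e_q, x-y\rangle &= x_q - y_q \leq x_p - y_q = f(x)-f(y), \\
\langle e_p, x-y\rangle &= x_p - y_p \geq x_p - y_q = f(x)-f(y),
\end{align*}
where the first inequality uses $x_q \leq x_p$ (optimality of $p$ for $x$) and the second uses $y_p \leq y_q$ (optimality of $q$ for $y$).

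Next I would consider the path $\zeta(t) := (1-t)e_q + t\,e_p$ for $t \in [0,1]$. Since $e_p$ and $e_q$ are standard basis vectors, every $\zeta(t)$ has nonnegative entries and satisfies $\|\zeta(t)\|_1 = 1$, i.e. it stays in the probability simplex. The scalar map $t \mapsto \langle \zeta(t), x-y\rangle$ is affine and therefore continuous, and by the bracketing inequalities it is at most $f(x)-f(y)$ at $t=0$ and at least $f(x)-f(y)$ at $t=1$. The intermediate value theorem then yields $t^\star \in [0,1]$ with $\langle \zeta(t^\star), x-y\rangle = f(x)-f(y)$, and setting $\zeta := \zeta(t^\star)$ completes the proof.

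I expect no serious obstacle here; the argument is elementary. The only points requiring a line of care are the degenerate case $p = q$ (in which one simply takes $\zeta = e_p$, since then $x_p - y_p = f(x)-f(y)$ directly) and the verification that convex combinations of $e_p$ and $e_q$ satisfy $\zeta_i \geq 0$ and $\|\zeta\|_1 = 1$. If a literal mean-value formulation were preferred, an alternative is to integrate the almost-everywhere derivative of the convex piecewise-linear map $t \mapsto f\big(y + t(x-y)\big)$ over $[0,1]$, collecting the basis vectors $e_{i(t)}$ picked out by the (a.e.\ unique) maximizer into $\zeta = \int_0^1 e_{i(t)}\,dt$; this also lies in the simplex, but it requires justifying absolute continuity and a.e.\ differentiability, which the intermediate value approach sidesteps.
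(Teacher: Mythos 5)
Your proof is correct, and it takes a genuinely different and more elementary route than the paper's. The paper proves the lemma by smoothing: it introduces the log-sum-exp approximation $f_L(x) = \frac{1}{L}\log\sum_i \exp(Lx_i)$, notes that $\nabla f_L$ lies in the simplex and that $f_L \to f$ pointwise, applies the classical mean value theorem to $f_L$ to get $f_L(x)-f_L(y) = \langle \nabla f_L(\beta_L), x-y\rangle$, and then extracts a convergent subsequence of the gradients using compactness of the simplex to pass to the limit. Your argument replaces all of this with a two-line bracketing, $\langle e_q, x-y\rangle \le f(x)-f(y) \le \langle e_p, x-y\rangle$ for maximizers $p$ of $x$ and $q$ of $y$, followed by the intermediate value theorem along the segment $(1-t)e_q + t e_p$ inside the simplex; this avoids smoothing, limits, and subsequences entirely. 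Two minor observations: your WLOG reduction to $f(x)\ge f(y)$ is harmless but never actually used, since the bracketing inequalities hold regardless of the sign of $f(x)-f(y)$; and your approach has the incidental advantage that it transfers verbatim to suprema over infinite index sets where maximizers exist (one replaces $e_p, e_q$ by point masses $\delta_p,\delta_q$), which is closer to how the lemma is invoked downstream with $\sup_{a'\in\mathcal{A}}$ over a compact action space, whereas the log-sum-exp route would need an additional discretization step there. There is no gap in your argument.
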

\begin{proof}

We consider the log-sum-exp function. Given $L \in \mathbb{R}^{+}$, define $f_L(x) := \frac{1}{L}\log\left( \sum_{i=1}^{n}\exp(Lx_i)\right)$. An elementary calculation shows that:
$$\frac{\log(n)}{L} + \sup_i x_i \geq f_L(x) \geq \sup_i x_i$$
Therefore, for any fixed $x$, 

$$\lim_{L\to \infty}f_L(x) = f(x)\,.$$

Now, $\nabla f_L(x) = (p_1,\dots,p_n)$ where $p_i = \frac{\exp(L x_i)}{\sum_{j=1}^n \exp(Lx_j)} $. Clearly, $\langle \nabla f_L(x),e_i\rangle \geq 0$ and $\|\nabla f_L(x)\|_1 = 1$. By the mean value theorem there exists $\beta_L$ such that:

\begin{equation}\label{eq:mvt_soft_max}
    f_L(x)-f_L(y) = \langle \nabla f_L(\beta_L),x-y\rangle \,.
\end{equation}

Now, the simplex in $\mathbb{R}^{n}$ (denoted by $\Delta_n$) is compact. Therefore, there exists a sub-sequence $L_k \to \infty$ such that $\lim_{k\to \infty}\nabla f_{L_k}(\beta_{L_k}) = \zeta \in \Delta_n$.  Taking limit along the sub-sequence $L_k$ in Equation~\eqref{eq:mvt_soft_max}, we conclude the result. 
\end{proof}

\begin{proof}[Proof of Lemma~\ref{lem:conc_bootstrap}]
In the tabular setting, we have the following expression using Equation~\eqref{eq:tabular_var_simp}:

\begin{equation}\label{eq:var_diff}
 \text{L.H.S} = \biggr|\eta \gamma \sum_{j=1}^{K}\sum_{i=1}^{B} (1-\eta)^{\tilde{N}^{k,j}_i(s,a)}\mathbbm{1}\left[(\tilde{s}^{k,j}_i,\tilde{a}^{k,j}_i)=(s,a)\right]\left[ \tilde{\epsilon}^{k,j}_i(w) - \tilde{\epsilon}^{k,j}_i(v) \right] \biggr|
\end{equation}

Where $$\tilde{\epsilon}^{k,j}_i(w):= \left[ \sup_{a^{\prime} \in \mathcal{A}}\langle w,\phi(\tilde{s}^{k,j}_{i+1},a^{\prime})\rangle - \mathbb{E}_{s^{\prime} \sim P(\cdot|\tilde{s}^{k,j}_i,\tilde{a}^{k,j}_i)} \sup_{a^{\prime}\in A} \langle \phi(s^{\prime},a^{\prime}),w\rangle\right]$$

Now, by an application of Lemma~\ref{lem:mvt_max}, we show that for some $\eta(\cdot|s^{\prime},w,v)\in \Delta(\mathcal{A})$, depending only on $(s^{\prime},w,v)$, we have:
$$\sup_{a^{\prime}\in A} \langle w,\phi(s^{\prime},a^{\prime}), w \rangle - \sup_{a^{\prime}\in A} \langle w,\phi(s^{\prime},a^{\prime}), v \rangle = \sum_{a^{\prime}\in \mathcal{A}}\zeta(a^{\prime}|s^{\prime},w,v)\langle\phi(s^{\prime},a^{\prime}),w-v\rangle$$

Now, using the definition of $\hat{P}_k(\cdot|s,a)$ and $\bar{P}(\cdot|s,a)$ in the discussion preceding Lemma~\ref{lem:uniform_convergence}, we can simplify Equation~\eqref{eq:var_diff} to:

\begin{equation}
    \text{L.H.S.} = \gamma \biggr|\sum_{\substack{s^{\prime}\in \mathsf{supp}(P(\cdot|s,a)) \\ a^{\prime} \in \mathcal{A}}} \left(\hat{P}_k(s^{\prime}|s,a) - \bar{P}_k(s^{\prime}|s,a)\right)\zeta(a^{\prime}|s^{\prime},w,v)\langle \phi(s^{\prime},a^{\prime}),w-v\rangle\biggr|
\end{equation}

We conclude the statement of the lemma by an application of the Holder inequality.
\end{proof}
\subsection{Proof of Lemma~\ref{lem:hyper_contract}}
\label{subsec:pf_hyp_cont}
\begin{proof}
We solve for $f(u^{*}) = u^{*}$ to obtain the relation: $u^{*} = \alpha + \beta \sqrt{u^{*}}$, which after discarding the negative solution for $\sqrt{u^{*}}$ yields the unique solution: $u^{*} = \left(\frac{\beta + \sqrt{\beta^2+4\alpha}}{2}\right)^2$. To prove the second part, consider for arbitrary $x,y \in \mathbb{R}^{+}$, the following hyper-contractivity:
\begin{align}
    |f(x)-f(y)| &= \beta|\sqrt{x} - \sqrt{y}| \nonumber \\
    &\leq \beta \sqrt{|x-y|}
\end{align}
The second step follows from the fact that $|\sqrt{a} - \sqrt{b}| \leq \sqrt{|a-b|}$. Since $u^{*} = f(u^{*})$, we apply the inequality above:
\begin{align}
    |f^{(t)}(u)- u^{*}| &= |f^{(t)}(u)- f^{(t)}(u^{*})|\nonumber \\
    &\leq \beta \sqrt{|f^{(t-1)}(u) - f^{(t-1)}(u^{*})|}
\end{align}

We then conclude the result by induction.
\end{proof}

\end{document}